%% file: main.tex
\documentclass{article}

\usepackage{custompkg}
\usepackage{customath}

\usepackage[preprint]{icml2026}
\makeatletter
\renewcommand{\ICML@preprint}{%
  \textit{Preprint. October 8, 2026.}%
}
\makeatother

\icmltitlerunning{Hierarchical Time Series Forecasting with Operator-Valued Kernels}

\author{%
  \textbf{Junjie Yang}\\
  LPSM, Sorbonne Université \& Califrais\\
  \href{mailto:junjie.yang@sorbonne-universite.fr}{\texttt{junjie.yang@sorbonne-universite.fr}}
  \And
  \textbf{Adeline Fermanian}\\
  Califrais\\
  \href{mailto:adeline.fermanian@califrais.fr}{\texttt{adeline.fermanian@califrais.fr}}
  \AND
  \textbf{Claire Boyer}\\
  LMO, Université Paris-Saclay\\
  \href{mailto:claire.boyer@universite-paris-saclay.fr}{\texttt{claire.boyer@universite-paris-saclay.fr}}
  \And
  \textbf{Gérard Biau}\\
  LPSM, Sorbonne Université\\
  \href{mailto:gerard.biau@sorbonne-universite.fr}{\texttt{gerard.biau@sorbonne-universite.fr}}
}

\makeatletter
\newcommand{\printauthorblock}{%
  \begingroup
  \def\And{%
    \end{tabular}\hfil\linebreak[0]\hfil
    \begin{tabular}[t]{c}\ignorespaces
  }%
  \def\AND{%
    \end{tabular}\hfil\\[1.25em]
    \begin{tabular}[t]{c}\ignorespaces
  }%
  \begin{center}
    \begin{tabular}[t]{c}\@author\end{tabular}
  \end{center}
  \endgroup
}
\makeatother

\icmlcorrespondingauthor{Junjie Yang}{junjie.yang@sorbonne-universite.fr}

\begin{document}

\twocolumn[
  \icmltitle{Hierarchical Time Series Forecasting with Operator-Valued Kernels}
  
  \printauthorblock

  \vskip 0.3in
]


\printAffiliationsAndNotice{}  

\input{0-Abstract.tex}

\input{1-Introduction.tex}
\input{2-Preliminaire.tex}
\input{4-Predictor.tex}
\input{5-Theory.tex}
\input{6-Experiments.tex}
\input{7-Conclusion.tex}

\bibliography{main}
\bibliographystyle{apalike}

\newpage
\appendix
\onecolumn

\input{9-Appendix.tex}


\end{document}

%% file: 0-Abstract.tex
\begin{abstract}
  Hierarchical time series forecasting requires generating accurate predictions that are coherent across multiple levels of aggregation. Existing methods either rely on two-stage pipelines with post-hoc reconciliation or on end-to-end models that often lack statistical guarantees. We introduce a unified framework that formulates hierarchical forecasting as a multivariate regression problem in a vector-valued reproducing kernel Hilbert space. By designing structured operator-valued kernels, our approach enforces coherence directly within the hypothesis space, ensuring that all predictions satisfy hierarchical constraints by construction. We further propose tailored kernel constructions for hierarchical data and derive excess risk bounds that characterize generalization performance. Empirical results on synthetic and real-world datasets demonstrate the relevance of the proposed method.
  
\end{abstract}

%% file: 1-Introduction.tex
\section{Introduction}
A \textit{hierarchical time series} (HTS) is a collection of univariate time series organized across multiple levels of aggregation, where higher-level series are obtained by summing lower-level series according to a predefined structure \citep{hyndman_forecasting_2018}. A typical example arises in retail: daily demand can be recorded at the individual product level, aggregated into product categories, and further combined to form a total demand series. The problem of \textit{hierarchical time series forecasting}, first introduced by \citet{athanasopoulos_hierarchical_2009} in the context of Australian domestic tourism, has recently attracted significant attention due to its relevance in a wide range of real-world applications. These include retail sales forecasting \citep{mancuso_machine_2021, makridakis_m5_2022a}, energy consumption forecasting \citep{nespoli_hierarchical_2020}, and road traffic forecasting \citep{ben_taieb_regularized_2019}, among others.

A key challenge in hierarchical time series forecasting is that one must not only produce accurate forecasts for each individual series, but also ensure that these forecasts are \textit{coherent}, i.e., they satisfy the aggregation constraints imposed by the hierarchy. Early approaches to this problem predominantly adopt a two-stage framework \citep[e.g.,][]{athanasopoulos_hierarchical_2009, hyndman_optimal_2011,ben_taieb_regularized_2019, wickramasuriya_optimal_2019, zhou_sloth_2023}. In the first stage, base forecasts are generated independently for each series using standard univariate time series models, such as ETS \citep{brown_statistical_1959} or ARIMA \citep{box_time_2015}. In the second stage, these forecasts are adjusted to enforce coherence through a procedure known as reconciliation. More recently, several works \citep[e.g.,][]{rangapuram_endtoend_2021, tsiourvas_learning_2024, cini_graph-based_2024, doumeche_forecasting_2025} have proposed to integrate base forecasting and reconciliation within a single end-to-end framework. In these approaches, model parameters are jointly learned across all series, enabling the direct generation of coherent forecasts without requiring a separate post-processing step. However, most of these methods rely on deep neural network architectures, e.g., DeepVAR \citep{salinas_high-dimensional_2019} or Autoformer \citep{wu_autoformer_2021}, which can be difficult to train effectively in time series settings where data availability is limited. Furthermore, generalization guarantees remain limited for these end-to-end methods.

In this work, we recast the HTS forecasting problem as a (potentially non-linear) multivariate regression problem and introduce a unified kernel framework in which predictors are learned in a vector-valued reproducing kernel Hilbert space \citep[vv-RKHS,][]{pedrick_theory_1957, schwartz_sousespaces_1964}. This approach is motivated by several key advantages. First, by constructing operator-valued kernels \citep[OVKs, ][]{carmeli_vector_2006} whose associated vv-RKHSs consist only of functions taking values in the coherent subspace, we enforce coherence by construction. Second, the regularized least-squares problem has a unique global minimizer \citep[see, e.g.,][]{micchelli_learning_2005}. Finally, the well-established statistical learning theory for RKHSs \citep[see, e.g.,][]{smale_learning_2007, caponnetto_optimal_2007} allows us to derive generalization guarantees for our proposed predictor.

Our paper is organized as follows. In \Cref{sec:preliminaries}, we introduce the necessary background on hierarchical time series forecasting and operator-valued kernels. 
In \Cref{sec:predictor}, we introduce our predictor RVVKRR, based on vector-valued kernel ridge regression \citep{micchelli_learning_2005}, and provide efficient computational methods for both separable and finite-feature non-separable OVKs. In particular, we characterize the OVKs whose associated vv-RKHS, used as the hypothesis space, contains only functions that produce coherent forecasts.
In \Cref{sec:stat_analysis}, using integral operator theory in vv-RKHSs, we derive an excess risk bound for the proposed predictor under an i.i.d. regression model, explicitly highlighting the roles of both the hierarchical structure and the OVKs.
In \Cref{sec:exper}, we analyze the behavior of the proposed predictor on a synthetic dataset, and demonstrate its performance on two real-world datasets: TourismLarge and Traffic.

%% file: 2-Preliminaire.tex
\section{Preliminaries}
\label{sec:preliminaries}
\subsection{Hierarchical Time Series Forecasting}
\label{bg:hts}
A hierarchical time series is a multivariate time series $(y_t)_{t=1}^T \in (\reals^d)^T$ that satisfies linear aggregation constraints, which can be typically encoded into a tree hierarchy \citep{hyndman_optimal_2011}. Here, $d \in \intsplus$ represents the number of univariate time series. 
It can be divided into a \textit{bottom-level series} $(y_t^b)_{t=1}^T$ with $y_t^{b} \in \reals^{d_b}$ (i.e., the time series at the leaf nodes of the hierarchy), and an \textit{aggregated series} $(y_t^a)_{t=1}^T$ with $y_t^{a} \in \reals^{d_a}$  (i.e., those of the remaining nodes). Here, we have clearly $d = d_a + d_b$. Formally, at each timestamp $t$, the vector representing the whole time series $y_t$ can be split into two such that $y_t = \begin{bmatrix} (y_t^a)^\trans & (y_t^b)^{\trans} \end{bmatrix}^{\trans}$ and the whole time series can be obtained by the following transformation of the bottom-level series $y_t^b$:
\begin{equation}
  y_t = S y_t^b  \quad \text{with}\; S = \begin{bmatrix} \mat S_{\mathrm{sum}} \\ \idmat{d_b} \end{bmatrix} \in \set{0, 1}^{d \times d_b},
\label{eq:aggre_matrix}
\end{equation}
where $\mat S$ is called the aggregation matrix and $ \mat S_{\mathrm{sum}} \in \set{0, 1}^{d_a \times d_b}$ is called the summation matrix. It should be noted that the summation matrix describes exactly the hierarchical structure of the time series. A time series that satisfies those constraints is said to be coherent, i.e., $y_t \in \im(\mat  S)$ for all $t \in \intset{T}$.
An example of such a hierarchical time series is provided in \Cref{app:hts_example}.

\paragraph{Hierarchical Forecasting Methods.} In the literature on hierarchical forecasting, to generate the coherent forecast $\hat{y}_t \in \im(\mat S)$, a two-stage approach is typically followed. In the first stage, the multivariate and hierarchical structure of the data is completely ignored. A separate predictor is trained for each of the $d$ univariate time series, with all predictors learned independently. Any classical time series forecasting method can be used at this stage, such as exponential smoothing \citep[ETS,][]{brown_statistical_1959}, 
or kernel ridge regression \citep[KRR,][]{saunders_ridge_1998}, to name a few. The resulting forecast of the multivariate time series at time step $t$, referred to as the \textit{base forecast}, is denoted by $\tilde{y}_t \in \reals^d$, which is not necessarily coherent. The second stage, called \textit{reconciliation}, transforms the base forecast $\tilde{y}_{t} \in \reals^d$ into a coherent forecast $\hat{y}_{t} \in \im(\mat  S) \subset \reals^d$. Most reconciliation methods in the literature adopt a linear transformation $\hat{y}_{t} = \mat S \mat G \tilde{y}_{t}$ where $\mat G \in \reals^{d_b \times d}$ is a \textit{bottom-extractor} matrix or a \textit{reconciliation} matrix that forms only the bottom-level time series from the full set of forecasts across all hierarchy levels. Different choices of $\mat G$ correspond to different reconciliation strategies, detailed in \Cref{app:reconciliation}.

\textit{End-to-end} approaches \citep[e.g.,][]{rangapuram_endtoend_2021,tsiourvas_learning_2024, doumeche_forecasting_2025} integrate reconciliation  directly into the learning procedure. Typically, one chooses a hypothesis space $\mathcal{H} \subset \funcset{\mathcal X}{(\reals^d)}$ where $\mathcal X$ denotes the feature space, and parameterizes the predictor as $f(\cdot) = \mat S \mat G h(\cdot)$ with $h \in \mathcal{H}$. Learning is performed by 
solving an empirical risk minimization problem.
For example, \citet{rangapuram_endtoend_2021, tsiourvas_learning_2024} parameterize $\mathcal{H}$ by neural network architectures, such as DeepVAR \citep{salinas_high-dimensional_2019} or Autoformer \citep{wu_autoformer_2021}.

\subsection{Operator-Valued Kernels}
\label{sec:ovk}
Let $\mathcal{X}$ be a nonempty set, $\mathcal{Y}$ be a real separable Hilbert space with inner space $\inner{\cdot}{\cdot}{\mathcal{Y}}$, and $\lineop{\mathcal{Y}}$ be the space of bounded linear operators from $\mathcal{Y}$ to $\mathcal{Y}$. Operator-valued kernels (OVKs), defined as follows \citep[see, e.g.,][]{carmeli_vector_2006}, can be seen as the generalization of scalar-valued kernels.
\begin{definition}[Operator-Valued Kernels]
  A function $\ovk: \mathcal X \times \mathcal X \to \lineop{\mathcal{Y}}$ is said to be an operator-valued kernel if it is self-adjoint: for all $(x_i, x_j) \in \mathcal X^2$, $\ovk(x_i, x_j) = \ovk(x_j, x_i)^{\adj}$ where the superscript $^{\adj}$ denotes the adjoint of a bounded linear operator;
  and positive: for all $N \in \intsplus$ and $y \in \mathcal{Y}$,
  \begin{equation*}
      \sum_{i=1}^N\sum_{j=1}^N a_i a_j\inner{\ovk(x_i, x_j)y}{y}{\mathcal{Y}} \geq 0
  \end{equation*}
    for all $(x_1, \dots, x_N) \in \mathcal X^N$ and $(a_1, \dots, a_N) \in \mathcal \reals^N$.
\end{definition}
The relevance of operator-valued kernels arises from their connection to function spaces known as vector-valued reproducing kernel Hilbert spaces \citep[vv-RKHS, ][]{pedrick_theory_1957, schwartz_sousespaces_1964}, defined as follows.
\begin{definition}[Vector-valued RKHSs]
  A Hilbert space of functions $\mathcal H \subset \funcset{\mathcal X}{\mathcal Y}$ is called a $\mathcal Y$-valued reproducing kernel Hilbert space on $\mathcal X$ if for all $x \in \mathcal X$ there exists a positive constant $M_x$ such that $\norm{f(x)}{\mathcal Y} \leq M_x \norm{f}{\mathcal H}$ for all $f \in \mathcal{H}$.
\end{definition}
Given an operator-valued kernel $\mathcal{K}$, one can show \citep[see, e.g.,][Proposition 2.1]{carmeli_vector_2006} that there is a unique $\mathcal{Y}$-valued reproducing kernel Hilbert space (RKHS) $\vvrkhs \subset \funcset{\mathcal{X}}{\mathcal{Y}}$ such that for all $x \in \mathcal{X}$ and for all $f \in \vvrkhs$, 
\begin{equation}
  f(x) = \ovk_{x}^{\adj}f \quad \text{(\textit{reproducing property})},
\end{equation}
where the linear operator $\ovk_{x}: \mathcal{Y} \to \vvrkhs$ is defined by 
$\ovk_{x}y (x') = \ovk(x', x) y$,
for all $x' \in \mathcal{X}$ and for all $y \in \mathcal{Y}$.
The converse is true by using the Riesz representation theorem \citep{carmeli_vector_2006}.

A special class of OVKs, called the separable OVKs \citep[see e.g.,][Example 1]{caponnetto_universal_2008}, are often employed in real-world applications:
\begin{equation}
  \ovk(x, x') = \kernel(x, x') \op A,
\end{equation}
where $\kernel: \mathcal X \times \mathcal X \to \reals$ is a scalar-valued kernel defined on the input space $\mathcal X$ and the operator $\op A \in \lineop{\mathcal{Y}}$, referred to in some literature as the task kernel, is self-adjoint and positive, i.e., $\op A = \op A ^{\adj}$ and $\inner{\op A y}{y}{\mathcal{Y}} \geq 0$ for all $y \in \mathcal{Y}$. More details on the properties of separable OVKs are provided in \Cref{app:ovk_sep}.

OVKs and vv-RKHSs have been employed to address a variety of supervised learning problems in which the output space can be modeled as a real separable Hilbert space. See \Cref{app:ovk_sup} for a detailed discussion. In this paper, the predictor we introduce for hierarchical time series forecasting is a function of a $\im(S)$-valued RKHS.

%% file: 4-Predictor.tex
\section{Reconciled Forecasting with VVKRR}
\label{sec:predictor}
In this section, we first propose to construct a reconciled hierarchical time series predictor, $\hat{f}: \mathcal{X} \to \im(\mat S)$, using vector-valued kernel ridge regression (VVKRR). This approach, originally introduced by \citet{micchelli_learning_2005}, leverages OVKs to address regression problems with multidimensional outputs. We then present several strategies for designing appropriate OVKs tailored to the hierarchical time series setting.

\subsection{Reconciled Vector-Valued Kernel Ridge Regression}
\label{sec:vvkrr}
Assume that we have observed a hierarchical time series along with its features, denoted as $\left( (x_1, y_1), \dots, (x_T, y_T)\right) \in \left(\mathcal X \times \im(\mat S)\right)^T$. For instance, the feature vector $x_t \in \reals^{d_b p+q}$ with $(p, q) \in (\intsplus)^2$ could include $p$ lagged observations of each of the $d_b$ bottom-level series, along with $q$ additional covariates capturing temporal effects such as seasonality or calendar-based features. $\im(\mat S)$ is a finite-dimensional subspace of $\reals^d$, thus a real Hilbert space with the inner product inherited from $\reals^d$. Moreover, one can show that $\dim( \im(\mat S)) = d_b$ where $d_b$ is the number of bottom-level series (see \Cref{lemma:dim_im_S}).

As a result, it is natural to design an OVK of the type $\ovk: \mathcal X \times \mathcal X \to \lineop{\im(\mat S)}$ and consider the vv-RKHS $\vvrkhs \subset \funcset{\mathcal X}{\im(\mat S)}$ associated to it as the hypothesis space of our predictor. We choose to find such a function $\hat{f}$ by solving the following optimization problem:
\begin{equation} \label{eq:vvkrr_optim}
  \hat{f} = \argmin_{f\in \vvrkhs} \frac{1}{T} \sum_{t=1}^T \norm{f(x_t) - y_t}{\reals^d}^2 + \lambda \norm{f}{\vvrkhs}^2,
\end{equation}
where $\lambda > 0$ is a regularization parameter. In the rest of the paper, we refer to such a predictor as the \textit{reconciled vector-valued kernel ridge regression} (RVVKRR).

By the representer theorem  \citep[Theorem 4.1]{micchelli_learning_2005}, we know that $\hat{f}$ takes the form $\hat{f}= \sum_{t=1}^T \ovk_{x_t} c_t $
with $(c_t)_{t=1}^T \in (\im(\mat S))^T \subset (\reals^{d})^T$ satisfying the following linear system:
\begin{equation}
  T \lambda c_t + \sum_{j=1}^T \ovk(x_t, x_j) c_j = y_t.
\label{eq:ls_vvkrr}
\end{equation}
For separable OVKs, \Cref{eq:ls_vvkrr} can be reduced to a discrete-time Sylvester equation that can be solved by the Hessenberg-Schur method \citep{golub_hessenberg-schur_1979, sima_algorithms_1996}. When a non-separable OVK admitting an explicit finite-dimensional feature map is employed, we solve the equivalent primal problem using matrix-free preconditioned conjugate gradient method \citep{hestenes_methods_1952,cutajar_preconditioning_2016}. More details on the computational methods are provided in \Cref{app:compu_vvkrr}.
It can be shown that $\hat{f}$ can also be written as $\hat{f} = (\hat{\Sigma}_{\ovk} + \lambda \idop)^{-1}\frac{1}{T}\sum_{t=1}^{T}\ovk_{x_t}y_t$
with $\hat{\Sigma}_{\ovk} \coloneqq \frac{1}{T}\sum_{t=1}^T\ovk_{x_t}\ovk_{x_t}^{\adj}$ \citep[see e.g.,][Proposition 1]{caponnetto_optimal_2007}. While the first expression of the predictor is more suitable from a computational point of view, the latter is more commonly used in the statistical analysis of the predictor, as will be discussed further in \Cref{sec:stat_analysis}.

\subsection{Coherent Operator-Valued Kernels for Hierarchical Time Series.}
\label{sec:ovk_hts}
In this section, we introduce two complementary constructions of operator-valued kernels tailored to hierarchical time series. First, we characterize separable coherent OVKs and show how to derive their task kernels from similarity matrices that need not be positive semi-definite. Second, we start from an arbitrary, possibly non-separable, $\lineop{\reals^{d}}$-valued kernel and map it into the coherent output space through a reconciliation operator. 

\subsubsection{Intrinsic Construction of Separable Coherent OVKs}
As we have shown in \Cref{sec:ovk}, the task kernel $\mat A$ of a separable OVK $\ovk(x, x') = \kernel(x, x') \mat A$ must be a self-adjoint and positive operator in $\lineop{\im(\mat S)}$. In this section, we propose a method to construct such a task kernel $\mat A$ from a symmetric similarity matrix $\mat M \in \mathbb{S}^{d}$ that encodes the pairwise similarities between the time series. First, we will characterize all coherent task kernels $\mat A$ by the following lemma, whose proof can be found in \Cref{sec:proof_lemma_ovk_image_s}.

\begin{lemma} \label{lemma:ovk_image_s}
  Let $(u_1, \dots, u_{d_b})$ be an orthonormal basis for $\im(\mat S)$ with $u_{i} \in \reals^d$ for all $i \in \intset{d_b}$ (for example, obtained by the singular value decomposition of the matrix $\mat S$). Let $\mat{U}_{\mat S} = \begin{bmatrix}
    u_1 & u_2 & \cdots  & u_{d_b}
\end{bmatrix}\in \reals^{d\times d_b}$. Then any self-adjoint and positive operator $\mat A \in \lineop{\im(\mat S)} $ admits a matrix representation
$
  \mat{U}_{\mat S} \mat A' \mat{U}_{\mat S}^{\trans}
$
where $\mat A' \in \reals^{d_b \times d_b}$ is a symmetric and positive semi-definite matrix.
\end{lemma}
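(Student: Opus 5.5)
The plan is to work entirely in coordinates given by the orthonormal basis $(u_1,\dots,u_{d_b})$ of $\im(\mat S)$. By \Cref{lemma:dim_im_S}, $\dim(\im(\mat S)) = d_b$, so such a basis exists (for instance, from the left singular vectors of $\mat S$ associated with its nonzero singular values, since $\mat S$ has full column rank because of the identity block). Orthonormality gives $\mat U_{\mat S}^\trans \mat U_{\mat S} = \idmat{d_b}$, and $\mat U_{\mat S}\mat U_{\mat S}^\trans$ is the orthogonal projector $P$ of $\reals^d$ onto $\im(\mat S)$. The phrase ``matrix representation'' is read as follows: the action of $\mat A$ on $\im(\mat S)$ coincides with multiplication by the $d\times d$ matrix $\mat U_{\mat S}\mat A'\mat U_{\mat S}^\trans$. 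This $d\times d$ matrix is the natural extension of $\mat A$ to $\reals^d$ that vanishes on $\im(\mat S)^\perp$.

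Given a self-adjoint positive $\mat A \in \lineop{\im(\mat S)}$, define $\mat A' \in \reals^{d_b\times d_b}$ entrywise by
\begin{equation*}
\mat A'_{ij} = \inner{\mat A u_j}{u_i}{\reals^d}.
\end{equation*}
The argument then has three steps.

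\textbf{Step 1: symmetry.} Self-adjointness of $\mat A$ with respect to the inner product inherited from $\reals^d$ gives $\mat A'_{ij} = \mat A'_{ji}$.

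\textbf{Step 2: positive semi-definiteness.} For $a \in \reals^{d_b}$, set $y = \mat U_{\mat S} a = \sum_i a_i u_i \in \im(\mat S)$. Bilinearity gives
\begin{equation*}
a^\trans \mat A' a = \inner{\mat A y}{y}{\reals^d} \geq 0.
\end{equation*}

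\textbf{Step 3: the representation.} Any $y \in \im(\mat S)$ can be written $y = \mat U_{\mat S} a$ with $a = \mat U_{\mat S}^\trans y$. Expanding $\mat A y \in \im(\mat S)$ in the basis gives
\begin{equation*}
\mat A y = \sum_i \inner{\mat A y}{u_i}{\reals^d}\, u_i = \sum_{i,j} \mat A'_{ij}\, a_j\, u_i = \mat U_{\mat S}\mat A' \mat U_{\mat S}^\trans y .
\end{equation*}
Hence $\mat A = \mat U_{\mat S}\mat A'\mat U_{\mat S}^\trans$ on $\im(\mat S)$. Moreover, $\mat A'$ is uniquely determined, since it can be recovered as $\mat A' = \mat U_{\mat S}^\trans (\mat U_{\mat S}\mat A'\mat U_{\mat S}^\trans)\mat U_{\mat S}$.

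For completeness I will also note the converse: every symmetric positive semi-definite $\mat A'$ yields, through this formula, a self-adjoint positive operator on $\im(\mat S)$. This follows from the same two computations run in reverse, and it shows that the parametrization characterizes all coherent task kernels.

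There is no real technical obstacle; the only delicate point is interpretive. One must make clear that $\mat A$ lives on $\im(\mat S)$ rather than on $\reals^d$, so that the $d\times d$ matrix agrees with $\mat A$ on $\im(\mat S)$ and is zero on the orthogonal complement. This is the convention under which a separable OVK $\kernel(x,x')\mat U_{\mat S}\mat A'\mat U_{\mat S}^\trans$ can be used in \eqref{eq:ls_vvkrr} with $c_t \in \reals^d$.
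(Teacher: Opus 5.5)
Your proposal is correct and follows essentially the same route as the paper. Your entrywise definition $\mat A'_{ij}=\inner{\mat A u_j}{u_i}{\reals^d}$ is exactly the paper's $\mat A'=\mat U_{\mat S}^{\trans}\mat A\mat U_{\mat S}$, and your three steps (representation, symmetry, positive semi-definiteness) match the paper's, except that you expand in the basis where the paper writes $\mat P\mat A\mat P y=\mat A y$; your uniqueness remark and the converse are harmless additions that the paper only uses implicitly.
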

As a result, designing a coherent separable OVK is equivalent to designing a positive semi-definite task kernel $\mat A' \in \mathbb{S}^{d_b}_+$ in the $d_b$-dimensional coherent coordinate system. One can simply choose $\mat A' = \idmat{d_b}$ to obtain a coherent task kernel $\mat A = \mat{U}_{\mat S} \mat{U}_{\mat S}^{\trans}$, which is the identity operator on $\im(\mat S)$. Another simple choice for $\mat A'$ is a diagonal matrix $\mat A' = \diag\left(\lambda_1, \dots, \lambda_{d_b}\right)$ where where $\lambda_i > 0$ for all $i \in \intset{d_b}$.
However, those choices do not explicitly encode any prior knowledge about the relationships between time series, beyond the aggregation structure. In fact, it can be shown that this choice corresponds to a learning algorithm based on $d_b$ independent kernel ridge regression predictors \citep[KRR,][]{saunders_ridge_1998, evgeniou_regularization_2000}. See \Cref{sec:ovk_sep_diag} for more details.

Instead, we propose to construct the task kernel $\mat A \in \lineop{\im(\mat S)}$ from a similarity matrix $\mat M \in \mathbb{S}^{d}$, assumed to be available a priori, that encodes the pairwise similarities between the time series. Since such a matrix may not lie in $\lineop{\im(\mat S)}$ or may not be positive semi-definite, we seek a valid task kernel $\mat A$ by finding $\mat A' \in \mathbb{S}^{d_b}_+ $ that minimizes $\norm{\mat M - \mat{U}_{\mat S} \mat A' \mat{U}_{\mat S}^{\trans}}{\textrm{F}}^2$. The solution is given by the following lemma, whose proof can be found in \Cref{sec:proof_lemma_nearest_task_kernel}.
\begin{lemma} \label{lemma:nearest_task_kernel}
  The solution of the optimization problem 
  $
    \min_{\mat A' \in \mathbb{S}^{d_b}_+} \norm{\mat M - \mat{U}_{\mat S} \mat A' \mat{U}_{\mat S}^{\trans}}{\textrm{F}}^2
  $
  is given by $\mat A'_{\mathrm{opt}} = \mat Q \diag(\max(\sigma_i, 0)) Q^{\trans}$, where $Q\in \reals^{d_b \times d_b}$ and $(\sigma_i)_{i=1}^{d_b}$ are from the eigendecomposition of $\mat{U}_{\mat S}^{\trans} \mat M \mat{U}_{\mat S}$, i.e., $\mat{U}_{\mat S}^{\trans} \mat M \mat{U}_{\mat S} = \mat Q \diag(\sigma_i) Q^{\trans}$.
\end{lemma}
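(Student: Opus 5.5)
The plan is to reduce the problem to a nearest positive semi-definite matrix problem in the $d_b$-dimensional coherent coordinates, and then solve that problem by eigenvalue clipping. Complete $\mat{U}_{\mat S}$ to an orthogonal matrix $\mat W = \begin{bmatrix} \mat{U}_{\mat S} & \mat{U}_{\perp} \end{bmatrix} \in \reals^{d\times d}$, where the columns of $\mat U_\perp$ form an orthonormal basis of $\im(\mat S)^\perp$. The Frobenius norm is invariant under orthogonal conjugation, so
\begin{equation*}
  \norm{\mat M - \mat{U}_{\mat S} \mat A' \mat{U}_{\mat S}^{\trans}}{\textrm{F}}^2 = \norm{\mat W^{\trans}\mat M \mat W - \mat W^{\trans}\mat{U}_{\mat S} \mat A' \mat{U}_{\mat S}^{\trans}\mat W}{\textrm{F}}^2 .
\end{equation*}
Since $\mat{U}_{\mat S}^{\trans}\mat{U}_{\mat S} = \idmat{d_b}$ and $\mat U_\perp^{\trans}\mat{U}_{\mat S} = 0$, the second matrix is block diagonal with $\mat A'$ in the top-left block and zeros elsewhere.

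Splitting the squared Frobenius norm over the blocks of $\mat W^{\trans}\mat M\mat W$ then gives
\begin{equation*}
  \norm{\mat{U}_{\mat S}^{\trans}\mat M\mat{U}_{\mat S} - \mat A'}{\textrm{F}}^2 + C ,
\end{equation*}
where $C$ collects the squared norms of the off-diagonal and bottom-right blocks and does not depend on $\mat A'$. The original problem is therefore equivalent to finding the Frobenius-nearest matrix in $\mathbb{S}^{d_b}_+$ to the symmetric matrix $\mat B \coloneqq \mat{U}_{\mat S}^{\trans}\mat M\mat{U}_{\mat S}$.

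Next, write $\mat B = \mat Q \diag(\sigma_i) \mat Q^{\trans}$ and use orthogonal invariance again: with $\mat C = \mat Q^{\trans}\mat A'\mat Q$, the objective becomes $\norm{\diag(\sigma_i) - \mat C}{\textrm{F}}^2$. Because conjugation by $\mat Q$ is a bijection of $\mathbb{S}^{d_b}_+$ onto itself, it suffices to minimize over $\mat C \in \mathbb{S}^{d_b}_+$. Separating the diagonal and off-diagonal entries gives
\begin{equation*}
  \sum_i (\sigma_i - C_{ii})^2 + \sum_{i\neq j} C_{ij}^2 .
\end{equation*}
Positive semi-definiteness of $\mat C$ forces $C_{ii}\ge 0$. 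Hence the first sum is at least $\sum_i (\sigma_i - \max(\sigma_i,0))^2$, and the second sum is at least $0$. Both bounds are attained by $\mat C = \diag(\max(\sigma_i,0))$, which is PSD. Transforming back gives $\mat A'_{\mathrm{opt}} = \mat Q \diag(\max(\sigma_i,0))\mat Q^{\trans}$.

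Finally, I would address uniqueness. The objective is strictly convex in $\mat A'$, since it is a squared norm of an affine injective map, and $\mathbb{S}^{d_b}_+$ is a closed convex set. The minimizer is therefore unique, which justifies calling it "the solution".

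The main care point is the lower-bound step. One must note that only the diagonal entries are constrained in a simple way, via $C_{ii}\ge 0$, and that the lower bound, obtained by dropping the PSD constraint except for nonnegativity of the diagonal, is actually achieved by a PSD matrix. The block-decomposition step is routine once the orthogonal completion is introduced.
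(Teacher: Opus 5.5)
Your argument is correct. It follows the same two-step outline as the paper's proof: reduce to the Frobenius-nearest PSD matrix to $U_S^{\top} M U_S$, then clip negative eigenvalues. Both steps are carried out differently, however.

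For the reduction, the paper uses no orthogonal completion. It expands the square using $\|U_S A' U_S^{\top}\|_{\mathrm F} = \|A'\|_{\mathrm F}$ and $\langle M, U_S A' U_S^{\top}\rangle_{\mathrm F} = \langle U_S^{\top} M U_S, A'\rangle_{\mathrm F}$. It then replaces the constant $\|M\|_{\mathrm F}^2$ by $\|U_S^{\top} M U_S\|_{\mathrm F}^2$ to complete the square. Your block decomposition is equally short and also identifies the discarded constant explicitly: $2\|U_S^{\top} M U_\perp\|_{\mathrm F}^2 + \|U_\perp^{\top} M U_\perp\|_{\mathrm F}^2 = \|M - PMP\|_{\mathrm F}^2$ with $P = U_S U_S^{\top}$, i.e.\ the component of $M$ outside the $\operatorname{im}(S)\times\operatorname{im}(S)$ block.

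For the clipping step, the paper only cites Higham (1988). You prove it directly by rotating into the eigenbasis and using $C_{ii}\ge 0$, so your proof is self-contained.

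On existence and uniqueness, the paper argues existence abstractly (continuity, coercivity, closedness of the PSD cone) and uniqueness from strict convexity. Your explicit attainment of the lower bound makes the existence argument unnecessary. The equality case of that bound also gives uniqueness without strict convexity: every off-diagonal $C_{ij}$ must vanish and $C_{ii} = \max(\sigma_i,0)$, the latter forced by $C_{ii}\ge 0$ when $\sigma_i<0$.

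One cosmetic point: you use $C$ both for the constant and for $Q^{\top} A' Q$; rename one of them.
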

In the case where $\mat M \in \mathbb{S}^{d}_+$, the task kernel obtained via the above construction admits a simpler form. This result is summarized in the following corollary, whose proof is provided in \Cref{app:proof_corol_nearest_task_kernel}.
\begin{corollary} \label{corol:nearest_task_kernel}
  When $\mat M \in \mathbb{S}^{d}_+$, the minimizer of the optimization problem $\min_{\mat A} \norm{\mat M - \mat A}{\textrm{F}}^2$ with $ A \in \set{\mat{U}_{\mat S} \mat A' \mat{U}_{\mat S}^{\trans} \mid \mat A' \in \mathbb{S}^{d_b}_+}$
  is written as $\mat A_{\mathrm{opt}} = \mat P \mat M \mat P$ where $\mat P \in \reals^{d\times d}$ is the orthogonal projection matrix onto $\im(\mat S)$.
\end{corollary}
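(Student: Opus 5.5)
The plan is to reduce the corollary to \Cref{lemma:nearest_task_kernel} and then identify the resulting matrix explicitly. The projection onto $\im(\mat S)$ is $\mat P = \mat{U}_{\mat S} \mat{U}_{\mat S}^{\trans}$, because the columns of $\mat{U}_{\mat S}$ form an orthonormal basis of $\im(\mat S)$. Every feasible $\mat A$ has the form $\mat{U}_{\mat S} \mat A' \mat{U}_{\mat S}^{\trans}$ with $\mat A' \in \mathbb{S}^{d_b}_+$. Hence minimizing $\norm{\mat M - \mat A}{\textrm{F}}^2$ over the stated set is the same as the problem in \Cref{lemma:nearest_task_kernel}. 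By that lemma the minimizer is
\[
\mat A_{\mathrm{opt}} = \mat{U}_{\mat S} \mat A'_{\mathrm{opt}} \mat{U}_{\mat S}^{\trans}, \qquad \mat A'_{\mathrm{opt}} = \mat Q \diag(\max(\sigma_i,0)) \mat Q^{\trans},
\]
where $\mat{U}_{\mat S}^{\trans} \mat M \mat{U}_{\mat S} = \mat Q \diag(\sigma_i) \mat Q^{\trans}$.

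The key step uses the hypothesis $\mat M \in \mathbb{S}^d_+$. For any $v \in \reals^{d_b}$,
\[
v^{\trans} \mat{U}_{\mat S}^{\trans} \mat M \mat{U}_{\mat S} v = (\mat{U}_{\mat S} v)^{\trans} \mat M (\mat{U}_{\mat S} v) \geq 0,
\]
so $\mat{U}_{\mat S}^{\trans} \mat M \mat{U}_{\mat S}$ is positive semi-definite and every eigenvalue satisfies $\sigma_i \geq 0$. The thresholding then does nothing: $\max(\sigma_i,0)=\sigma_i$, which gives $\mat A'_{\mathrm{opt}} = \mat Q \diag(\sigma_i)\mat Q^{\trans} = \mat{U}_{\mat S}^{\trans} \mat M \mat{U}_{\mat S}$. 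Substituting back,
\[
\mat A_{\mathrm{opt}} = \mat{U}_{\mat S}\mat{U}_{\mat S}^{\trans} \mat M \mat{U}_{\mat S}\mat{U}_{\mat S}^{\trans} = \mat P \mat M \mat P,
\]
which is the claim.

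Uniqueness of the minimizer, implicit in the phrase ``the minimizer'', also needs a word. I would note that the map $\mat A' \mapsto \mat{U}_{\mat S} \mat A' \mat{U}_{\mat S}^{\trans}$ is an isometry for the Frobenius norm, since $\mat{U}_{\mat S}^{\trans}\mat{U}_{\mat S} = \idmat{d_b}$. The objective is therefore strictly convex in $\mat A'$ over the closed convex cone $\mathbb{S}^{d_b}_+$, so the minimizer is unique. I would take this from the proof of \Cref{lemma:nearest_task_kernel} if it is already established there.

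There is no real obstacle. The only points needing care are the identification $\mat P = \mat{U}_{\mat S}\mat{U}_{\mat S}^{\trans}$, which holds because the columns are orthonormal and span $\im(\mat S)$, and the observation that congruence by $\mat{U}_{\mat S}$ preserves positive semi-definiteness.
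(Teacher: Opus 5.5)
Your proof is correct and follows essentially the same route as the paper: reduce to \Cref{lemma:nearest_task_kernel}, note that $\mat{U}_{\mat S}^{\trans}\mat M\mat{U}_{\mat S}$ is positive semi-definite so the eigenvalue thresholding does nothing, and conclude $\mat A_{\mathrm{opt}} = \mat{U}_{\mat S}\mat{U}_{\mat S}^{\trans}\mat M\mat{U}_{\mat S}\mat{U}_{\mat S}^{\trans} = \mat P\mat M\mat P$. Your remark on uniqueness via the Frobenius isometry $\mat A'\mapsto\mat{U}_{\mat S}\mat A'\mat{U}_{\mat S}^{\trans}$ is a sound addition that is consistent with the strict-convexity argument in the lemma's proof.
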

Following \Cref{corol:nearest_task_kernel}, we give an example of a coherent separable OVK constructed from the graph relationship between the time series, which will be used in the synthetic experiment of \Cref{sec:toy_experiment}.
\begin{example}[Graph-based separable OVK]
\label{example:toy_laplacian_kernel}
Let $\mat L$ be the Laplacian matrix of a graph encoding known relationships between the time series. Since $\mat L$ is positive semi-definite, we may take $\mat M=\mat L^{\dagger}$ in \Cref{corol:nearest_task_kernel}, where $\dagger$ denotes the Moore–Penrose inverse. This gives the coherent task kernel $\mat A_{\mathrm{Lap}} = \mat P\mat L^{\dagger}\mat P $
and the corresponding separable OVK
$
  \ovk_{\mathrm{Lap}}(x,x')
  = \kernel(x,x')\mat P\mat L^{\dagger}\mat P.
$
For comparison, taking $\mat M=\mat I_d$ gives $\ovk_{\mathrm{Id}}(x,x')=\kernel(x,x')\mat P$.
\end{example}

\begin{remark}
  Constructing a task kernel $\mat A$ based on the relationships between tasks modeled by a given structure is common in the multi-task learning literature. In the context where the structure is a general graph, \citet{micchelli_kernels_2004} propose considering the Moore–Penrose inverse of the graph Laplacian $L$. A non-singular variant, $\mat A = (\mu \mat L + \lambda \mat I)^{-1}$, also appeared in \citet{sheldon_graphical_2008}. When the tasks are organized in a hierarchical fashion, which is closely related to our setting, \citet{widmer_leveraging_2010} propose defining the similarity matrix between the organisms (leaf nodes) based on their distance in a taxonomic tree.
\end{remark}

\subsubsection{General Coherent OVKs by Linear Composition} \label{sec:ovk_compose}
We now consider the case where a $\lineop{\reals^{d}}$-valued kernel is available and propose a method to build a $\lineop{\im(\mat S)}$-valued kernel on top of it. In this case, we can leverage the reconciliation matrix $\mat G$. Specifically, given a $\lineop{\reals^{d}}$-valued kernel $\widetilde{\ovk}$, define
\begin{equation} \label{eq:ovk_compose}
  \ovk_{\mat S \mat G} (x, x') \coloneqq \mat S \mat G \widetilde{\ovk}(x, x') \mat G^{\trans}\mat S^{\trans}
\end{equation}
for all $(x, x') \in \mathcal{X}^2$, where $S \in \set{0, 1}^{d \times d_b}$ is the aggregation matrix defined in \Cref{eq:aggre_matrix} and $\mat G \in \reals^{d_b \times d}$ is a fixed reconciliation matrix (see \Cref{bg:hts}). One can verify that $\ovk_{\mat S \mat G}$ is a valid $\lineop{\im(\mat S)}$-valued kernel. The base kernel $\widetilde{\ovk}$ determines how inputs and tasks interact before imposing coherence, whereas $\mat S \mat G$ determines the geometry through which these relationships are transferred into the coherent output space. It should be noted that the base kernel $\widetilde{\ovk}$ will not be required to be separable. We give first an example of a non-separable base kernel for multivariate time series.

\begin{example}[Non-separable Base OVK] \label{example:nonsep_base_kernel}
Suppose that the input space can be decomposed as $\mathcal{X} = (\mathcal{X}_0)^d$. Let $x=(x_1,\ldots,x_d) \in \mathcal{X}$, where $x_r\in\mathcal X_0$ contains the covariates of series $r \in \intset{d}$. 
Let $\mat B\in\reals^{d\times q}$ be an arbitrary task embedding with nonzero rows normalized to unit Euclidean norm. For example, $\mat B_{r,:}$ could be the one-hot encoding that represents the hierarchical level of the series $r$ or other non-hierarchical labels on the series (see, e.g., \Cref{sec:tourism_experiment}). Then we define
\begin{equation}
  \mat A_\rho=(1-\rho)\mat I_d+\rho\mat B\mat B^{\trans},
  \qquad \text{with} \; \rho\in[0,1].
  \label{eq:nonsep_task_similarity}
\end{equation}
Let $\lambda_1,\ldots,\lambda_d>0$. Given two inputs $x=(x_1,\ldots,x_d)$ and $x'=(x'_1,\ldots,x'_d)$, we define a non-separable base kernel $\widetilde{\ovk}_\rho:\mathcal X\times\mathcal X\to\lineop{\reals^d}$ as follows: for any $ r,s\in\intset{d}$,
\begin{equation}
  \left[\widetilde{\ovk}_\rho(x,x')\right]_{rs}
  =
  \frac{(\mat A_\rho)_{rs}}
       {\sqrt{\lambda_r\lambda_s}}
  \kernel(x_r,x'_s),
  \label{eq:nonsep_base_kernel}
\end{equation}
where $\kernel:\mathcal X_0\times\mathcal X_0\to\reals$ is a scalar-valued kernel admitting an explicit finite-dimensional feature map $\phi:\mathcal X_0\to\reals^{p_\phi}$, i.e.,
$
  \kernel(x_r,x'_s)=\langle\phi(x_r),\phi(x'_s)\rangle_{\reals^{p_\phi}}.
$
Such a kernel allows for task-specific covariates to influence the cross-task relationships. Moreover, $\widetilde{\ovk}_\rho$ is a valid OVK, as it admits a feature operator $\Gamma_\rho$ that can be explicitly characterized. More details can be found in \Cref{app:proof_example_nonsep_base_kernel_feature_map}.
\end{example}

Now we give one example of such general coherent OVKs with the MinT reconciliation matrices.
\begin{example}[MinT]
\label{example:gen_ovk_mint}
Let $\mat W\in\mathbb S^d_{++}$ be an estimate of the covariance matrix of the base-forecast errors used in the MinT reconciliation \citep{wickramasuriya_optimal_2019}. The common MinT variants, e.g., OLS, WLS-variance and shrinkage, correspond to
\begin{gather}
  \mat W_{\mathrm{OLS}}=\mat I_d,
  \quad
  \mat W_{\mathrm{WLS}}=\diag(\widehat{\mat W}),
  \\
  \mat W_{\mathrm{shr}}
  =(1-\alpha)\widehat{\mat W}
   +\alpha\diag(\widehat{\mat W}),
  \label{eq:mint_covariance_variants}
\end{gather}
where $\widehat{\mat W}$ is the empirical covariance matrix of the base-forecast residuals and $\alpha\in(0,1]$ is the shrinkage intensity (see \Cref{app:reconciliation} for more details). Then the MinT reconciliation matrix \citep{wickramasuriya_optimal_2019} and the corresponding coherent projection are
$\mat G_{\mat W} =(\mat S^{\trans}\mat W^{-1}\mat S)^{-1} \mat S^{\trans}\mat W^{-1}$ and $ \mat P_{\mat W}=\mat S\mat G_{\mat W}$.

For any base OVK $\widetilde{\ovk}$, we define the associated MinT-composed OVK as
\begin{equation}
  \ovk_{\mat W}(x,x')
  =\mat P_{\mat W}\widetilde{\ovk}(x,x')
   \mat P_{\mat W}^{\trans}.
  \label{eq:mint_kernel}
\end{equation}
For $\mat W=\mat I_d$, $\mat P_{\mat W}$ is the orthogonal projector $\mat P=\mat S(\mat S^{\trans}\mat S)^{-1}\mat S^{\trans}$. Thus, if $\widetilde{\ovk}(x,x')=\kernel(x,x')\widetilde{\mat A}$ is separable, \Cref{eq:mint_kernel} reduces to $\kernel(x,x')\mat P\widetilde{\mat A}\mat P$ and coincides with \Cref{corol:nearest_task_kernel}.
\end{example}

Interestingly, the predictor obtained by choosing an OVK of this form coincides with that produced by an alternative learning algorithm previously studied in the kernel literature for other types of outputs (see \Cref{app:ovk_sup} for further details). Specifically, for any $\lambda > 0$, if $\hat{h}$ is the minimizer of the optimization problem:
\begin{equation}
\min_{h\in \mathcal H_{\widetilde{\ovk}}} \frac{1}{T} \sum_{t=1}^T \norm{SGh(x_t) - y_t}{\reals^d}^2 + \lambda \norm{h}{\mathcal H_{\widetilde{\ovk}}}^2,
\label{eq:rvvkrr_fixed}
\end{equation}
where $\mathcal H_{\widetilde{\ovk}}$ is the RKHS associated to $\widetilde{\ovk}$, then $\hat{f} = \mat S \mat G \circ \hat{h}$ is the minimizer of the following one:
\begin{equation}
\min_{f\in \mathcal{H}_{\widetilde{\ovk}, \mat S\mat G}} \frac{1}{T} \sum_{t=1}^T \norm{f(x_t) - y_t}{\reals^d}^2 + \lambda \norm{f}{\mathcal{H}_{\widetilde{\ovk}, \mat S\mat G}}^2,
\end{equation}
where $\mathcal{H}_{\widetilde{\ovk}, \mat S\mat G}$ is the RKHS associated to $\ovk_{\mat S \mat G}$ as defined in \Cref{eq:ovk_compose}. The proof of the observation is provided in \Cref{app:proof_prop_same_estimator_sg}. It also allows us to reinterpret the WeaKL estimators introduced by \citet{doumeche_forecasting_2025} into our framework, by viewing them as instances where the OVKs are special cases of the construction in \Cref{eq:ovk_compose}. Additional discussion on this connection is provided in \Cref{app:weakl}.

%% file: 5-Theory.tex
\section{Statistical Analysis}
\label{sec:stat_analysis}
\subsection{Statistical Setting}
\label{sec:stat_setting}
In this section, from now on, we consider hierarchical time series forecasting as a possibly nonlinear regression problem. Let $(\rv X, \rv Y_b, \rv Y)$ be three random variables, valued in $\mathcal{X}$, $\mathcal{Y}_b$ and $\mathcal{Y}$ respectively.
In the hierarchical forecasting setting, $\rv Y_b$ denotes the $d_b$ bottom-level time-series observations, $\rv Y$ denotes the $d$ time-series observations across all levels, and $\rv X$ denotes the associated covariates.
We assume that $\mathcal{X}$ is a Polish space, $\mathcal{Y}_b$ and $\mathcal{Y}$ are two real separable Hilbert spaces. We consider that the following relationships exist between the variables:
\begin{gather}
  \rv Y_b = g^{\targ}(\rv X) + \xi, \quad \rv Y = \op S \rv Y_b,
\end{gather}
where $g^{\targ}: \mathcal{X} \to \mathcal{Y}_b$ is an unknown target function and $\op S \in \lineop{\mathcal{Y}_b, \mathcal{Y}}$ is a known bounded linear operator. Here, $\xi$ is a $\mathcal{Y}_b$-valued random variable modeling the noise. We assume that $\im(S)$ is closed, so that it is a valid Hilbert space with the inner product inherited from $\mathcal{Y}$.

Let $\loi$ be the joint probability distribution of $(\rv X, \rv Y)$, and $\rho$ be its marginal distribution on $\rv X$. Given $T$ observed data points $\left( (x_1, y_1), \dots, (x_T, y_T)\right) \in (\mathcal{X} \times \mathcal{Y})^T$ independently distributed by $\loi$, the goal is to obtain an estimator $\hat{f}: \mathcal{X} \to \im(\op S)$ of $\op S \circ g^{\targ}$ whose excess risk can be controlled, defined as follows:
\begin{align}
   &\risk(\hat{f}) - \risk(\op S \circ g^{\targ}) \nonumber \\
   & \quad  = \mathbb{E} \Big[\big\|\hat{f}(\rv X) - \rv Y \big\|_{ \mathcal{Y}}^2\Big] - \mathbb{E} \Big[\big\|\op S \circ g^{\targ}(\rv X) - \rv Y \big\|_{ \mathcal{Y}}^2 \Big].
\end{align}
It should be noted that the statistical analysis in this section assumes i.i.d. input–output pairs. Extending the excess-risk bound to temporally dependent observations is left for future work.

Let $L^2(\mathcal{X}, \rho, \mathcal{Y})$ be the Hilbert space of measurable  function $f: \mathcal{X} \to \mathcal{Y}$ that are square-integrable with respect to the probability measure $\rho$, i.e., $\int_{\mathcal{X}} \norm{f(x)}{\mathcal{Y}}^2 d\rho(x) < \infty$. Such a space is endowed with the inner product defined by $\inner{f_1}{f_2}{\rho} = \int_{\mathcal{X}} \inner{f_1(x)}{f_2(x)}{\mathcal{Y}} d\rho(x)$ and the associated norm $\norm{\cdot}{\rho}$ induced by the inner product.

We make the following assumptions.
\begin{assumption} \label{assum:val}
  $\norm{g^{\targ}(\rv X)}{\mathcal{Y}_b}^2 \leq C_1 < \infty$ almost surely and $\norm{\xi}{\mathcal{Y}_b}^2 \leq C_2 < \infty$ almost surely.
\end{assumption}
\begin{assumption} \label{assum:ovk}
  $\norm{\ovk_{x}}{2}^2 = \trace{\ovk_{x}^{\adj}\ovk_{x}} \leq \kappa < \infty $ for all $x \in \mathcal{X}$.
\end{assumption}
\begin{assumption} \label{assum:noise}
  The random variables $\rv X$ and $\xi$ are independent and $\expes{ \xi} = 0$.
\end{assumption}
\begin{assumption}\label{assum:source_condition}
There exists an $h^{\targ} \in L^2(\mathcal{X}, \rho, \mathcal{Y})$ such that, for some $0 < r \leq 1$, $\op S \circ g^{\targ} = L_{\ovk}^{r} h^{\targ}$ where the integral operator $L_{\ovk}: L^2(\mathcal{X}, \rho, \mathcal{Y}) \to L^2(\mathcal{X}, \rho, \mathcal{Y})$ is defined by
  $
    L_{\ovk} h \coloneqq \int_{\mathcal{X}} \ovk_{x} h(x) d\rho(x)
  $
  for any $h \in L^2(\mathcal{X}, \rho, \mathcal{Y})$.
\end{assumption}
The source condition in the last assumption is standard in the kernel literature and controls the smoothness of the target function $\op S \circ g^{\targ}$ relative to the kernel integral operator $L_{\ovk}$.

\subsection{Main Results}
We now present a compact excess-risk bound for the RVVKRR estimator, whose proof is deferred to \Cref{sec:excess_risk_proofs}. The proof is obtained using the technique of integral operators in RKHS, particularly introduced by \citet{smale_learning_2007, caponnetto_optimal_2007}. 
To state the result, define the covariance operator and its effective dimension by 
$\Sigma_{\ovk} \coloneqq \int_{\mathcal{X}}\ovk_x\ovk_x^{\adj}\,d\rho(x)$ and
$\mathcal{N}(\lambda)
  \coloneqq
  \trace{(\Sigma_{\ovk}+\lambda\idop)^{-1}\Sigma_{\ovk}}$.

\begin{proposition}[Excess-risk Bound]
\label{prop:bound_excess_risk}
  Under \Cref{assum:val,assum:ovk,assum:noise,assum:source_condition},
  let $\delta\in(0,1)$ and suppose that $\lambda \geq 8 \log\left(\frac{4}{\delta}\right) \frac{\kappa}{\sqrt{T}}$.
  Then, with probability at least $1-\delta$,
  \begin{align*}
    &\risk(\hat{f})-\risk(\op S\circ g^{\targ}) \\
    \leq \;& \frac{128}{T}\log^2\!\left(\frac{4}{\delta}\right)
    \left(
      \mathcal{N}(\lambda)\sigma_S^2 +\frac{\kappa}{\lambda}(3B_S^2+M_S^2)
    \right) \\
& +2\lambda^{2r}R_r^2,
  \end{align*}
  with $\sigma_S^2 \coloneqq \expes{\norm{\op S\xi}{\mathcal{Y}}^2}$, $B_S^2 \coloneqq \norm{\op S}{\mathrm{op}}^2 C_1$, $ M_S^2 \coloneqq \norm{\op S}{\mathrm{op}}^2 C_2$ and $R_r \coloneqq
  \norm{L_{\ovk}^{-r}(\op S\circ g^{\targ})}{\rho}$.

\end{proposition}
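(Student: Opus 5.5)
We follow the integral-operator approach of \citet{smale_learning_2007, caponnetto_optimal_2007}. Write $f^{\targ} \coloneqq \op S\circ g^{\targ}$. By \Cref{assum:noise}, $\mathbb{E}[\rv Y\mid \rv X] = f^{\targ}(\rv X)$, so $\risk(\hat f)-\risk(f^{\targ}) = \norm{\hat f - f^{\targ}}{\rho}^2$. Let $\iota:\vvrkhs\to L^2(\mathcal X,\rho,\mathcal Y)$ be the inclusion, so that $\iota^{\adj}\iota=\Sigma_{\ovk}$ and $\iota\iota^{\adj}=L_{\ovk}$. Define the population regularized solution $f_\lambda \coloneqq (\Sigma_{\ovk}+\lambda\idop)^{-1}\iota^{\adj}f^{\targ}$. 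We then split
\[
\norm{\hat f - f^{\targ}}{\rho}^2 \le 2\norm{\iota(\hat f - f_\lambda)}{\rho}^2 + 2\norm{\iota f_\lambda - f^{\targ}}{\rho}^2 .
\]
The second term is the approximation error. Using \Cref{assum:source_condition},
\[
\iota f_\lambda - f^{\targ} = -\lambda(L_{\ovk}+\lambda\idop)^{-1}L_{\ovk}^{r}h^{\targ}.
\]
Spectral calculus gives $\sup_{s\ge0}\lambda s^{r}/(s+\lambda)\le\lambda^{r}$ for $0<r\le1$, hence this term is at most $\lambda^{2r}R_r^2$. This produces the $2\lambda^{2r}R_r^2$ contribution.

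For the estimation term, we use $\hat f = (\hat\Sigma_{\ovk}+\lambda\idop)^{-1}\hat g$, where $\hat g \coloneqq \frac1T\sum_t \ovk_{x_t}y_t$. This yields
\[
\hat f - f_\lambda = (\hat\Sigma_{\ovk}+\lambda\idop)^{-1}\Big[(\hat g - \hat\Sigma_{\ovk} f_\lambda) - (\iota^{\adj}f^{\targ} - \Sigma_{\ovk} f_\lambda)\Big].
\]
Since $\norm{\iota u}{\rho}=\norm{\Sigma_{\ovk}^{1/2}u}{\vvrkhs}$, we bound
\[
\norm{\iota(\hat f-f_\lambda)}{\rho} \le \norm{\Sigma_{\ovk}^{1/2}(\Sigma_{\ovk}+\lambda\idop)^{-1/2}}{\mathrm{op}}\,\norm{(\Sigma_{\ovk}+\lambda\idop)^{1/2}(\hat\Sigma_{\ovk}+\lambda\idop)^{-1}(\Sigma_{\ovk}+\lambda\idop)^{1/2}}{\mathrm{op}}\,\norm{(\Sigma_{\ovk}+\lambda\idop)^{-1/2}\Delta}{\vvrkhs},
\]
where $\Delta$ is the centered empirical term $\frac1T\sum_t\zeta_t - \mathbb{E}\zeta$ with $\zeta_t \coloneqq \ovk_{x_t}(y_t - f_\lambda(x_t))$. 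The first factor is at most $1$.

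For the middle factor, apply a Hilbert-space Bernstein/Pinelis inequality to $\Sigma_{\ovk}-\hat\Sigma_{\ovk}$ in Hilbert--Schmidt norm. By \Cref{assum:ovk}, $\norm{\ovk_x\ovk_x^{\adj}}{2}\le\kappa$. With probability $1-\delta/2$ this gives $\norm{\Sigma_{\ovk}-\hat\Sigma_{\ovk}}{2}\le 4\kappa\log(4/\delta)/\sqrt T\le\lambda/2$ under the hypothesis on $\lambda$. A Neumann-series argument then bounds the middle factor by $2$. This step is where the constant $8$ in the condition on $\lambda$ comes from.

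For the last factor, decompose $y_t - f_\lambda(x_t) = \op S\xi_t + (f^{\targ}-f_\lambda)(x_t)$ and apply the vector Bernstein inequality to $(\Sigma_{\ovk}+\lambda\idop)^{-1/2}\zeta_t$, with probability $1-\delta/2$. We need two ingredients.
\begin{itemize}
\item \emph{Variance.} The noise part has second moment $\sigma_S^2\,\trace{(\Sigma_{\ovk}+\lambda\idop)^{-1}\Sigma_{\ovk}}=\sigma_S^2\mathcal N(\lambda)$. This uses the independence of $\xi$ from $\rv X$ and \Cref{assum:noise}.
\item \emph{Almost-sure bound.} We have $\norm{(\Sigma_{\ovk}+\lambda\idop)^{-1/2}\ovk_x}{\mathrm{op}}\le\sqrt{\kappa/\lambda}$, and the residual is bounded using $\norm{\op S\xi}{}^2\le M_S^2$ and $\norm{f^{\targ}(x)}{}^2\le B_S^2$.
\end{itemize}

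The main obstacle is controlling $\norm{f_\lambda(x)}{\mathcal Y}$ uniformly without an RKHS-norm bound on $f^{\targ}$, since $r$ may be below $1/2$. Our first attempt is to bound it crudely via $\norm{f_\lambda(x)}{}\le\sqrt{\kappa}\norm{f_\lambda}{\vvrkhs}$ with $\norm{f_\lambda}{\vvrkhs}\le\lambda^{-1/2}\norm{f^{\targ}}{\rho}\le\lambda^{-1/2}B_S$. If that is too lossy, we fold the bias $\norm{f^{\targ}-\iota f_\lambda}{\rho}^2\le B_S^2$ into the variance term instead, which is presumably the source of the factor $3B_S^2$. Collecting terms, squaring, using $(a+b)^2\le2a^2+2b^2$, and taking a union bound yields the stated $\frac{128}{T}\log^2(4/\delta)\big(\mathcal N(\lambda)\sigma_S^2+\frac{\kappa}{\lambda}(3B_S^2+M_S^2)\big)$. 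The bookkeeping of constants is routine once the two concentration events are fixed.
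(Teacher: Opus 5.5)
Your proposal is correct and follows essentially the same route as the paper: the same split $2\norm{\hat f - f_\lambda}{\rho}^2 + 2\norm{f_\lambda-\op S\circ g^{\targ}}{\rho}^2$, and the same two Pinelis--Bennett events at level $\delta/2$, one for $\norm{\hat{\Sigma}_{\ovk}-\Sigma_{\ovk}}{2}$ and one for $(\Sigma_{\ovk}+\lambda\idop)^{-1/2}\ovk_{\rv X}(\rv Y-f_\lambda(\rv X))$, with your Neumann-series bound (times $\norm{\Sigma_{\ovk}^{1/2}(\Sigma_{\ovk}+\lambda\idop)^{-1/2}}{\mathrm{op}}\le 1$) replacing the cited bound $\norm{\Sigma_{\ovk}^{1/2}(\hat{\Sigma}_{\ovk}+\lambda\idop)^{-1}(\Sigma_{\ovk}+\lambda\idop)^{1/2}}{\mathrm{op}}\le 2$. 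The one correction is that your two treatments of the bias are not alternatives but are both needed: the crude bound $\norm{f_\lambda(x)}{\mathcal{Y}}\le\sqrt{\kappa/\lambda}\,B_S$ in the almost-sure term gives $2B_S^2$ once you use $\frac{1}{T}(1+\sqrt{\kappa/\lambda})^2\le 2$ (which follows from the condition on $\lambda$ and $T\ge 1$), and bounding the bias part of the second moment by $\frac{\kappa}{\lambda}\norm{f_\lambda-\op S\circ g^{\targ}}{\rho}^2\le\frac{\kappa}{\lambda}B_S^2$ gives the remaining $B_S^2$.
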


The first term in \Cref{prop:bound_excess_risk} is the estimation error, controlled by the effective dimension and the transformed noise level.
The second term is the approximation error induced by regularization and the source condition.

\begin{remark} In the  HTS setting, we have
$\mathcal{Y}=\im(\mat S)\subset\reals^d$. Denote $\Sigma_\xi=\expes{\xi\xi^\top}\in\reals^{d_b\times d_b}$. Then by the definition of $\sigma_S^2$, we have
\begin{align*}
  \sigma_S^2
  \leq \norm{\mat S}{\mathrm{op}}^2
    \expes{\norm{\xi}{2}^2}
  =\norm{\mat S}{\mathrm{op}}^2\trace{\Sigma_\xi}
  \leq\norm{\mat S}{\mathrm{F}}^2\trace{\Sigma_\xi}.
\end{align*}
Thus, the estimation error in \Cref{prop:bound_excess_risk} depends on both the total bottom-level noise level $\trace{\Sigma_\xi}$ and the aggregation structure.
For a binary summing matrix, $\norm{\mat S}{\mathrm{F}}^2$ counts the total number of bottom-series memberships across the hierarchy. Indeed,
$\norm{\mat S}{\mathrm{F}}^2=\sum_{i=1}^{d}\sum_{j=1}^{d_b}S_{ij}^2 =\sum_{i=1}^{d}\sum_{j=1}^{d_b}S_{ij}$, and
$S_{ij}=1$ exactly when the bottom-level series $j$ contributes to series $i$.
Consequently, a deeper or more densely aggregated hierarchy can increase this upper bound on the propagated noise.
\end{remark}

%% file: 6-Experiments.tex
\section{Numerical Experiments}
\label{sec:exper}

In this section, we present an empirical study of the proposed predictors on both synthetic and real-world datasets. For the two-stage baseline models, the reconciliation methods are implemented using the \textit{HierarchicalForecast} library\footnote{\url{https://pypi.org/project/hierarchicalforecast/}} \citep{olivares_hierarchicalforecast_2022}. For evaluation, we report the mean squared error (MSE) computed at each hierarchical level as well as across all levels. 

\subsection{Synthetic Dataset}
\label{sec:toy_experiment}
\paragraph{Generation Process.} We generate a three-level hierarchy with $d=8$ series: one top-level, two middle-level, and $d_b=5$ bottom-level series. Each bottom-level series follows an additive trend--seasonality model with correlated Gaussian noise; series sharing a parent have the same trend and seasonal components. The remaining series are obtained by aggregation with $\mat S$, ensuring coherence by construction. A scalar $\alpha\geq0$ controls the noise level. Exact parameter values, the covariance construction, and an example realization are provided in \Cref{app:toy_hts}.

\paragraph{Experimental Setting.} We investigate how model performance depends on the training length $T$, with the covariance scaling parameter fixed at $\alpha=0.6$. We take $T\in\set{40,80,160,320}$ and run 32 Monte Carlo simulations for each value. In each simulation, we generate hierarchical time series of length $T+200$: the first $T$ timestamps are used for training, the next $100$ for validation and hyperparameter selection, and the last $100$ for testing. For the input covariates, we use $x_t = (t, \sin(\nicefrac{2\pi t}{7}), \sin(\nicefrac{2\pi t}{30})) \in \reals^3$ for all series in the hierarchy. We compare KRR followed by reconciliation with our RVVKRR predictor. For KRR, the ridge parameter $\lambda$ is cross-validated separately for each series, and we consider some reconciliation strategies described in \Cref{app:reconciliation}. For RVVKRR, we use the two separable OVKs introduced in \Cref{example:toy_laplacian_kernel}: the identity kernel (RVVKRR-I) and the graph-based kernel (RVVKRR-Lap). The graph connects bottom-level series sharing the same parent, and the corresponding edge weights are selected by validation. The node-wise diagonal weights are set to the KRR ridge parameters.

\paragraph{Experimental Results.}
\begin{figure}[htbp!]
    \centering
    \includegraphics[width=0.8\linewidth]{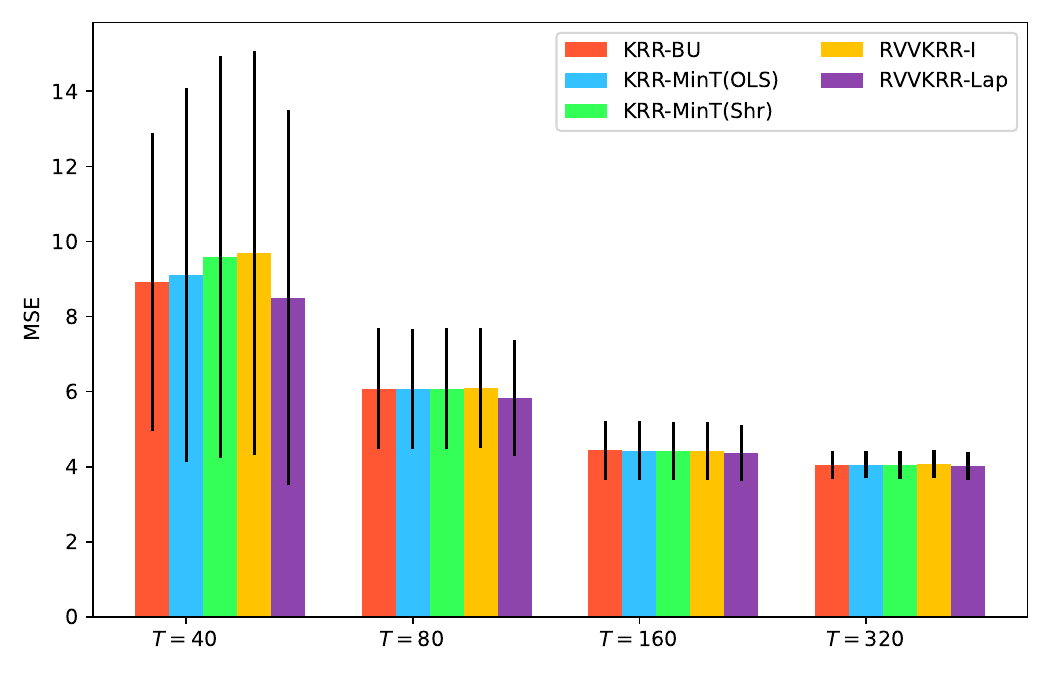}
    \caption{All-level MSE on the synthetic dataset as a function of the training length $T$, with $\alpha=0.6$.}
    \label{fig:toy_res}
\end{figure}

\Cref{fig:toy_res} illustrates the impact of the training length $T$ on the MSE across all levels. Employing a Laplacian OVK yields better results than using an identity kernel. Moreover, RVVKRR consistently outperforms the two-stage methods, particularly in low-data regimes ($T=40$ or $T=80$). Additional results on the effect of the covariance scaling parameter $\alpha$ are reported in \Cref{app:toy_hts}.

\subsection{TourismLarge Dataset}
\label{sec:tourism_experiment}

\paragraph{Dataset.}
We next consider the Australian domestic TourismLarge dataset \citep{wickramasuriya_optimal_2019}, which contains 228 monthly observations from January 1998 to December 2016.  The hierarchy comprises $d=415$ series: one country, 7 states, 27 zones, 76 regions, and 304 bottom-level region--purpose series.  The four bottom-level purposes are holiday, visiting friends and relatives (VFR), business, and other.

\paragraph{Experimental Setting.}
We use the preceding 24 observations of each series as its covariates.  After constructing these lags, 204 observations remain; the last 24 form the test set and the first 180 form the training set. Hyperparameters are selected by three-fold expanding-window cross-validation. Independent OLS and KRR each use only the 24 lags of the series being predicted. KRR uses a linear input kernel. For both independent estimators, we report unreconciled forecasts and forecasts reconciled using the reconciliation methods described in \Cref{app:reconciliation}. RVVKRR receives the concatenation of all series-specific lag blocks, hence $x_t \in\reals^{24d}$. The OVK used by RVVKRR combines the non-separable base kernel of \Cref{example:nonsep_base_kernel} with the MinT (shrinkage) projection of \Cref{example:gen_ovk_mint}. The scalar-valued kernel in \Cref{eq:nonsep_base_kernel} is the linear kernel. We take $\mat B=\mat B_{\mathrm{pur}}\in\reals^{d\times 5}$, whose $r$-th row is the one-hot encoding of the membership of series $r$ in either the aggregate category or one of the four travel-purpose categories. 
These relationships are not explicitly encoded by the aggregation matrix. The parameters $\lambda_r$ are inherited from the independent KRR. Cross-validation selects $\rho=0.1$.

\paragraph{Experimental Results.}
\begin{table}[htbp!]
    \centering
    \footnotesize
    \setlength{\tabcolsep}{1.8pt}
    \caption{Test MSE for KRR and RVVKRR at each aggregation level of TourismLarge, reported in millions ($10^6$). All sums the level-wise MSEs. $\dagger$ denotes unreconciled forecasts. Lower is better.}
    \begin{tabular}{lcccccc}
        \toprule
        Method & Ctry. & State & Zone & Reg. & Purp. & All \\
        \midrule
        \multicolumn{7}{@{}l}{\textit{KRR base forecasts}} \\
        Unreconciled $\dagger$ & 2.71 & 1.69 & 1.30 & 1.19 & 1.13 & 8.02 \\
        Bottom-Up & 8.06 & 2.53 & 1.53 & 1.21 & 1.13 & 14.46 \\
        Top-Down (AHP) & 2.71 & 3.94 & 2.38 & 2.05 & 1.92 & 13.00 \\
        Top-Down (PHA) & 2.71 & 3.89 & 2.37 & 2.04 & 1.92 & 12.94 \\
        MinT (OLS) & 2.72 & 1.58 & 1.25 & 1.12 & 1.11 & 7.78 \\
        MinT (shrinkage) & 3.05 & 1.60 & \textbf{1.21} & \textbf{1.08} & \textbf{1.06} & 7.99 \\
        MinT (WLS-var) & 4.56 & 1.83 & 1.29 & 1.11 & 1.09 & 9.89 \\
        ERM (closed form) & 5.36 & 2.50 & 2.61 & 2.57 & 2.52 & 15.57 \\
        \midrule
        RVVKRR & \textbf{2.02} & \textbf{1.52} & 1.30 & 1.24 & 1.57 & \textbf{7.64} \\
        \bottomrule
    \end{tabular}
    \label{tab:tourism_results}
\end{table}
\Cref{tab:tourism_results} reports the test MSE at each aggregation level; the full comparison, including OLS base forecasts and alternative RVVKRR projections, is given in \Cref{app:tourism}. Relative to the matched two-stage KRR with MinT (shrinkage) baseline, the RVVKRR predictor with the  CV-selected purpose prior ($\rho=0.1$) reduces the all-level MSE from $7.99\times10^6$ to $7.64\times10^6$. The gains are concentrated at the country and state levels, with larger errors at the lower levels.

\subsection{Traffic Dataset}
\label{sec:traffic_experiment}

\paragraph{Dataset.}
We finally consider the SF Bay Area daily Traffic dataset, originally preprocessed by \citet{ben_taieb_regularized_2019} and distributed with the \textit{HierarchicalForecast} benchmark \citep{olivares_hierarchicalforecast_2022}. It contains 366 daily observations from January~1 to December~31, 2008, for $d=207$ series arranged in four levels: one total series, two Level~2 aggregates, four Level~3 aggregates, and $d_b=200$ bottom-level series.

\paragraph{Experimental Setting.}
For each series $r$, the covariate vector $x_{t,r}\in\reals^{23}$ contains 16 raw lag values at lags $1,\ldots,14$, 21, and 28, six Fourier features corresponding to harmonics $1$, $2$, and $3$ of the weekly period $P=7$, and a linear trend. As for TourismLarge, independent KRR uses only $x_{t,r}$ to predict series $r$, whereas RVVKRR receives the concatenation $x_t=(x_{t,1},\ldots,x_{t,d})\in\reals^{23d}$. No feature standardization is applied. Independent KRR uses a linear kernel, with a separate ridge parameter $\lambda_r$ selected for each series by time-series cross-validation. For RVVKRR, we set $\rho=0$ in the base OVK of \Cref{example:nonsep_base_kernel} and compose this kernel with a MinT projection as in \Cref{example:gen_ovk_mint}. The parameters $\lambda_r$ are inherited from the fitted independent KRR and set the relative regularization scales. We consider three projections inside the kernel: MinT (OLS), MinT (shrinkage), and MinT (WLS-var). Within each training window, the covariance proxies for the latter two are estimated from in-sample residuals of auxiliary independent KRR predictors. For each RVVKRR variant, the global regularization parameter $\lambda$ is selected by cross-validation. The corresponding KRR+MinT baselines apply the same OLS, shrinkage, or variance-weighted projection to independently fitted KRR forecasts post hoc. The full comparison with additional models and further experimental details are provided in \Cref{app:traffic}.

\paragraph{Experimental Results.}
\Cref{tab:traffic_results} shows that post-hoc MinT reconciliation does not necessarily improve independent KRR: OLS and shrinkage yield only marginal changes, while WLS-var degrades performance. In contrast, placing the shrinkage or WLS-var projection inside the joint RVVKRR fit substantially reduces the all-level MSE. This benefit is not due to coherence alone, since RVVKRR-MinT (OLS) remains close to independent KRR. Unlike post-hoc reconciliation, which only transforms already fitted forecasts, an internal projection changes the RKHS geometry and allows the series-specific feature blocks to be estimated jointly. 
The resulting gains occur mainly at the aggregate levels, at the cost of some bottom-level accuracy under the stated all-level MSE objective.

\begin{table}[htbp!]
    \centering
    \footnotesize
    \setlength{\tabcolsep}{1.8pt}
    \caption{Test MSE at each aggregation level of Traffic, reported in thousands ($10^3$). All sums the level-wise MSEs. $\dagger$ denotes incoherent forecasts. Lower is better.}
    \begin{tabular}{lccccc}
        \toprule
        Method & L1 & L2 & L3 & Bottom & All \\
        \midrule
        \multicolumn{6}{@{}l}{\textit{KRR base forecasts}} \\
        Unreconciled $\dagger$ & 8.95 & 5.05 & 2.84 & 0.63 & 17.47 \\
        MinT (OLS) & 9.01 & 4.95 & 2.85 & 0.61 & 17.42 \\
        MinT (shrinkage) & 9.16 & 4.83 & 2.69 & \textbf{0.60} & 17.28 \\
        MinT (WLS-var) & 11.68 & 6.11 & 3.34 & 0.62 & 21.76 \\
        \midrule
        \multicolumn{6}{@{}l}{\textit{RVVKRR forecasts}} \\
        MinT (OLS) & 8.99 & 4.93 & 2.79 & 0.61 & 17.32 \\
        MinT (shrinkage) & 3.84 & 2.14 & 2.20 & 3.39 & 11.57 \\
        MinT (WLS-var) & \textbf{0.32} & \textbf{0.24} & \textbf{0.44} & 3.02 & \textbf{4.03} \\
        \bottomrule
    \end{tabular}
    \label{tab:traffic_results}
\end{table}

%% file: 7-Conclusion.tex
\section{Conclusion}
\label{sec:conclusion}
We introduced RVVKRR, an end-to-end kernel framework for hierarchical time series forecasting that enforces coherence directly in the hypothesis space.
Experiments on synthetic and real datasets show that embedding an appropriate hierarchical geometry within joint estimation can outperform post-hoc reconciliation, while coherence alone does not guarantee an improvement.
Future work should learn the coherent kernel geometry from data, accommodate level-specific forecasting priorities, scale to larger hierarchies, and extend the analysis to dependent time-series observations.

%% file: 9-Appendix.tex
\section{Additional Preliminaries}
\subsection{Hierarchical Time Series Example}
\label{app:hts_example}

Example \ref{example: hts} gives an illustration of such a hierarchical time series.

\begin{figure}[htbp!]
    \centering
    \includegraphics[width=0.7\linewidth]{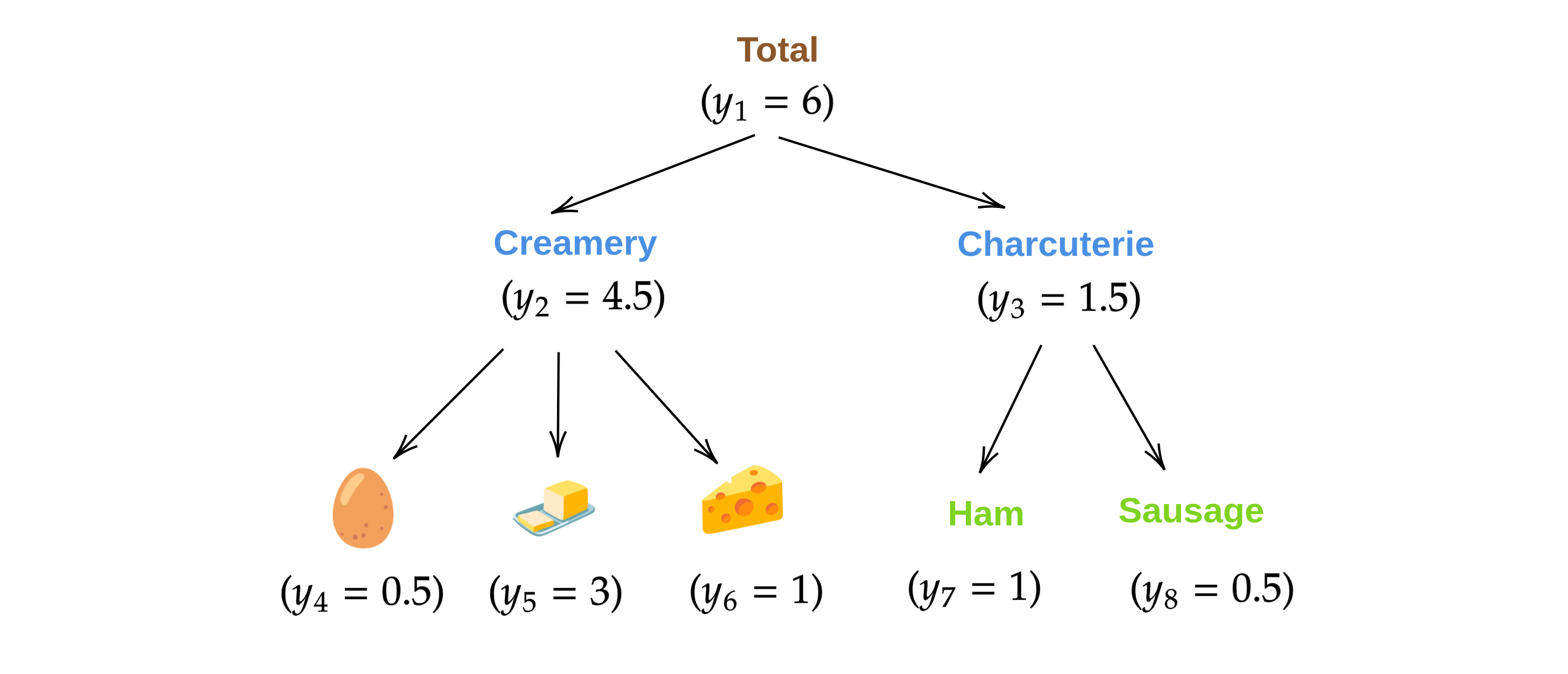}
    \caption{A three level hierarchical time series example for the daily raw material demand of a crêperie.}
    \label{fig:hts_example}
\end{figure}

\begin{example}
  Imagine that we have 5 univariate time series representing the daily raw material demand of a crêperie, for example: eggs, butter, cheese, sausage, and ham. The demand for these products can be aggregated to form two new univariate time series representing, respectively, the charcuterie and creamery categories. These two time series can be further aggregated to obtain the total demand. The hierarchical structure is shown in Figure \ref{fig:hts_example}. In this example, the summation matrix is defined as follows:
  \begin{equation}
    \mat S_{\mathrm{sum}} =  \begin{bmatrix} 1 & 1 & 1 & 1 & 1 \\ 1 & 1 & 1 & 0 & 0 \\ 0 & 0 & 0 & 1 & 1\end{bmatrix}.
  \end{equation}

\label{example: hts}
\end{example}

\subsection{Reconciliation Strategies}
\label{app:reconciliation}

As explained in \Cref{bg:hts}, a linear reconciliation method transforms a base
forecast $\tilde{y}_t\in\reals^d$ into the coherent forecast
\begin{equation*}
  \hat{y}_t = \mat S\mat G\tilde{y}_t \in \im(\mat S),
\end{equation*}
where $\mat G\in\reals^{d_b\times d}$ maps forecasts from all levels of the
hierarchy to the bottom level. The main classical strategies differ in how
they choose $\mat G$.

\paragraph{Bottom-up reconciliation.}
The bottom-up approach retains the bottom-level base forecasts and discards
the forecasts at the aggregate levels. Its reconciliation matrix is
\begin{equation*}
  \mat G_{\mathrm{BU}}
  = \begin{bmatrix}
      0_{d_b\times d_a} & \idmat{d_b}
    \end{bmatrix}.
\end{equation*}
Thus, $\mat G_{\mathrm{BU}}\tilde{y}_t$ selects the bottom-level forecasts,
which are then aggregated by $\mat S$ to recover forecasts at every level of
the hierarchy.

\paragraph{Top-down reconciliation.}
The top-down approach starts from the base forecast of the top-level series
and disaggregates it across the bottom-level series \citep{gross_disaggregation_1990}.
It uses
\begin{equation*}
  \mat G_{\mathrm{TD}}
  = \begin{bmatrix}
      p & 0_{d_b\times(d-1)}
    \end{bmatrix},
  \qquad p\in\reals^{d_b},
\end{equation*}
where $p$ contains the disaggregation proportions. Let
$y_t^{\mathrm{top}}\in\reals$ denote the observed top-level series and let
$y_{t,i}^b$ denote the $i$-th bottom-level series, for $t\in\intset{T}$ and
$i\in\intset{d_b}$. The \textit{average historical proportions} (AHP)
method averages the share of each bottom-level series at every training time:
\begin{equation*}
  p_i^{\mathrm{AHP}}
  = \frac{1}{T}\sum_{t=1}^T
    \frac{y_{t,i}^b}{y_t^{\mathrm{top}}},
  \qquad i\in\intset{d_b}.
\end{equation*}
By contrast, the \textit{proportions of the historical averages} (PHA)
method computes the share from the historical mean of each series:
\begin{equation*}
  p_i^{\mathrm{PHA}}
  = \frac{T^{-1}\sum_{t=1}^T y_{t,i}^b}
         {T^{-1}\sum_{t=1}^T y_t^{\mathrm{top}}}
  = \frac{\sum_{t=1}^T y_{t,i}^b}
         {\sum_{t=1}^T y_t^{\mathrm{top}}},
  \qquad i\in\intset{d_b}.
\end{equation*}
For coherent training data, both choices satisfy
$\sum_{i=1}^{d_b}p_i=1$. The vector
$p=(p_1,\ldots,p_{d_b})^{\trans}$ obtained with either method is then used in
$\mat G_{\mathrm{TD}}$ above.

\paragraph{MinT reconciliation.}
Let $\mat W\in\mathbb S_{++}^d$ be a positive-definite covariance proxy for the
base-forecast errors. The minimum trace (MinT) reconciliation matrix and its
associated coherent projection are \citep{wickramasuriya_optimal_2019}
\begin{equation*}
  \mat G_{\mat W}
  = (\mat S^{\trans}\mat W^{-1}\mat S)^{-1}
    \mat S^{\trans}\mat W^{-1},
  \qquad
  \mat P_{\mat W}=\mat S\mat G_{\mat W}.
\end{equation*}
Ideally, $\mat W$ is the covariance matrix
$\cov(y_t-\tilde{y}_t)$, but this quantity is unknown and must be estimated.
Let
\begin{equation*}
  \widehat{\mat W}
  = \frac{1}{T}\sum_{t=1}^T
    (y_t-\tilde{y}_t)(y_t-\tilde{y}_t)^{\trans}
\end{equation*}
denote the empirical covariance of the base-forecast errors over the $T$
training observations. The variants used in our experiments correspond to
the following choices of the covariance proxy.

\textit{OLS.}
The OLS reconciliation of \citet{hyndman_optimal_2011} is the special case of
MinT obtained with
\begin{equation*}
  \mat W_{\mathrm{OLS}}=\mat I_d.
\end{equation*}
Indeed,
\begin{equation*}
  \mat G_{\mat W_{\mathrm{OLS}}}
  = (\mat S^{\trans}\mat S)^{-1}\mat S^{\trans},
  \qquad
  \mat P_{\mat W_{\mathrm{OLS}}}
  = \mat S(\mat S^{\trans}\mat S)^{-1}\mat S^{\trans},
\end{equation*}
so $\mat P_{\mat W_{\mathrm{OLS}}}$ is the orthogonal projector onto
$\im(\mat S)$. 

\textit{WLS-variance.}
The weighted least squares variance variant retains only the marginal
base-forecast error variances:
\begin{equation*}
  \mat W_{\mathrm{WLS}}
  = \diag(\widehat{\mat W})
  = \diag(\widehat{\sigma}_1^2,\ldots,\widehat{\sigma}_d^2),
\end{equation*}
where $\widehat{\sigma}_r^2$ is the empirical error variance of series $r$.
This diagonal proxy ignores cross-series error correlations.

\textit{Shrinkage.}
The shrinkage variant combines the full empirical covariance matrix with its
diagonal counterpart:
\begin{equation*}
  \mat W_{\mathrm{shr}}
  = (1-\alpha)\widehat{\mat W}
    + \alpha\diag(\widehat{\mat W}),
  \qquad \alpha\in(0,1].
\end{equation*}
It therefore preserves estimated cross-series error correlations while
shrinking them toward zero. For a comprehensive review of classical
reconciliation strategies, see \citet{hyndman_forecasting_2018} and the
references therein.

\subsection{Separable Operator-Valued Kernels}
\label{app:ovk_sep}
Separable OVKs provides a clear decoupling between the contribution of input and output, where $\kernel$ measures input similarity and $\op A$ encodes the relationships between output components, which makes them highly interpretable compared to general OVKs. Moreover, they simplify massively the computation burden for some kernel-based learning algorithms such as multi-task kernel regression \citep{micchelli_kernels_2004} or multi-task Gaussian processes \citep{bonilla_multi-task_2007}. See, e.g., \citet[Section 6]{alvarez_kernels_2012} for the discussion on the computation consideration.

\subsection{Operator-Valued Kernels in Supervised Learning}
\label{app:ovk_sup}
In multi-task learning, for instance, it is natural to take the output space to be $\reals^d$ where $d$ denotes the number of related tasks whose joint learning may yield mutual benefits \citep{micchelli_kernels_2004, evgeniou_learning_2005}. More recently, several works have focused on predicting functional data by setting  $\mathcal{Y}=L^2(\Omega)$, the set of square integrable functions on a given compact set $\Omega \subset \reals^d$ \citep{kadri_operatorvalued_2016, bouche_nonlinear_2021, bouche_functionvalued_2023}. In the context of structured prediction, complex output objects can be embedded into an RKHS $\mathcal H_{k}$ using a scalar-valued kernel $k: \mathcal{Y} \times \mathcal{Y} \to \reals$, so that the learning problem can be performed by finding a function in an $\mathcal H_{k}$-valued RKHS \citep{brouard_semisupervised_2011, kadri_generalized_2013, brouard_fast_2016}. 

Moreover, learning functions of the form $f = \op P \circ h $ where $\op P: \mathcal{W} \to \mathcal{Y}$ is a bounded linear operator and $h: \mathcal{X} \to \mathcal{W}$ is a function from a $\mathcal{W}$-valued RKHS, is not totally new in the kernel literature. In vector-valued multi-view learning, \citet{minh_unifying_2016} consider $\op P$ as the combination operator that transforms the output from the $m$ view in  $\mathcal{W}= \mathcal{Y}^m$ into a single output in $\mathcal{Y}$. In \citet{bouche_nonlinear_2021}, the operator $\op P$ is a projection operator that transforms the expansion coefficient $u \in \mathcal{W} = \reals^d$ to a function $\sum_{l=1}^{d} u_l \phi_l \in \mathcal{Y} = L^2(\Omega)$ where $(\phi_l)_{l=1}^{d} \in L^2(\Omega)^d$ is a dictionary of functions.

\section{Additional Details for Section~\ref{sec:predictor}}
\label{app:add_details_sec_predictor}
\subsection{Dimension of the Coherent Forecast Space}

\begin{lemma} \label{lemma:dim_im_S}
  Let $\mat S$ be the aggregation matrix defined in Equation \ref{eq:aggre_matrix}. Then $\dim( \im(\mat S)) = d_b$, where $d_b$ is the number of bottom-level series.
\end{lemma}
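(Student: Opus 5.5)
The plan is to use the rank–nullity theorem: since $\dim(\im(\mat S)) = \operatorname{rank}(\mat S)$ for $\mat S \in \reals^{d\times d_b}$, it suffices to show that $\mat S$ has trivial kernel, i.e.\ full column rank $d_b$. The block structure in \Cref{eq:aggre_matrix}, namely $\mat S = \begin{bmatrix} \mat S_{\mathrm{sum}} \\ \idmat{d_b} \end{bmatrix}$, makes this immediate.

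First, take $v \in \reals^{d_b}$ with $\mat S v = 0$. Reading off the last $d_b$ coordinates of $\mat S v$ gives $\idmat{d_b} v = v = 0$. Hence $\ker(\mat S) = \set{0}$.

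Second, rank–nullity yields $\dim(\im(\mat S)) = d_b - \dim(\ker(\mat S)) = d_b$. Equivalently, the columns of $\mat S$ are linearly independent, since their bottom blocks are the standard basis vectors of $\reals^{d_b}$, and these columns span $\im(\mat S)$.

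There is no real obstacle here. The only point to state carefully is that the lower block is exactly $\idmat{d_b}$, independently of the entries of $\mat S_{\mathrm{sum}}$, so the argument holds for any hierarchy. The same observation also shows that $\mat S^{\trans}\mat S = \mat S_{\mathrm{sum}}^{\trans}\mat S_{\mathrm{sum}} + \idmat{d_b}$ is positive definite, which justifies the inverses $(\mat S^{\trans}\mat S)^{-1}$ and $(\mat S^{\trans}\mat W^{-1}\mat S)^{-1}$ used earlier.
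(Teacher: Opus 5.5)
Your proof is correct and follows essentially the same route as the paper: the identity block $\idmat{d_b}$ forces $v = 0$ whenever $\mat S v = 0$, and rank--nullity then gives $\dim(\im(\mat S)) = d_b$. Your closing remark that $\mat S^{\trans}\mat S = \mat S_{\mathrm{sum}}^{\trans}\mat S_{\mathrm{sum}} + \idmat{d_b}$ is positive definite is a correct addition that the paper does not state; it justifies the inverses $(\mat S^{\trans}\mat S)^{-1}$ and $(\mat S^{\trans}\mat W^{-1}\mat S)^{-1}$.
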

\begin{proof}
  Suppose that we have $\mat S \vect v = 0$ for some $\vect v \in \reals^{d_b}$. By definition of the matrix $\mat S$, we have
  \begin{equation*}
    \begin{bmatrix} \mat S_{\mathrm{sum}} \\ \idmat{d_b} \end{bmatrix} \vect v = \begin{bmatrix} \mat S_{\mathrm{sum}} \vect v \\ \vect v \end{bmatrix} = 0.
  \end{equation*}
  The lower block gives $\vect v= 0$. So the only solution of $\mat S \vect v = 0$ is $\vect v= 0$. Thus, we have $\kermap (\mat S) = \set{0}$ and $\dim(\kermap (\mat S))=0$. By the rank–nullity theorem, we can conclude that $\dim (\im(\mat S)) = \dim (\reals^{d_b}) - \dim(\kermap (\mat S)) = d_b.$
\end{proof}

\subsection{Proof of \Cref{lemma:ovk_image_s}}
\label{sec:proof_lemma_ovk_image_s}
\begin{proof}
  Let $\mat A \in \lineop{\im(\mat S)}$ be self-adjoint and positive, and set $\mat P \coloneqq \mat{U}_{\mat S}\mat{U}_{\mat S}^{\trans}$, the orthogonal projection onto $\im(\mat S)$. Define
  \begin{equation*}
    \mat A' \coloneqq \mat{U}_{\mat S}^{\trans}\mat A\mat{U}_{\mat S} \in \reals^{d_b\times d_b}.
  \end{equation*}
  This composition is well-defined since $\mat{U}_{\mat S}$ maps $\reals^{d_b}$ into $\im(\mat S)$. For every $y \in \im(\mat S)$, $\mat A y$ belongs to $\im(\mat S)$, so $\mat P y=y$ and $\mat P\mat A y=\mat A y$. Consequently,
  \begin{equation*}
    \mat{U}_{\mat S}\mat A'\mat{U}_{\mat S}^{\trans}y
    = \mat P\mat A\mat P y
    = \mat A y,
    \qquad \forall y \in \im(\mat S).
  \end{equation*}
  Thus, $\mat{U}_{\mat S}\mat A'\mat{U}_{\mat S}^{\trans}$ is a matrix representation of $\mat A$.

  For any $\vect v,\vect w \in \reals^{d_b}$, self-adjointness of $\mat A$ gives
  \begin{align*}
    \inner{\mat A'\vect v}{\vect w}{\reals^{d_b}}
    = \inner{\mat A\mat{U}_{\mat S}\vect v}{\mat{U}_{\mat S}\vect w}{\reals^d}
    = \inner{\mat{U}_{\mat S}\vect v}{\mat A\mat{U}_{\mat S}\vect w}{\reals^d}
    = \inner{\vect v}{\mat A'\vect w}{\reals^{d_b}}.
  \end{align*}
  Hence $\mat A'$ is symmetric. Positivity of $\mat A$ also yields
  \begin{equation*}
    \inner{\mat A'\vect v}{\vect v}{\reals^{d_b}}
    = \inner{\mat A\mat{U}_{\mat S}\vect v}{\mat{U}_{\mat S}\vect v}{\reals^d}
    \geq 0,
    \qquad \forall \vect v \in \reals^{d_b},
  \end{equation*}
  so $\mat A'$ is positive semi-definite.
\end{proof}

\subsection{Diagonal-based Separable OVKs}
\label{sec:ovk_sep_diag}
A specific choice of the separable OVK is to consider a diagonal matrix as $A'$, i.e., $\ovk(x, x') = \kernel(x, x') \mat{U}_{\mat S} \mat A' \mat{U}_{\mat S}^{\trans}$ with
\begin{equation} \label{eq:task_kernel_diag}
  \mat A' = \diag\left(\lambda_1, \dots, \lambda_{d_b}\right)
\end{equation}
where $\lambda_i > 0$ for all $i \in \intset{d_b}$. We will show here actually this predictor corresponds to the one of a learning algorithm based on $d_b$ independent predictors of kernel ridge regression \citep[KRR,][]{saunders_ridge_1998, evgeniou_regularization_2000}.

When $\mat A'$ is defined as in \Cref{eq:task_kernel_diag}, solving \Cref{eq:ls_vvkrr_sep_v2} equals to solving the following linear system
\begin{equation} \label{eq:ls_krr_indep}
  \frac{T \lambda}{\lambda_i} e_i + \mat K e_i  = \left(\mat Y^{\trans} \mat{U}_{\mat S}\right)_{:,i}  \quad \text{with} \; e_i \in \reals^{T}
\end{equation}
for all $i \in \intset{d_b}$, and then defining $\mat E$ such that $\mat E_{i, :} = \frac{e_i^{\trans}}{\lambda_i}$. We observe that \Cref{eq:ls_krr_indep} actually corresponds to the linear system arising in KRR \citep[see, e.g.,][Equation (4.3)]{evgeniou_regularization_2000}, where the regularization parameter is $\frac{\lambda}{\lambda_i}$ and the training label corresponds to the $i$-th component of the transformed output $y' = \mat{U}_{\mat S}^{\trans} y \in \reals^{d_b}$. 

For a new input $x \in \mathcal X$, let $\vect k_x = (\kernel(x,x_1),\dots,\kernel(x,x_T))^{\trans}$ and denote the corresponding scalar KRR predictions by $\hat g_i(x) = e_i^{\trans}\vect k_x$. Since the $i$-th row of $\mat A'\mat E$ is $e_i^{\trans}$, the RVVKRR prediction is
\begin{equation}
  \hat f(x) = \mat{U}_{\mat S}\mat A'\mat E\vect k_x
  = \mat{U}_{\mat S}
  \begin{bmatrix}
    \hat g_1(x) \\ \vdots \\ \hat g_{d_b}(x)
  \end{bmatrix}.
\end{equation}
Thus, the predictor is obtained by fitting $d_b$ independent KRR predictors to the transformed output coordinates and mapping their predictions back to $\im(\mat S)$ through $\mat{U}_{\mat S}$.

\subsection{Proof of \Cref{lemma:nearest_task_kernel}}
\label{sec:proof_lemma_nearest_task_kernel}
\begin{proof}
  We denote the objective function by $J(\mat A') = \norm{\mat M - \mat{U}_{\mat S} \mat A' \mat{U}_{\mat S}^{\trans}}{\textrm{F}}^2$.
  Since the objective function is continuous and coercive, and the PSD cone $\mathbb{S}^{d_b}_+$ is closed, there exists a minimizer. Moreover, since $J$ is strictly convex, the minimizer is unique. Actually, the minimization problem is a so-called convex quadratic SDP problem. Due to the fact that
  \begin{align}
    J(\mat A') & = \norm{\mat{U}_{\mat S} \mat A' \mat{U}_{\mat S}^{\trans}}{\textrm{F}}^2 - 2 \inner{\mat M}{\mat{U}_{\mat S} \mat A' \mat{U}_{\mat S}^{\trans}}{\textrm{F}} + \norm{\mat M}{\textrm{F}}^2 \\
    & = \norm{\mat A'}{\textrm{F}}^2 - 2 \inner{\mat{U}_{\mat S}^{\trans}\mat M \mat{U}_{\mat S} }{\mat A' }{\textrm{F}} + \norm{\mat M}{\textrm{F}}^2,
    \label{eq:convex_quad_sdp}
  \end{align}
  we have
  \begin{align}
    \argmin_{\mat A' \in \mathbb{S}^{d_b}_+} J(\mat A')  & = \argmin_{\mat A' \in \mathbb{S}^{d_b}_+} \norm{\mat A'}{\textrm{F}}^2 - 2 \inner{\mat{U}_{\mat S}^{\trans}\mat M \mat{U}_{\mat S} }{\mat A' }{\textrm{F}} + \norm{\mat{U}_{\mat S}^{\trans}\mat M \mat{U}_{\mat S}}{\textrm{F}}^2 \\
    & = \argmin_{\mat A' \in \mathbb{S}^{d_b}_+} \norm{\mat A' - \mat{U}_{\mat S}^{\trans}\mat M \mat{U}_{\mat S}}{\textrm{F}}^2
  \end{align}
  whose solution is the nearest PSD matrix of $\mat{U}_{\mat S}^{\trans}\mat M \mat{U}_{\mat S}$ in Frobenius norm. \citet{higham_computing_1988} shows that the solution is obtained by eigenvalue decomposition of $\mat{U}_{\mat S}^{\trans}\mat M \mat{U}_{\mat S}$ and zeroing out negative eigenvalues, thus concluding the proof of the lemma.
\end{proof}

\subsection{Proof of \Cref{corol:nearest_task_kernel}}
\label{app:proof_corol_nearest_task_kernel}
\begin{proof}
  By \Cref{lemma:nearest_task_kernel}, we have $\argmin_{\mat A' \in \mathbb{S}^{d_b}_+} \norm{\mat M - \mat{U}_{\mat S} \mat A' \mat{U}_{\mat S}^{\trans}}{\textrm{F}}^2 = \argmin_{\mat A' \in \mathbb{S}^{d_b}_+} \norm{\mat A' - \mat{U}_{\mat S}^{\trans}\mat M \mat{U}_{\mat S}}{\textrm{F}}^2$.
  Since $\mat M \in \mathbb{S}^{d}_+$, it follows that $\mat{U}_{\mat S}^{\trans}\mat M \mat{U}_{\mat S} \in \mathbb{S}^{d_b}_+$. Consequently, $\mat A'_{\mathrm{opt}} = \mat{U}_{\mat S}^{\trans}\mat M \mat{U}_{\mat S} \in \mathbb{S}^{d_b}_+$. It then follows that 
  \begin{equation}
    \mat A_{\mathrm{opt}} = \mat{U}_{\mat S} \mat A'_{\mathrm{opt}} \mat{U}_{\mat S}^{\trans} = \mat{U}_{\mat S} \mat{U}_{\mat S}^{\trans}\mat M \mat{U}_{\mat S} \mat{U}_{\mat S}^{\trans} = \mat P \mat M \mat P 
  \end{equation}
  where $\mat P \coloneqq \mat{U}_{\mat S} \mat{U}_{\mat S}^{\trans}$ is actually the orthogonal projection onto $\im(\mat S)$.
\end{proof}

\subsection{Existence of Feature Map for the Non-separable OVK Defined in \Cref{example:nonsep_base_kernel}}
\label{app:proof_example_nonsep_base_kernel_feature_map}

\begin{proposition}
  $\widetilde{\ovk}_\rho$ defined in \Cref{example:nonsep_base_kernel} is a valid OVK since it admits a feature operator $\Gamma_\rho$ such that for all $x,x'\in\mathcal X$,
\begin{equation}
  \widetilde{\ovk}_\rho(x,x')=\Gamma_\rho(x)\Gamma_\rho(x')^{\adj},
\end{equation}
where the feature operator $\Gamma_\rho(x):\reals^{q_\rho\times p_\phi}\to\reals^d$ is defined, for all $r\in\intset{d}$, by
\begin{equation}
  [\Gamma_\rho(x)\Theta]_r
  =
  \frac{1}
       {\sqrt{\lambda_r}} \left\langle
    \phi(x_r),\Theta^{\trans}(\mat L_\rho)_{r,:}^{\trans}
  \right\rangle,
  \end{equation} and its adjoint is
\begin{equation}
  \Gamma_\rho(x)^{\adj}v
  =
  \sum_{r=1}^d
  \frac{v_r}{\sqrt{\lambda_r}}
  (\mat L_\rho)_{r,:}^{\trans}\phi(x_r)^{\trans}
  \in\reals^{q_\rho\times p_\phi},
\end{equation}
where $\mat L_\rho\in\reals^{d\times q_\rho}$ is any factor satisfying
$\mat A_\rho=\mat L_\rho\mat L_\rho^{\trans}$.
\end{proposition}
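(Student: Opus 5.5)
The plan is to verify directly that $\Gamma_\rho(x)\Gamma_\rho(x')^{\adj}$ reproduces the entries in \Cref{eq:nonsep_base_kernel}, and then to invoke the standard fact that any kernel of the form $\Gamma(x)\Gamma(x')^{\adj}$ is an OVK. The space $\reals^{q_\rho\times p_\phi}$ is equipped with the Frobenius inner product $\inner{\Theta}{\Theta'}{\textrm{F}}=\trace{\Theta^{\trans}\Theta'}$. A factor $\mat L_\rho$ exists because $\mat A_\rho$ is positive semi-definite: $\mat I_d$ and $\mat B\mat B^{\trans}$ are PSD and $\rho\in[0,1]$, so one can take a Cholesky or eigen factor.

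First I would check the adjoint formula. Write $\ell_r\coloneqq(\mat L_\rho)_{r,:}^{\trans}\in\reals^{q_\rho}$. Then
\[
[\Gamma_\rho(x)\Theta]_r=\lambda_r^{-1/2}\,\phi(x_r)^{\trans}\Theta^{\trans}\ell_r=\lambda_r^{-1/2}\inner{\ell_r\phi(x_r)^{\trans}}{\Theta}{\textrm{F}}.
\]
Hence for $v\in\reals^d$,
\[
\inner{v}{\Gamma_\rho(x)\Theta}{\reals^d}=\Big\langle\sum_r \lambda_r^{-1/2}v_r\,\ell_r\phi(x_r)^{\trans},\Theta\Big\rangle_{\textrm{F}},
\]
which gives exactly the stated adjoint. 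Next I would compose the two maps. Applying $\Gamma_\rho(x)$ to $\Gamma_\rho(x')^{\adj}v=\sum_s\lambda_s^{-1/2}v_s\,\ell_s\phi(x'_s)^{\trans}$, the $r$-th component equals
\[
\sum_s\frac{v_s}{\sqrt{\lambda_r\lambda_s}}\,\phi(x_r)^{\trans}\phi(x'_s)\,\ell_s^{\trans}\ell_r=\sum_s\frac{(\mat A_\rho)_{rs}}{\sqrt{\lambda_r\lambda_s}}\kernel(x_r,x'_s)\,v_s,
\]
using $\ell_r^{\trans}\ell_s=(\mat L_\rho\mat L_\rho^{\trans})_{rs}=(\mat A_\rho)_{rs}$. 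This matches $[\widetilde{\ovk}_\rho(x,x')v]_r$.

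Finally I would establish validity. Self-adjointness follows from $(\Gamma(x)\Gamma(x')^{\adj})^{\adj}=\Gamma(x')\Gamma(x)^{\adj}$. For positivity, note that
\[
\sum_{i,j}\inner{\widetilde{\ovk}_\rho(x_i,x_j)y_j}{y_i}{}=\Big\|\sum_i\Gamma_\rho(x_i)^{\adj}y_i\Big\|_{\textrm{F}}^2\ge0.
\]
This is stronger than the scalar-coefficient condition in the paper's definition: setting $y_i=a_iy$ recovers that condition. No step is a real obstacle. The only points needing care are bookkeeping: keeping the transpose and ordering conventions in $\Theta^{\trans}\ell_r$ consistent, and confirming that the result does not depend on which factor $\mat L_\rho$ is chosen, which is clear because only $\mat L_\rho\mat L_\rho^{\trans}$ enters the final expression.
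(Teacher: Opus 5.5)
Your proposal is correct and follows the same route as the paper: obtain $\mat A_\rho\succeq 0$ and hence a factor $\mat L_\rho$, derive the adjoint by writing $[\Gamma_\rho(x)\Theta]_r$ as a Frobenius inner product, and check $\Gamma_\rho(x)\Gamma_\rho(x')^{\adj}=\widetilde{\ovk}_\rho(x,x')$ entrywise using $(\mat L_\rho\mat L_\rho^{\trans})_{rs}=(\mat A_\rho)_{rs}$. The only difference is that you verify self-adjointness and positivity directly from the factorization, whereas the paper concludes validity from the existence of the feature operator without spelling out this step.
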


\begin{proof}
  First, $\mat A_\rho$ is positive semidefinite. Indeed, for every
  $v\in\reals^d$,
  \begin{equation}
    v^{\trans}\mat A_\rho v
    =(1-\rho)\norm{v}{\reals^d}^2
      +\rho\norm{\mat B^{\trans}v}{\reals^q}^2
    \geq 0.
  \end{equation}
  It therefore admits a factorization
  $\mat A_\rho=\mat L_\rho\mat L_\rho^{\trans}$, where we may take
  $q_\rho=\rank(\mat A_\rho)$ and
  $\mat L_\rho\in\reals^{d\times q_\rho}$.

  Equip the feature space
  $\mathcal F_\rho=\reals^{q_\rho\times p_\phi}$ with the Frobenius inner
  product.  For $x\in\mathcal X$, define the linear operator
  $\Gamma_\rho(x):\mathcal F_\rho\to\reals^d$ as 
  \begin{equation}
  [\Gamma_\rho(x)\Theta]_r
  =
  \frac{1}
       {\sqrt{\lambda_r}} \left\langle
    \phi(x_r),\Theta^{\trans}(\mat L_\rho)_{r,:}^{\trans}
  \right\rangle.
  \label{eq:nonsep_feature_operator}
\end{equation}
For every
  $\Theta\in\mathcal F_\rho$ and $v\in\reals^d$, we have
  \begin{align}
    \inner{\Gamma_\rho(x)\Theta}{v}{\reals^d}
    &=\sum_{r=1}^d
      \frac{v_r}{\sqrt{\lambda_r}}
      \inner{\phi(x_r)}
            {\Theta^{\trans}(\mat L_\rho)_{r,:}^{\trans}}
            {\reals^{p_\phi}}\\
    &=\inner{\Theta}{
      \sum_{r=1}^d\frac{v_r}{\sqrt{\lambda_r}}
      (\mat L_\rho)_{r,:}^{\trans}\phi(x_r)^{\trans}}
      {\mathcal F_\rho}.
  \end{align}
  Hence its adjoint is precisely the following operator:
  \begin{equation}
  \Gamma_\rho(x)^{\adj}v
  =
  \sum_{r=1}^d
  \frac{v_r}{\sqrt{\lambda_r}}
  (\mat L_\rho)_{r,:}^{\trans}\phi(x_r)^{\trans}
  \in\reals^{q_\rho\times p_\phi}.
  \label{eq:nonsep_feature_adjoint}
\end{equation}

  It remains to verify the factorization of the kernel.  For any
  $x,x'\in\mathcal X$, $v\in\reals^d$, and $r\in\intset{d}$,
  \begin{align}
    [\Gamma_\rho(x)\Gamma_\rho(x')^{\adj}v]_r
    &=\frac{1}{\sqrt{\lambda_r}}
      \inner{\phi(x_r)}{
        \left(
        \sum_{s=1}^d\frac{v_s}{\sqrt{\lambda_s}}
        (\mat L_\rho)_{s,:}^{\trans}\phi(x'_s)^{\trans}
        \right)^{\trans}
        (\mat L_\rho)_{r,:}^{\trans}}
        {\reals^{p_\phi}}\\
    &=\sum_{s=1}^d
      \frac{(\mat L_\rho)_{r,:}(\mat L_\rho)_{s,:}^{\trans}}
           {\sqrt{\lambda_r\lambda_s}}
      \inner{\phi(x_r)}{\phi(x'_s)}{\reals^{p_\phi}}v_s\\
    &=\sum_{s=1}^d
      \frac{(\mat A_\rho)_{rs}}
           {\sqrt{\lambda_r\lambda_s}}
      \kernel(x_r,x'_s)v_s
     =[\widetilde{\ovk}_\rho(x,x')v]_r.
  \end{align}
  Since this identity holds for every $v\in\reals^d$, it follows that
  \begin{equation}
    \widetilde{\ovk}_\rho(x,x')
    =\Gamma_\rho(x)\Gamma_\rho(x')^{\adj}.
  \end{equation}
  Thus $\widetilde{\ovk}_\rho$ is a valid operator-valued kernel, and
  $\Gamma_\rho$ is one of its feature operators.
\end{proof}

\subsection{Proof of the Equivalence of the Two Learning Formulations}
\label{app:proof_prop_same_estimator_sg}

The following lemma summarizes the relationship between the RKHS associated to $\ovk_{\mat S \mat G}$ and the one associated to $\widetilde{\ovk}$.
\begin{lemma} \label{lemma:compose_ovk}
  Let $\widetilde{\ovk}$ be a $\lineop{\reals^{d}}$-valued kernel, and let $\mathcal H_{\widetilde{\ovk}}$ be its associated RKHS. Let $\mat S\mat G: \reals^d \to \reals^d$ be a bounded operator. Define
  \begin{equation}
    \mathcal{H}_{\widetilde{\ovk}, \mat S\mat G} \coloneqq \set{\mat S\mat G \circ h \mid h \in \mathcal H_{\widetilde{\ovk}}} \subset (\im(\mat S))^{\mathcal{X}},
  \end{equation}
  then $\mathcal{H}_{\widetilde{\ovk}, \mat S\mat G}$ is the RKHS associated to the operator-valued kernel $\ovk_{\mat S \mat G}$ defined in \Cref{eq:ovk_compose}. Moreover,
  \begin{equation}
    \norm{f}{\mathcal{H}_{\widetilde{\ovk}, \mat S\mat G}} = \min\set{\norm{h}{\mathcal H_{\widetilde{\ovk}}} \mid h \in \mathcal H_{\widetilde{\ovk}},  \mat S\mat G \circ h = f}.
  \end{equation}
\end{lemma}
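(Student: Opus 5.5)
We prove \Cref{lemma:compose_ovk} by the standard pullback argument for RKHSs under a bounded linear map, written here for $\mat S\mat G$; the argument follows the classical treatment in \citet{carmeli_vector_2006}. Write $\op P \coloneqq \mat S\mat G$ and define the linear map $\Phi: \mathcal H_{\widetilde{\ovk}} \to (\im(\mat S))^{\mathcal X}$ by $\Phi h = \op P\circ h$. It takes values in $\im(\mat S)$ because $\im(\op P)\subset\im(\mat S)$, and $\mathcal{H}_{\widetilde{\ovk}, \mat S\mat G} = \im(\Phi)$ by definition.

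\textbf{Step 1: $\kermap(\Phi)$ is closed.} If $\op P h = 0$ pointwise, then for every $x\in\mathcal X$ and $y\in\reals^d$,
\[
\inner{h}{\widetilde{\ovk}_x \op P^{\trans} y}{\mathcal H_{\widetilde{\ovk}}} = \inner{\op P h(x)}{y}{\reals^d} = 0 .
\]
Hence $\kermap(\Phi) = \set{\widetilde{\ovk}_x \op P^{\trans} y : x\in\mathcal X,\ y\in\reals^d}^{\perp}$. Being an orthogonal complement, it is closed.

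\textbf{Step 2: Hilbert structure on the image.} Since $\Phi$ restricted to $\kermap(\Phi)^\perp$ is injective, define
\[
\norm{f}{\mathcal{H}_{\widetilde{\ovk}, \mat S\mat G}} \coloneqq \norm{h_0}{\mathcal H_{\widetilde{\ovk}}},
\]
where $h_0$ is the unique preimage of $f$ in $\kermap(\Phi)^\perp$. With this norm, $\Phi$ is an isometry from the Hilbert space $\kermap(\Phi)^\perp$ onto $\mathcal{H}_{\widetilde{\ovk}, \mat S\mat G}$, so the latter is a Hilbert space. Every other preimage has the form $h = h_0 + k$ with $k\in\kermap(\Phi)$, and orthogonality gives $\norm{h}{}^2 = \norm{h_0}{}^2 + \norm{k}{}^2$. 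Therefore $h_0$ attains the minimum, which yields the claimed minimal-norm formula.

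\textbf{Step 3: Reproducing kernel.} Evaluation is bounded, since
\[
\norm{f(x)}{} = \norm{\op P h_0(x)}{} \le \norm{\op P}{\mathrm{op}} M_x \norm{f}{\mathcal{H}_{\widetilde{\ovk}, \mat S\mat G}},
\]
so the space is an $\im(\mat S)$-valued RKHS. To identify its kernel, fix $x$ and $y\in\im(\mat S)$ and set $g \coloneqq \widetilde{\ovk}_x \op P^{\trans} y$. By Step 1, $g\in\kermap(\Phi)^\perp$. Moreover,
\[
\Phi g(x') = \op P\widetilde{\ovk}(x',x)\op P^{\trans} y = \ovk_{\mat S\mat G}(x',x)\,y .
\]
For $f = \Phi h_0$, Step 2 gives $\inner{f}{\Phi g}{} = \inner{h_0}{g}{}$ because both $h_0$ and $g$ lie in $\kermap(\Phi)^\perp$. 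This equals $\inner{\op P h_0(x)}{y}{} = \inner{f(x)}{y}{}$. This is exactly the reproducing property of $\ovk_{\mat S\mat G}$, and uniqueness of the RKHS attached to a kernel \citep[Proposition 2.1]{carmeli_vector_2006} identifies $\mathcal{H}_{\widetilde{\ovk}, \mat S\mat G}$ as the RKHS of $\ovk_{\mat S\mat G}$.

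\textbf{Main subtlety.} The delicate point is that the kernel $\ovk_{\mat S\mat G}$ acts on $\im(\mat S)$, not on $\reals^d$. We only test the reproducing property with $y\in\im(\mat S)$, and we need $\ovk_{\mat S\mat G}(x',x)$ to map $\im(\mat S)$ into itself, which holds because its range lies in $\im(\op P)\subset\im(\mat S)$. We must also confirm that $\ovk_{\mat S\mat G}$ is self-adjoint and positive as an operator on $\im(\mat S)$. This follows directly from the same properties of $\widetilde{\ovk}$, conjugated by $\op P$ and $\op P^{\trans}$.
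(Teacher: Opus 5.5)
Your proof is correct and follows the paper's argument: the paper uses the same map $W(h)=\mat S\mat G\circ h$ and gets both the kernel identification and the minimal-norm formula by citing Propositions 6 and 1 of \citet{carmeli2010vector}, i.e.\ that $W$ is a partial isometry from $\mathcal H_{\widetilde{\ovk}}$ onto the RKHS of $\ovk_{\mat S\mat G}$. You prove those cited facts directly, through the closed kernel $\kermap(\Phi)$, the norm transported from $\kermap(\Phi)^{\perp}$, and the reproducing identity for $\widetilde{\ovk}_x(\mat S\mat G)^{\trans}y$, which makes your version self-contained.
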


\begin{proof}
  Define the operator $\op W: \mathcal{H}_{\widetilde{\ovk}} \to \mathcal{H}_{\widetilde{\ovk}, \mat S\mat G}$ such that $\op W(h) = \mat S \mat G \circ h$.
  Let $w = \mat S \mat G$, then the first part of the lemma can be proved by directly applying Proposition 6 of \citet{carmeli2010vector}. Moreover, by Proposition 1 of \citet{carmeli2010vector}, we know that $\op W$ is a partial isometry from $\mathcal H_{\widetilde{\ovk}}$ onto $\mathcal{H}_{\widetilde{\ovk}, \mat S\mat G}$, which proves the second part of the lemma.
\end{proof}

The following lemma establishes the existence and uniqueness of the minimizer $\hat{h}$ of Problem \eqref{eq:rvvkrr_fixed} and shows that it admits a finite kernel expansion.
\begin{lemma}[Representation Theorem] \label{lemma:rep_theo}
  Problem \eqref{eq:rvvkrr_fixed} admits a unique minimizer $\hat{h}$, which is of the form $\hat{h} =  \sum_{t=1}^{T} \widetilde{\ovk}_{x_t} c_t$  where $c_t \in \reals^d$ for all $t \in \intset{T}$.
\end{lemma}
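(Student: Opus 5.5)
The plan is the standard argument for regularized least squares in an RKHS, carried out for the objective of Problem \eqref{eq:rvvkrr_fixed}, which we denote $J(h)=\frac{1}{T}\sum_{t=1}^T\norm{SGh(x_t)-y_t}{\reals^d}^2+\lambda\norm{h}{\mathcal H_{\widetilde{\ovk}}}^2$. We obtain existence and uniqueness from strict convexity and coercivity of $J$ on the Hilbert space $\mathcal H_{\widetilde{\ovk}}$, and the finite expansion from an orthogonal decomposition combined with the reproducing property $h(x)=\widetilde{\ovk}_x^{\adj}h$.

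\emph{Step 1 (regularity of $J$).} For each $t$, the map $h\mapsto SGh(x_t)=SG\widetilde{\ovk}_{x_t}^{\adj}h$ is a bounded linear operator from $\mathcal H_{\widetilde{\ovk}}$ to $\reals^d$. This is because $\widetilde{\ovk}_{x_t}^{\adj}$ is bounded by the definition of a vv-RKHS and $SG$ is a matrix. Each data-fit term is therefore a continuous convex quadratic function of $h$. The term $\lambda\norm{h}{\mathcal H_{\widetilde{\ovk}}}^2$ with $\lambda>0$ is strongly convex. Hence $J$ is continuous and $\lambda$-strongly convex, and it satisfies $J(h)\ge\lambda\norm{h}{\mathcal H_{\widetilde{\ovk}}}^2$, so it is coercive.

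\emph{Step 2 (existence and uniqueness).} A continuous, coercive, strongly convex functional on a Hilbert space has a unique minimizer. Existence follows from weak lower semicontinuity, since $J$ is convex and continuous, together with the weak compactness of bounded sets. Uniqueness follows from strict convexity. Alternatively, one can write the first-order condition explicitly, $\big(\frac{1}{T}\sum_t \widetilde{\ovk}_{x_t}G^{\trans}S^{\trans}SG\widetilde{\ovk}_{x_t}^{\adj}+\lambda\idop\big)h=\frac{1}{T}\sum_t\widetilde{\ovk}_{x_t}G^{\trans}S^{\trans}y_t$, and note that the operator on the left is positive plus $\lambda\idop$, hence invertible.

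\emph{Step 3 (representer form).} Let $V=\set{\sum_{t=1}^T\widetilde{\ovk}_{x_t}c_t\mid c_t\in\reals^d}$. This subspace is finite-dimensional and therefore closed. Decompose any $h$ as $h=h_V+h_\perp$ with $h_\perp\perp V$. For every $c\in\reals^d$ we have $\inner{h_\perp(x_t)}{c}{\reals^d}=\inner{h_\perp}{\widetilde{\ovk}_{x_t}c}{\mathcal H_{\widetilde{\ovk}}}=0$, so $h_\perp(x_t)=0$ and the data-fit term depends only on $h_V$. The norm term satisfies $\norm{h}{}^2=\norm{h_V}{}^2+\norm{h_\perp}{}^2$, so $J(h_V)\le J(h)$, with strict inequality unless $h_\perp=0$. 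The unique minimizer therefore lies in $V$, which gives $\hat h=\sum_t\widetilde{\ovk}_{x_t}c_t$. The same conclusion can be read off the first-order equation in Step 2: it gives $\hat h=\frac{1}{T\lambda}\sum_t\widetilde{\ovk}_{x_t}G^{\trans}S^{\trans}\big(y_t-SG\hat h(x_t)\big)$, which also yields explicit coefficients $c_t$.

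There is no genuine obstacle in this argument. The only point needing care is justifying existence in a possibly infinite-dimensional $\mathcal H_{\widetilde{\ovk}}$, and the weak-compactness argument in Step 2 handles it. One can avoid that argument altogether by restricting first to the finite-dimensional subspace $V$ using Step 3 and minimizing there.
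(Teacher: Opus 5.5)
Your proposal is correct and follows essentially the same route as the paper. The paper also decomposes $h$ orthogonally against the finite-dimensional span of the $\widetilde{\ovk}_{x_t}c_t$, uses the reproducing property to show that the orthogonal part vanishes at every $x_t$, and gets existence and uniqueness from continuity, coercivity and strict convexity on the Hilbert space (citing Peypouquet, Theorem~2.19); your first-order-condition remark, which gives explicit coefficients $c_t$, is a harmless addition.
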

\begin{proof}
  Let $\mathcal{U} = \set{h\mid h = \sum_{t=1}^T \widetilde{\ovk}_{x_t} c_t, (c_t)_{t=1}^T \in (\reals^d)^T}$ be a subspace of $\mathcal H_{\widetilde{\ovk}}$. Since $\mathcal{U}$ is finite-dimensional and hence closed, we can write $\mathcal H_{\widetilde{\ovk}} = \mathcal{U} \oplus \mathcal{U}^{\bot}$ where $\mathcal{U}^{\bot}$ denotes the orthogonal complement of $\mathcal{U}$. Any $h \in \mathcal H_{\widetilde{\ovk}}$ can be decomposed as $h = h_{\mathcal{U}} + h_{\mathcal{U}^{\bot}}$ with $h_{\mathcal{U}} \in \mathcal{U}$ and $h_{\mathcal{U}^{\bot}} \in \mathcal{U}^{\bot}$. Then by the reproducing property, we have for all $t \in \intset{T}$,
  \begin{equation}
    \inner{h_{\mathcal{U}^{\bot}}(x_t)}{y}{\reals^d} = \inner{\widetilde{\ovk}_{x_t}^{\adj} h_{\mathcal{U}^{\bot}}}{y}{\reals^d} = \inner{h_{\mathcal{U}^{\bot}}}{\widetilde{\ovk}_{x_t}y}{\mathcal H_{\widetilde{\ovk}}} = 0, \quad \forall y \in \reals^d.
  \end{equation}
  As a result, $h_{\mathcal{U}^{\bot}}(x_t) = 0$ for all $t \in \intset{T}$. For the first part of the objective function, we have
  \begin{equation}
    \norm{SGh(x_t) - y_t}{\reals^d}^2 = \norm{SG\left(h_{\mathcal{U}^{\bot}}(x_t) + h_{\mathcal{U}}(x_t)\right) - y_t}{\reals^d}^2 = \norm{SGh_{\mathcal{U}}(x_t) - y_t}{\reals^d}^2.
  \end{equation}
  For the regularization term, we have
  \begin{equation}
    \norm{h}{\mathcal H_{\widetilde{\ovk}}}^2 = \norm{h_{\mathcal{U}} + h_{\mathcal{U}^{\bot}}}{\mathcal H_{\widetilde{\ovk}}}^2 = \norm{h_{\mathcal{U}}}{\mathcal H_{\widetilde{\ovk}}}^2 + \norm{h_{\mathcal{U}^{\bot}}}{\mathcal H_{\widetilde{\ovk}}}^2 \geq \norm{h_{\mathcal{U}}}{\mathcal H_{\widetilde{\ovk}}}^2.
  \end{equation}
  Hence, if there exists a minimizer $\hat{h} \in \mathcal H_{\widetilde{\ovk}}$, we know that $\hat{h} \in \mathcal{U}$.
  
  Recall that our objective function is $L(h) = \frac{1}{T} \sum_{t=1}^T \norm{SG \widetilde{\ovk}_{x_t}^{\adj}h - y_t}{\reals^d}^2 + \lambda \norm{h}{\mathcal H_{\widetilde{\ovk}}}^2$. Since $\mathcal H_{\widetilde{\ovk}}$ is a reflexive Banach space, and $L:\mathcal H_{\widetilde{\ovk}} \to \reals$ is continuous, coercive, and strictly convex, we conclude that the minimum can be reached, and it admits a unique minimizer \citep[see, e.g., Theorem 2.19 of ][]{peypouquet_convex_2015}.
\end{proof}

\begin{proof}
  Denoting $L(f) \coloneqq \frac{1}{T} \sum_{t=1}^T \norm{f(x_t) - y_t}{\reals^d}^2 + \lambda \norm{f}{\mathcal{H}_{\widetilde{\ovk}, \mat S\mat G}}^2$, we want to prove that for all $f\in \mathcal{H}_{\widetilde{\ovk}, \mat S\mat G}$, $L(\hat{f}) \leq L(f)$. By \Cref{lemma:compose_ovk} and the fact that $\hat{f} = \mat S \mat G \circ \hat{h}$, we have 
  \begin{align*}
    L(\hat{f}) & = \frac{1}{T} \sum_{t=1}^T \norm{\mat S \mat G\hat{h}(x_t) - y_t}{\reals^d}^2 + \lambda \min\set{\norm{h}{\mathcal H_{\widetilde{\ovk}}}^2 \mid h \in \mathcal H_{\widetilde{\ovk}},  \mat S\mat G \circ h = \hat{f}} \\
    & \leq \frac{1}{T} \sum_{t=1}^T \norm{\mat S \mat G  \hat{h}(x_t) - y_t}{\reals^d}^2 + \lambda \norm{\hat{h}}{\mathcal H_{\widetilde{\ovk}}}^2.
  \end{align*}
  Since $\hat{h} = \argmin_{h\in \mathcal H_{\widetilde{\ovk}}} \frac{1}{T} \sum_{t=1}^T \norm{SG h(x_t) - y_t}{\reals^d}^2 + \lambda \norm{h}{\mathcal H_{\widetilde{\ovk}}}^2$, it follows that 
  \begin{align} \label{eq:ineq_two_minimizer}
    L(\hat{f}) \leq \frac{1}{T} \sum_{t=1}^T \norm{\mat S \mat G  h(x_t) - y_t}{\reals^d}^2 + \lambda \norm{h}{\mathcal H_{\widetilde{\ovk}}}^2, \quad \forall h \in \mathcal H_{\widetilde{\ovk}}.
  \end{align}
  Let $f \in \mathcal{H}_{\widetilde{\ovk}, \mat S\mat G}$, and let 
  \begin{equation*}
    \tilde{h} \coloneqq \argmin_{h \in \mathcal H_{\widetilde{\ovk}},  \mat S\mat G \circ h = f}\norm{h}{\mathcal H_{\widetilde{\ovk}}}^2.
  \end{equation*}
  Since the inequality described by \Cref{eq:ineq_two_minimizer} is valid for all $h \in \mathcal H_{\widetilde{\ovk}}$, we have then
  \begin{align*}
    L(\hat{f}) & \leq \frac{1}{T} \sum_{t=1}^T \norm{\mat S \mat G  \tilde{h} (x_t) - y_t}{\reals^d}^2 + \lambda \norm{\tilde{h} }{\mathcal H_{\widetilde{\ovk}}}^2 \\
    & = \frac{1}{T} \sum_{t=1}^T  \norm{f (x_t) - y_t}{\reals^d}^2  + \lambda \min\set{\norm{h}{\mathcal H_{\widetilde{\ovk}}}^2 \mid h \in \mathcal H_{\widetilde{\ovk}},  \mat S\mat G \circ h = f} \\
    & = L(f).
  \end{align*}
  The proof is thus concluded.
\end{proof}

\subsection{Link with WeaKL Estimators}
\label{app:weakl}
The general estimator proposed in \citet[Section 2]{doumeche_forecasting_2025} tries to solve the following optimization problem:
  \begin{equation}
    \min_{\theta \in \reals^{d_1}} \frac{1}{T} \sum_{t=1}^{T} \ell(\Phi(x_t) \theta, y_t) + \norm{\mat M \theta}{\reals^{d_2}}^2
  \end{equation}
  where, for each $i\in\intset{d}$, $\phi_i:\mathcal X\to\reals^{p_i}$ is a feature map with $p_i\in\intsplus$, and $d_1=\sum_{i=1}^d p_i$. Since in \citet[Proposition 2.1]{doumeche_forecasting_2025}, $\mat M$ is assumed to be injective, hence the inverse of $\mat M^{\trans} \mat M$ is well-defined. For any $x\in\mathcal X$, the feature matrix is defined as
  \begin{equation}  \Phi(x) \coloneqq
    \begin{bmatrix}
     \phi_1(x)^{\trans} & 0 & 0 \\
     0 & \ddots & 0 \\
     0 & 0 & \phi_{d}(x)^{\trans}
    \end{bmatrix} \in \reals^{d \times d_1}
  \end{equation}
  Here, $M \in \reals^{d_{2} \times d_1}$ is a regularization penalty matrix, and $\ell: \reals^d \times \reals^d \to \realsplus$ is a loss function defined as $\ell (y, y') = \norm{\Lambda (y - y')}{\reals^{d_3}}^2$ with $\Lambda \in \reals^{d_3 \times d}$. Since $\norm{\mat M \theta}{\reals^{d_2}}^2 = \inner{\mat M^{\trans} \mat M \theta}{\theta}{\reals^{d_1}} =  \norm{(\mat M^{\trans} \mat M)^{1/2} \theta}{\reals^{d_1}}^2$, denoting $w \coloneqq (\mat M^{\trans} \mat M)^{1/2} \theta \in \reals^{d_1}$, then the above optimization problem can be rewritten as
  \begin{equation}
    \min_{w \in \reals^{d_1}} \frac{1}{T} \sum_{t=1}^{T} \ell\left(\Phi(x_t) (\mat M^{\trans} \mat M)^{-1/2} w, y_t\right) + \norm{w}{\reals^{d_1}}^2.
  \end{equation}
  This problem is equivalent to the following vector-valued regression problem:
  \begin{equation} \label{eq:vvkrr_gen_loss}
    \min_{f \in \vvrkhs} \frac{1}{T} \sum_{t=1}^{T} \ell\left(f(x_t), y_t\right) + \norm{f}{\vvrkhs}^2.
  \end{equation}
  with the operator-valued kernel defined as, for all $(x,x') \in \mathcal{X}^2$,
  \begin{equation}
    \ovk(x,x') = \Phi(x) (\mat M^{\trans} \mat M)^{-1} \Phi(x')^{\trans} \in \reals^{d\times d}.
  \end{equation}

  To justify this equivalence, set $\Psi(x)=\Phi(x)(\mat M^{\trans}\mat M)^{-1/2}$, so that $\ovk(x,x')=\Psi(x)\Psi(x')^{\trans}$. By Proposition 1 of \citet{carmeli2010vector}, the associated RKHS consists of the functions $f_w=\Psi(\cdot)w$, with norm
  \begin{equation}
    \norm{f}{\vvrkhs}^2
    =\min\set{\norm{w}{\reals^{d_1}}^2\mid w\in\reals^{d_1},\ \Psi(x)w=f(x)\ \text{for all }x\in\mathcal X}.
  \end{equation}
  Since the empirical loss depends on $w$ only through $f_w$, minimizing the parameter objective first over all representations of each function yields exactly \Cref{eq:vvkrr_gen_loss}.
  
  The WeaKL-BU estimator proposed in \citet[Section 4.1]{doumeche_forecasting_2025} learns parameters only for the bottom-level series. Let $\theta_b\in\reals^{p_b}$ collect these parameters, let $\Phi_b(x)\in\reals^{d_b\times p_b}$ be their block-diagonal feature matrix, and let $\mat M_b\in\reals^{q_b\times p_b}$ be an injective penalty matrix. The predictor for all hierarchy levels is $f_{\theta_b}(x)=\mat S\Phi_b(x)\theta_b$, with penalty $\norm{\mat M_b\theta_b}{\reals^{q_b}}^2$. Applying the preceding construction to this parameterization shows that the resulting predictor is the minimizer of \Cref{eq:vvkrr_gen_loss} with the OVK
  \begin{equation}
    \ovk_{\mathrm{BU}}(x,x') = \mat S\Phi_b(x)(\mat M_b^{\trans}\mat M_b)^{-1}\Phi_b(x')^{\trans}\mat S^{\trans} \in \reals^{d\times d}.
  \end{equation}
  Similarly, the WeaKL-T estimator adds transfer learning constraints to the same bottom-level parameterization. We use the notation $J$, $D$, $\Pi_J$, and $P_J$ from \citet[Section 4.1]{doumeche_forecasting_2025}, to which we refer for their definitions. In particular, $P_J$ is the orthogonal projector onto the parameter-sharing subspace, so $(\mat I_{D|J|}-P_J)^{\trans}(\mat I_{D|J|}-P_J)=\mat I_{D|J|}-P_J$. The estimator corresponds to the minimizer with the OVK
  \begin{equation}
    \ovk_{\mathrm{T}}(x,x') = \mat S\Phi_b(x)\left(\mu\Pi_J^{\trans}(\mat I_{D|J|}-P_J)\Pi_J+\mat M_b^{\trans}\mat M_b\right)^{-1}\Phi_b(x')^{\trans}\mat S^{\trans} \in \reals^{d\times d},
  \end{equation}
  with $\mu \geq 0$ since $\mu\norms{(\mat I_{D|J|}-P_J)\Pi_J\theta_b}^2+\norms{\mat M_b\theta_b}^2=\norm{(\mu\Pi_J^{\trans}(\mat I_{D|J|}-P_J)\Pi_J+\mat M_b^{\trans}\mat M_b)^{1/2}\theta_b}{\reals^{p_b}}^2$.
  
  To express both kernels in the form of \Cref{eq:ovk_compose}, let $\mat G_{\mathrm{BU}}=[0_{d_b\times d_a}\;\;\mat I_{d_b}]$ and embed the bottom-level feature matrix into the full output space as $\overline{\Phi}(x)=\mat G_{\mathrm{BU}}^{\trans}\Phi_b(x)\in\reals^{d\times p_b}$. For either penalty matrix $\mat Q=\mat M_b^{\trans}\mat M_b$ or $\mat Q=\mu\Pi_J^{\trans}(\mat I_{D|J|}-P_J)\Pi_J+\mat M_b^{\trans}\mat M_b$, define the base OVK $\widetilde{\ovk}(x,x')=\overline{\Phi}(x)\mat Q^{-1}\overline{\Phi}(x')^{\trans}$. Since $\mat G_{\mathrm{BU}}\mat G_{\mathrm{BU}}^{\trans}=\mat I_{d_b}$, composing this base kernel with $\mat S\mat G_{\mathrm{BU}}$ yields the corresponding kernel above. Thus, both are special cases of \Cref{eq:ovk_compose}.

  However, it should be noted that the WeaKL estimators use a weighted loss function and allow complex-valued features, which can be easily extended into our RVVKRR framework.

\subsection{Computation Methods for Solving the Linear System in \Cref{eq:ls_vvkrr}}
\label{app:compu_vvkrr}
Now we describe the computational methods for solving the linear system in \Cref{eq:ls_vvkrr}. We will first consider the case of separable OVKs, which is more common in practice, and then discuss the non-separable case.

\subsubsection{Separable OVK Case}
In the case of a separable operator-valued kernel, i.e., $\ovk(x, x') = \kernel(x, x') \mat A$ where $\kernel: \mathcal X \times \mathcal X \to \reals$ is a scalar-valued kernel defined on the input space and the task kernel $\mat A \in \lineop{\im(\mat S)} \subset \reals^{d \times d}$ is self-adjoint and positive, \Cref{eq:ls_vvkrr} can be expressed in the following matrix form:
\begin{equation}
  T \lambda \mat C + \mat A \mat C \mat K = \mat{Y},
\label{eq:ls_vvkrr_sep}
\end{equation}
with  $\mat{Y} = \begin{bmatrix} y_1 & y_2 & \cdots  & y_T \end{bmatrix}\in \reals^{d\times T}$, $\mat{C} = \begin{bmatrix} c_1 & c_2 & \cdots  & c_T  \end{bmatrix}\in  \im(\mat S)^{T}$ and $\mat K = \left(\kernel(x_i, x_j)\right) \in \reals^{T\times T}$. As a result of \Cref{lemma:ovk_image_s}, \Cref{eq:ls_vvkrr_sep} becomes
\begin{equation} \label{eq:ls_vvkrr_sep_v2}
  T\lambda \mat E + \mat A' \mat E \mat K = \mat{U}_{\mat S}^{\trans} \mat Y.
\end{equation}
with $E = \mat{U}_{\mat S}^{\trans}\mat C \in \reals^{d_b \times T}$, which is a discrete-time Sylvester equation and can therefore be solved with the Hessenberg-Schur method \citep{golub_hessenberg-schur_1979, sima_algorithms_1996}, with a time complexity $\mathcal{O}(T^3+d_b^3)$ and a space complexity $\mathcal{O}(T^2+d_b^2)$\footnote{For example, we can use the implementation \textsc{SB04QD} provided by the SLICOT library (\url{https://www.slicot.org/}) and the corresponding python wrapper Slycot (\url{https://github.com/python-control/Slycot}). 
}. For a discrete-time Sylvester equation, the solution is unique if and only if $\sigma_i \mu_j \neq -1$ for all the eigenvalues $\sigma_i$ of $\mat A'$ and all the eigenvalues $\mu_j$ of $\mat K$ \citep[Section 6]{simincini_computa_2016}, which is satisfied here since $\mat A'$ and $\mat K$ are both positive semi-definite. Then, the prediction for a given input $x \in \mathcal{X}$ is given by:
  $\hat{f}(x) = \sum_{t=1}^T  \mat{U}_{\mat S} \mat A' \mat{U}_{\mat S}^{\trans} \kernel(x, x_t) c_t  = \mat{U}_{\mat S} \mat A' \mat E \vect k_x $
with $\vect{k}_x = \begin{bmatrix}
    k(x, x_1)  & \cdots & k(x, x_T) 
\end{bmatrix}^\trans  \in \reals^T$. 

\begin{remark}
    \Cref{eq:ls_vvkrr_sep_v2} provides another interpretation of our method. We first represent the training outputs as $y' = \mat{U}_{\mat S} y \in \reals^{d_b}$ in the new orthonormal basis $(u_1, \dots, u_{d_b})$ of $\im(S)$ .Using this transformed dataset, we then perform the VVKRR with a separable OVK, where $\mat A \in \mathbb{S}^{d_b}_+$.
\end{remark}
\subsubsection{Non-separable OVK Case}
In the case of a non-separable OVK, the Sylvester formulation in \Cref{eq:ls_vvkrr} is no longer applicable. A different computational route is available whenever the OVK admits an explicit finite-dimensional feature factorization. Suppose that, for some $m\in\intsplus$, there exists a feature operator
$\Gamma(x):\reals^m\longrightarrow\im(\mat S)$ such that
$\ovk(x,x')=\Gamma(x)\Gamma(x')^{\adj}$.
By Proposition 2.2 of \citet{carmeli_vector_2006}, the RKHS $\mathcal H_\ovk$ is then the set of all functions $f_\theta(x)=\Gamma(x)\theta$ with $\theta\in\reals^m$, and the RKHS norm is defined as $\norm{f_\theta}{\mathcal H_\ovk}
  =\min\set{\norm{\theta'}{\reals^m}\mid f_{\theta'}=f_\theta}$.
It follows that the RVVKRR defined in \Cref{eq:vvkrr_optim} can be obtained by solving the following finite-dimensional optimization problem:
\begin{equation}
  \widehat\theta
  =\argmin_{\theta\in\reals^m}
  \frac{1}{T}\sum_{t=1}^T
  \norm{\Gamma(x_t)\theta-y_t}{\reals^d}^2
  +\lambda\norm{\theta}{\reals^m}^2.
  \label{eq:nonsep_primal_objective}
\end{equation}
The first-order optimality condition gives the symmetric positive-definite linear system
\begin{equation}
  \left(
    \sum_{t=1}^T\Gamma(x_t)^{\adj}\Gamma(x_t)
    +T\lambda\mat I_m
  \right)\widehat\theta
  =
  \sum_{t=1}^T\Gamma(x_t)^{\adj}y_t.
  \label{eq:nonsep_primal_normal_equations}
\end{equation}
Although this system could be solved by explicitly forming and factorizing its $m\times m$ coefficient matrix, doing so can be prohibitive. We instead apply the preconditioned conjugate gradient method \citep{hestenes_methods_1952,cutajar_preconditioning_2016} in matrix-free form. Each iteration only requires evaluations of $\Gamma(x_t)$ and its adjoint; the matrix $\sum_{t=1}^T\Gamma(x_t)^{\adj}\Gamma(x_t)+T\lambda\mat I_m$ need not be materialized. In the OVK literature, this strategy has previously been used to solve large-scale discrete-time Sylvester equations induced by the separable matrix-valued kernel regression \citep{sindhwani_scalable_2013}. For example, consider the coherent OVK in \Cref{sec:ovk_compose} with the base kernel of \Cref{example:nonsep_base_kernel}. For a generic input $x=(x_1,\ldots,x_d)$, write $\phi_r=\phi(x_r)\in\reals^{p_\phi}$, $\ell_r=(\mat L_\rho)_{r,:}^{\trans}\in\reals^{q_\rho}$, $\theta=\mathrm{vec}_{\mathrm r}(\Theta)=\mathrm{vec}(\Theta^{\trans})$ with $\Theta\in\reals^{q_\rho\times p_\phi}$, and $\mat R=\mat S\mat G$. Since $\Gamma(x)=\mat R\Gamma_\rho(x)$, its forward evaluation is
\begin{equation}
  \Gamma(x)\theta
  =\mat R
  \begin{bmatrix}
    \lambda_1^{-1/2}\ell_1^{\trans}\Theta\phi_1\\
    \vdots\\
    \lambda_d^{-1/2}\ell_d^{\trans}\Theta\phi_d
  \end{bmatrix}
  \in\reals^d.
  \label{eq:nonsep_matrix_free_forward}
\end{equation}
and $\forall v\in\reals^d$, its adjoint evaluation is
\begin{equation}
  \Gamma(x)^{\adj}v
  =\mathrm{vec}_{\mathrm r}\!\left(
    \sum_{r=1}^d
    \frac{(\mat R^{\trans}v)_r}{\sqrt{\lambda_r}}
    \ell_r\phi_r^{\trans}
  \right)
  \in\reals^{q_\rho p_\phi}.
  \label{eq:nonsep_matrix_free_adjoint}
\end{equation}

We describe now the implementation in reduced coherent coordinates. Let
$\mat U_{\mat S}\in\reals^{d\times d_b}$ have orthonormal columns spanning
$\im(\mat S)$, let $\mat R=\mat S\mat G$, we have
\begin{equation}
  \norm{\mat R\Gamma_\rho(x_t)\Theta-y_t}{\reals^d}^2
  =\norm{\mat U_{\mat S}^{\trans}\mat R\Gamma_\rho(x_t)\Theta-\mat U_{\mat S}^{\trans}y_t}{\reals^{d_b}}^2
  +\norm{(\mat I_d-\mat U_{\mat S}\mat U_{\mat S}^{\trans})y_t}{\reals^d}^2.
  \label{eq:nonsep_reduced_loss_decomposition}
\end{equation}
Since the second term on the right-hand side is independent of $\Theta$, we can equivalently minimize the reduced loss described by the first term.

For $x=(x_1,\ldots,x_d)$, write $\phi_r=\phi(x_r)$ and $\ell_r=(\mat L_\rho)_{r,:}^{\trans}$. Now we define the reduced forward and adjoint operators $ \overline\Gamma_{\mat R,\rho}(x)$ and $\overline\Gamma_{\mat R,\rho}^{\adj}(x)$ by
\begin{align}
  \overline\Gamma_{\mat R,\rho}(x)\Theta
  &= \mat U_{\mat S}^{\trans}\mat R\Gamma_\rho(x) \Theta = \mat U_{\mat S}^{\trans}\mat R
  \begin{bmatrix}
    \lambda_1^{-1/2}\ell_1^{\trans}\Theta\phi_1\\
    \vdots\\
    \lambda_d^{-1/2}\ell_d^{\trans}\Theta\phi_d
  \end{bmatrix} \in \reals^{d_b}, \qquad \forall  \Theta\in\reals^{q_\rho\times p_\phi},
  \label{eq:nonsep_reduced_forward}\\
  \overline\Gamma_{\mat R,\rho}(x)^{\adj}c
  &= \Gamma_\rho(x)^{\adj} \mat R^{\trans}\mat U_{\mat S} c= \sum_{r=1}^d
    \frac{(\mat R^{\trans}\mat U_{\mat S}c)_r}
         {\sqrt{\lambda_r}}
    \ell_r\phi_r^{\trans} \in \reals^{q_\rho\times p_\phi},
  \qquad \forall c\in\reals^{d_b}.
  \label{eq:nonsep_reduced_adjoint}
\end{align}
With this notation, the linear system to solve can be rewritten as
\begin{equation}
  \left(\sum_{t=1}^T\overline\Gamma_{\mat R,\rho}(x_t)^{\adj}\overline\Gamma_{\mat R,\rho}(x_t)+T\lambda\idop_{\reals^{q_\rho\times p_\phi}}\right)\Theta
  = \sum_{t=1}^T\overline\Gamma_{\mat R,\rho}(x_t)^{\adj}\mat U_{\mat S}^{\trans}y_t.
  \label{eq:cg_reduced_linear_system}
\end{equation}

Let $p=p_\phi$, $q=q_\rho$. In the real implementation, it is more practical to define the batched forward operator $\mathcal F: \reals^{q\times p} \to \reals^{d_b\times T}$ and its adjoint $\mathcal F^{\adj}: \reals^{d_b\times T} \to \reals^{q\times p}$ as follows:
\begin{equation}
  \mathcal F(\Theta) =
  \begin{bmatrix}
    \overline\Gamma_{\mat R,\rho}(x_1)\Theta &
    \dots &
    \overline\Gamma_{\mat R,\rho}(x_T)\Theta
  \end{bmatrix}
  \in\reals^{d_b\times T},
  \label{eq:nonsep_batched_forward}
\end{equation}
and
\begin{equation}
  \mathcal F^{\adj}(\mat C) = \sum_{t=1}^T \overline\Gamma_{\mat R,\rho}(x_t)^{\adj}c_t,
  \quad \forall \mat C = \begin{bmatrix}c_1&\cdots&c_T\end{bmatrix} \in \reals^{d_b\times T}.
  \label{eq:nonsep_batched_adjoint}
\end{equation}
One can check that $\mathcal F^{\adj}$ is indeed the adjoint of $\mathcal F$ with respect to the Frobenius inner product, i.e., for all $\Theta \in \reals^{q\times p}$ and $\mat C \in \reals^{d_b\times T}$, we have
$
  \inner{\mathcal F(\Theta)}{\mat C}{\reals^{d_b\times T}} = \inner{\Theta}{\mathcal F^{\adj}(\mat C)}{\reals^{q\times p}}$.
As a result, \Cref{eq:cg_reduced_linear_system} can be rewritten as
\begin{equation}
  \left(\mathcal F^{\adj}\mathcal F+T\lambda\idop_{\reals^{q\times p}}\right)\Theta
  = \mathcal F^{\adj}(\overline{\mat Y}),
  \label{eq:cg_reduced_linear_system_batched}
\end{equation}
where $\mat Y=\begin{bmatrix}y_1&\cdots&y_T\end{bmatrix}\in\reals^{d\times T}$ and $\overline{\mat Y}=\mat U_{\mat S}^{\trans}\mat Y\in\reals^{d_b\times T}$.

We propose solving \Cref{eq:cg_reduced_linear_system_batched}
using the matrix-free preconditioned conjugate gradient method
\citep{hestenes_methods_1952,cutajar_preconditioning_2016},
with the preconditioner and custom initialization\footnote{We use the implementation provided by SciPy library (\url{https://docs.scipy.org/doc/scipy/reference/generated/scipy.sparse.linalg.cg.html}). }, where only the forward operator $\mathcal F$ and its adjoint $\mathcal F^{\adj}$ are needed. The following paragraphs describe how to compute $\mathcal F$ and $\mathcal F^{\adj}$ efficiently using tensor contractions.

\paragraph{Tensor Computation of $\mathcal{F}$ and $\mathcal{F}^{\adj}$}
Let $\mathsf X\in\reals^{T\times d\times p}$ be the feature tensor defined by
$\mathsf X_{t,r,:}=\phi(x_{t,r})^{\trans}$.
Define the intermediate matrix $\mat A_\Theta\in\reals^{d\times T}$ by
\begin{equation}
  \mat A_\Theta = \begin{bmatrix}
    \Gamma_\rho(x_1)\Theta & \cdots & \Gamma_\rho(x_T)\Theta
  \end{bmatrix} \in \reals^{d\times T}.
  \label{eq:cg_tensor_forward_intermediate_matrix}
\end{equation}
Then
\begin{equation}
  \mathcal F(\Theta)
  =\mat U_{\mat S}^{\trans}\mat R\mat A_\Theta
  \in\reals^{d_b\times T}.
  \label{eq:cg_tensor_forward_matrix}
\end{equation}
The entries of $\mat A_\Theta$ can be computed by the tensor contraction
\begin{equation}
  (\mat A_\Theta)_{rt}
  = \lambda_r^{-1/2}\ell_r^{\trans}\Theta\phi(x_{t,r})
  = \frac{1}{\sqrt{\lambda_r}}
   \sum_{j=1}^p\mathsf X_{t,r,j}(\mat L_\rho \Theta)_{rj}.
  \label{eq:cg_tensor_forward_intermediate}
\end{equation}
For $\mat C\in\reals^{d_b\times T}$, define the intermediate matrix $\mat G_{\mat C}\in\reals^{d\times p}$ by
\begin{equation}
  (\mat G_{\mat C})_{rj} = 
    \sum_{t=1}^T \frac{(\mat R^{\trans}\mat U_{\mat S}\mat C)_{rt}} {\sqrt{\lambda_r}}
    \phi(x_{t, r})_j
    = \frac{1}{\sqrt{\lambda_r}}
   \sum_{t=1}^T\mathsf X_{t,r,j}(\mat R^{\trans}\mat U_{\mat S}\mat C)_{rt}.
  \label{eq:cg_tensor_adjoint_intermediate_matrix}
\end{equation}
Then $\mathcal F^{\adj}(\mat C)=\mat L_\rho^{\trans}\mat G_{\mat C}$.
Moreover, to apply conjugate gradients, we identify $\Theta \in \reals^{q\times p}$ with
$\theta \in \reals^{qp}$ using row-wise vectorization.
\paragraph{Preconditioner} 
We denote by $\mathrm{vec}_{\mathrm r}$ the row-wise vectorization of a matrix, i.e., for $\Theta\in\reals^{q\times p}$, we have
\begin{equation}
  \vartheta_k=\Theta_{k,:}^{\trans}\in\reals^p,
  \qquad
  \theta=\mathrm{vec}_{\mathrm r}(\Theta)
  =\begin{bmatrix}\vartheta_1^{\trans}&\cdots&\vartheta_q^{\trans}\end{bmatrix}^{\trans}.
  \label{eq:cg_theta_blocks}
\end{equation}

Let $\mathrm{mat}_{\mathrm r}$ denote the inverse of the row-wise
vectorization introduced above. We denote by $\mat H\in\reals^{qp\times qp}$
the matrix representation of the operator
$\mathcal F^{\adj}\mathcal F+T\lambda\idop_{\reals^{q\times p}}$ and set
\begin{align}
  \mat H \theta
  &=\mathrm{vec}_{\mathrm r}\!\left(
    \mathcal F^{\adj}\mathcal F(\mathrm{mat}_{\mathrm r}(\theta))
    +T\lambda\,\mathrm{mat}_{\mathrm r}(\theta)\right),
  \label{eq:cg_matrix_free_product}\\
  \vect b
  &=\mathrm{vec}_{\mathrm r}\!\left(
    \mathcal F^{\adj}(\overline{\mat Y})\right).
  \label{eq:cg_matrix_free_rhs}
\end{align}
Thus, \Cref{eq:cg_reduced_linear_system_batched} is equivalent to
$\mat H\theta=\vect b$. Let $\mat H_{k\ell}\in\reals^{p\times p}$ denote block $(k,\ell)$ of
$\mat H$, then it can be shown that 
\begin{equation}
  \mat H_{k\ell}
  =\sum_{r=1}^d\sum_{s=1}^d
    \mat\Omega_{rs}\mat J_{rk}\mat J_{s\ell}
    \mat\Phi_r^{\trans}\mat\Phi_s
    +T\lambda\,\mathbf 1_{\{k=\ell\}}\mat I_p.
  \label{eq:cg_exact_normal_blocks}
\end{equation}
where
\begin{equation}
  \mat D_\lambda=\diag(\lambda_1,\ldots,\lambda_d),
  \qquad
  \mat J=\mat D_\lambda^{-1/2}\mat L_\rho,
  \qquad
  \mat\Omega
  =\mat R^{\trans}\mat U_{\mat S}\mat U_{\mat S}^{\trans}\mat R
  =\mat R^{\trans}\mat R,
  \label{eq:cg_exact_block_data}
\end{equation}
and $\mat\Phi_r\in\reals^{T\times p}$ the feature matrix of series $r$, whose
$t$-th row is $\phi(x_{t,r})^{\trans}$.

To increase the convergence speed of conjugate gradients, we use a preconditioner $\mat M_\rho\approx\mat H^{-1}$, which maps the residual $r_j$ to $z_j=\mat M_\rho r_j$. Following the general principle of approximate-inverse preconditioning for kernel systems \citep{cutajar_preconditioning_2016, rudi_falkon_2017}, we use one of two approximations depending on whether the kernel contains
task coupling. When $\rho>0$, we use the following Kronecker approximation of $\mat H$: 
\begin{equation}
  \widetilde{\mat H}_{\mathrm{task}}
  :=\mat A_{\mathrm{task}}\otimes\mat I_p \approx \mat H,
  \qquad
  \mat A_{\mathrm{task}}\in\reals^{q\times q}.
  \label{eq:cg_task_kronecker_approximation}
\end{equation}
where 
\begin{equation}
  (\mat A_{\mathrm{task}})_{k\ell} = \frac{1}{p} \trace{\mat H_{k\ell}} =\sum_{r=1}^d\sum_{s=1}^d
    \mat\Omega_{rs}\mat J_{rk}\mat J_{s\ell}
    \frac{1}{p}\trace{\mat\Phi_r^{\trans}\mat\Phi_s}
    +T\lambda\,\mathbf 1_{\{k=\ell\}}.
  \label{eq:cg_task_kronecker_approximation_blocks}
\end{equation}
It follows that
\begin{equation}
  \mat A_{\mathrm{task}}
  =\mat J^{\trans} (\mat\Omega\odot\mat C_X)\mat J+T\lambda\mat I_q.
\end{equation}
where $(\mat C_X)_{rs}
  =\frac{1}{p}\trace{\mat\Phi_r^{\trans}\mat\Phi_s}$.
  Thus the task-only preconditioner is
\begin{equation}
  \mat M_{\mathrm{task}}
  = \widetilde{\mat H}_{\mathrm{task}}^{-1} = \mat A_{\mathrm{task}}^{-1}\otimes\mat I_p.
  \label{eq:cg_task_only_preconditioner}
\end{equation}
It retains all interactions between the $q$ task factors and treats the feature directions as isotropic. When $\rho=0$, there is no task-factor mixing: $q=d$, $\mat L_0=\mat I_d$,
and $\mat J=\mat D_\lambda^{-1/2}$. We instead use a feature-only
block-Jacobi approximation that discards the off-diagonal blocks of $\mat H$, i.e.,
\begin{equation}
  \widetilde{\mat H}_{\mathrm{feature}}
  :=\operatorname{blockdiag}(\mat H_{11},\ldots,\mat H_{qq}) \approx \mat H.
  \label{eq:cg_feature_block_jacobi_approximation}
\end{equation}
In this case, \Cref{eq:cg_exact_normal_blocks} shows that its diagonal blocks
are exactly
\begin{equation}
  \mat A_k:=\mat H_{kk}
  =\frac{\mat\Omega_{kk}}{\lambda_k}
    \mat\Phi_k^{\trans}\mat\Phi_k+T\lambda\mat I_p.
  \label{eq:cg_feature_preconditioner_rho_zero}
\end{equation}
The feature-only preconditioner is therefore
\begin{equation}
  \mat M_{\mathrm{feature}}
  =\operatorname{blockdiag}
    (\mat A_1^{-1},\ldots,\mat A_q^{-1}).
  \label{eq:cg_implemented_preconditioner}
\end{equation}
Indeed, $\mat\Omega\succeq0$ and $\mat C_X\succeq0$, so the Schur product theorem and $T\lambda>0$ imply $\mat A_{\mathrm{task}}\succ0$ and $\mat A_k\succ0$ for every $k$; consequently, both $\mat M_{\mathrm{task}}$ and $\mat M_{\mathrm{feature}}$ are symmetric positive-definite and are valid PCG preconditioners.

\paragraph{Initialization} Instead of starting the conjugate gradient method from the zero vector, we propose to use a warm start based on the independently fitted KRR coefficients, which are denoted by
\begin{equation}
  \beta_r
  =\left(\mat\Phi_r^{\trans}\mat\Phi_r
    +T\lambda\,\lambda_r\mat I_p\right)^{-1}
    \mat\Phi_r^{\trans}\vect y_r.
  \label{eq:cg_independent_warm_start}
\end{equation}
where $\vect y_r=(y_{1,r},\ldots,y_{T,r})^{\trans}\in\reals^T$ and $\mat\Phi_r\in\reals^{T\times p}$ has rows $\phi(x_{t,r})^{\trans}$ as we defined before. Specifically, we begin from $\theta_0 = \mathrm{vec}_{\mathrm r}(\Theta_0)$ where $\Theta_0$ is constructed as follows:
\begin{equation}
  \Theta_0 \coloneqq \mat L_\rho^{\dagger} \begin{bmatrix}
    \sqrt{\lambda_1}\beta_1^{\trans}\\
    \vdots\\
    \sqrt{\lambda_d}\beta_d^{\trans}
  \end{bmatrix},
  \label{eq:cg_warm_start_mapping}
\end{equation}
where $\mat L_\rho^{\dagger}$ is the Moore--Penrose inverse. As a result, for $\rho=0$, $\mat L_0=\mat I_d$, it can be shown that $\Theta_0$ reproduces the $d$ independent-KRR predictors before the
coherent transformation $\mat U_{\mat S}^{\trans}\mat R$ is applied since
\begin{equation}
  (\Gamma_\rho(x) \Theta_0)_r = \lambda_r^{-1/2}(\Theta_0)_{r,:}\phi(x_r)
  =\beta_r^{\trans}\phi(x_r).
  \label{eq:cg_warm_start_scaling}
\end{equation}
For
$\rho>0$, $\Theta_0$ minimizes $\norm{\mat L_\rho\Theta-\mat W}{\reals^{d\times p}}^2$ where $\mat W=\begin{bmatrix}\sqrt{\lambda_1}\beta_1&\cdots&\sqrt{\lambda_d}\beta_d\end{bmatrix}^{\trans}$.

Once PCG returns $\widehat\theta$, we reshape it as $\widehat\Theta=\mathrm{mat}_{\mathrm r}(\widehat\theta)\in\reals^{q_\rho\times p_\phi}$ and predict, for any $x\in\mathcal X$,
\begin{equation}
  \widehat f(x)
  =\mat R\Gamma_\rho(x)\widehat\Theta
  =\mat U_{\mat S}\overline\Gamma_{\mat R,\rho}(x)\widehat\Theta
  \in\im(\mat S),
  \label{eq:cg_nonsep_prediction}
\end{equation}
so coherence holds by construction.

\section{Excess-Risk Bounds and Proofs}
\label{sec:excess_risk_proofs}
\paragraph{Notation for linear operators.}

Let $\mathcal{H}$ and $\mathcal{H}'$ be two Hilbert spaces. Let $(e_k)_{k\in I}$ be any orthonormal basis of $\mathcal{H}$. 

Let $\mathcal{B}(\mathcal{H}, \mathcal{H}')$ be the space of bounded linear operators from $\mathcal{H}$ to $\mathcal{H}'$. It's a normed space endowed with the operator norm $\norm{\cdot}{\mathrm{op}}$ defined by
$\norm{\op A}{\mathrm{op}} \coloneqq \sup_{h\in \mathcal{H}, \norm{h}{\mathcal{H}} \leq 1} \norm{\op A h}{\mathcal{H}'}$. Moreover, for any bounded operators $\op A, \op B$, when the output space of $\op B$ is the input space of $\op A$, we have
\begin{equation} \label{eq:ineq_norm_op}
  \norm{\op A \op B}{\mathrm{op}} \leq \norm{\op A}{\mathrm{op}} \norm{\op B}{\mathrm{op}}.
\end{equation} 
We define also $|\op A| \coloneqq (\op A^{\adj} \op A)^{\frac{1}{2}}$. 

Let $\mathcal{B}_0(\mathcal{H}, \mathcal{H}')$ be the space of compact bounded linear operators from $\mathcal{H}$ to $\mathcal{H}'$.

Let $\mathcal{B}_1(\mathcal{H}, \mathcal{H}')$ be the space of Schatten 1-class or trace class operators from $\mathcal{H}$ to $\mathcal{H}'$, i.e.,
\begin{equation}
  \mathcal{B}_1(\mathcal{H}, \mathcal{H}') = \set{\op A \in \mathcal{B}_0(\mathcal{H}, \mathcal{H}') \mid \sum_{k\in I} \inner{|A|e_k}{e_k}{\mathcal{H}} < \infty }.
\end{equation}
It is a Banach space endowed with the so-called trace class norm $\norm{\cdot}{1} $ defined by
$
\norm{\op A}{1} \coloneqq  \sum_{k\in I} \inner{|A|e_k}{e_k}{\mathcal{H}} = \sum_{k\in I} \sigma_{k} (\op A)
$,
where $\left(\sigma_{k} (A)\right)_{k \in I}$ are the singular values of $\op A$. For a trace class operator $\op A\in \mathcal{B}_1(\mathcal{H})$, its trace is well-defined:
$
  \trace{A} \coloneqq \sum_{l\in I} \inner{\op A e_l}{e_l}{\mathcal{H}} \leq \norm{\op A}{1} < \infty
$.

Let $\mathcal{B}_2(\mathcal{H}, \mathcal{H}')$ be the space of Schatten 2-class or Hilbert-Schmidt operators from $\mathcal{H}$ to $\mathcal{H}'$, i.e.,
\begin{equation}
  \mathcal{B}_2(\mathcal{H}, \mathcal{H}') = \set{ \op A \in  \mathcal{B}_0(\mathcal{H}, \mathcal{H}') \mid \sum_{k\in I} \inner{|A|^2 e_k}{e_k}{\mathcal{H}} < \infty }
\end{equation}
Then we can define a so-called Hilbert-Schmidt norm on $\mathcal{B}_2(\mathcal{H}, \mathcal{H}')$ as follows:
$
  \norm{\op A}{\mathrm{HS}} \coloneqq  \left(\sum_{k\in I} \inner{|A|^2e_k}{e_k}{\mathcal{H}} \right)^{1/2} = \left(\sum_{k\in I} \norm{\op A e_k}{\mathcal{H}'}^2 \right)^{1/2}=  \left( \sum_{k\in I} \sigma_{k}^2 (\op A) \right)^{1/2}
$.
For any  $(\op A, \op B) \in \mathcal{B}_2(\mathcal{H}, \mathcal{H}')$, $\op A^{\adj} \op B$ is in $\mathcal{B}_1(\mathcal{H})$. As a result, $\trace{\op A^{\adj} \op B}$ is well-defined. We can therefore endow $\mathcal{B}_2(\mathcal{H}, \mathcal{H}')$ with an inner product defined by 
$
  \inner{\op A}{ \op B}{2} \coloneqq \trace{\op A^{\adj} \op B}
$,
and show that it is a Hilbert space. The corresponding Hilbert-Schmidt norm is denoted by $\norm{\cdot}{2}$ and we have $\norm{\op A}{2} = \sqrt{\trace{\op A^{\adj} \op A}} = \norm{\op A}{\mathrm{HS}}$.

One can show that a trace-class operator is automatically Hilbert-Schmidt, so we have
\begin{equation}
  \mathcal{B}_1(\mathcal{H}, \mathcal{H}') \subset \mathcal{B}_2(\mathcal{H}, \mathcal{H}')  \subset \mathcal{B}_0(\mathcal{H}, \mathcal{H}').
\end{equation}
Moreover, for all $\op A \in \mathcal{B}_1(\mathcal{H}, \mathcal{H}')$, one has
\begin{equation}
 \norm{\op A}{\mathrm{op}}  \leq \norm{\op A}{2} \leq \norm{\op A}{1}.
\end{equation}
And for all $\op A\in \mathcal{B}(\mathcal{H})$, $\op B \in \mathcal{B}_p(\mathcal{H})$ with $p \in \set{1, 2}$, we have
\begin{equation} \label{eq:ine_norm_pclass_op}
  \norm{\op A \op B}{p} \leq \norm{\op A}{\mathrm{op}}\norm{\op B}{p}.
\end{equation}

Before presenting the main results of this section, we recall that it can be shown that the estimator of the RVVKRR can also be written in the following form:
\begin{equation}
  \hat{f} = (\hat{\Sigma}_{\ovk} + \lambda \idop)^{-1}\frac{1}{T}\sum_{t=1}^{T}\ovk_{x_t}y_t
\end{equation}
with 
\begin{equation}
  \hat{\Sigma}_{\ovk} \coloneqq \frac{1}{T}\sum_{t=1}^T\ovk_{x_t}\ovk_{x_t}^{\adj}.
\end{equation}
We introduce $f_{\lambda}$ which is a solution of the following optimization problem:
\begin{equation} \label{eq:f_lambda}
  f_{\lambda} \in \argmin_{f\in \vvrkhs}\set{\norm{f-\op S \circ g^{\targ}}{\rho}^2 + \lambda \norm{f}{\vvrkhs}^2}.
\end{equation}
Since $\lambda > 0$, a unique minimizer $f_{\lambda}$ exists, and is given by (see Proposition 1 of \citet{caponnetto_optimal_2007} for example):
\begin{equation}
  f_{\lambda} = (\Sigma_{\ovk}+ \lambda \idop)^{-1} \int_{\mathcal{X}} \ovk_{x} \op S \circ g^{\targ}(x) d\rho(x)
\label{eq:estimator_ridge_expect}
\end{equation}
where
\begin{equation}
  \Sigma_{\ovk} \coloneqq \int_{\mathcal{X}} \ovk_{x}\ovk_{x}^{\adj} d \rho(x).
\end{equation}
To ensure that the Bochner integral in \Cref{eq:estimator_ridge_expect} is well-defined, we assume additionally that $\vvrkhs$ is separable, and $(x, x') \mapsto \inner{\ovk_x y}{\ovk_{x'} y'}{\vvrkhs}$ is measurable for all $y, y' \in \mathcal{Y}$ \citep[see, e.g.,][Hypothesis 1]{caponnetto_optimal_2007}.

\subsection{Auxiliary Results}

First, we show some useful lemmas that are the direct consequences of the assumptions we made in \Cref{sec:stat_setting}.
\begin{lemma} \label{lemma:norm_funcs_assum_val}
  Under \Cref{assum:val}, we have
  \begin{align}
    \norm{\op S \circ g^{\targ}}{\rho} & \leq \norm{\op S}{\mathrm{op}} \sqrt{C_1}  \\
    \norm{f_{\lambda}-\op S \circ g^{\targ}}{\rho} &\leq \norm{\op S}{\mathrm{op}} \sqrt{C_1} \\
    \norm{f_{\lambda}}{\vvrkhs} &\leq \norm{\op S}{\mathrm{op}} \sqrt{\frac{C_1}{\lambda}}.
  \end{align}
\end{lemma}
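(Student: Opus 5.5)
The three bounds follow directly from \Cref{assum:val} and from the variational characterization \Cref{eq:f_lambda} of $f_{\lambda}$. No concentration argument or integral operator machinery is needed. We prove them in the order stated, since the second and third both rely on comparing $f_\lambda$ with the zero function.

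\textbf{First bound.} By definition of the $L^2(\mathcal{X},\rho,\mathcal{Y})$ norm,
\begin{equation*}
  \norm{\op S \circ g^{\targ}}{\rho}^2 = \int_{\mathcal{X}} \norm{\op S g^{\targ}(x)}{\mathcal{Y}}^2 \, d\rho(x) \leq \norm{\op S}{\mathrm{op}}^2 \int_{\mathcal{X}} \norm{g^{\targ}(x)}{\mathcal{Y}_b}^2 \, d\rho(x).
\end{equation*}
\Cref{assum:val} gives $\norm{g^{\targ}(\rv X)}{\mathcal{Y}_b}^2 \leq C_1$ almost surely. Since $\rho$ is the law of $\rv X$, the integrand is bounded by $C_1$ $\rho$-almost everywhere. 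Therefore $\norm{\op S \circ g^{\targ}}{\rho}^2 \leq \norm{\op S}{\mathrm{op}}^2 C_1$, and taking square roots gives the first inequality.

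\textbf{Second and third bounds.} These come from one comparison. The zero function $0$ belongs to $\vvrkhs$, and $f_\lambda$ minimizes the objective $J(f) \coloneqq \norm{f-\op S\circ g^{\targ}}{\rho}^2 + \lambda\norm{f}{\vvrkhs}^2$ over $\vvrkhs$. Hence $J(f_\lambda) \leq J(0)$, that is,
\begin{equation*}
  \norm{f_{\lambda}-\op S \circ g^{\targ}}{\rho}^2 + \lambda \norm{f_{\lambda}}{\vvrkhs}^2 \leq \norm{\op S \circ g^{\targ}}{\rho}^2 \leq \norm{\op S}{\mathrm{op}}^2 C_1,
\end{equation*}
where the last step uses the first bound. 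Both terms on the left are nonnegative, so each is bounded by the right-hand side.
\begin{itemize}
  \item Dropping the regularization term gives $\norm{f_{\lambda}-\op S \circ g^{\targ}}{\rho} \leq \norm{\op S}{\mathrm{op}}\sqrt{C_1}$.
  \item Dropping the fidelity term and dividing by $\lambda>0$ gives $\norm{f_\lambda}{\vvrkhs} \leq \norm{\op S}{\mathrm{op}}\sqrt{C_1/\lambda}$.
\end{itemize}

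\textbf{Technical check.} For $J(f)$ to be meaningful we need $\vvrkhs \subset L^2(\mathcal{X},\rho,\mathcal{Y})$, so that $\norm{f-\op S\circ g^{\targ}}{\rho}$ is finite and well defined. This holds because
\begin{equation*}
  \norm{f(x)}{\mathcal{Y}} = \norm{\ovk_x^{\adj} f}{\mathcal{Y}} \leq \norm{\ovk_x}{\mathrm{op}}\norm{f}{\vvrkhs} \leq \sqrt{\kappa}\,\norm{f}{\vvrkhs},
\end{equation*}
using the reproducing property, \Cref{assum:ovk}, and $\norm{\cdot}{\mathrm{op}} \leq \norm{\cdot}{2}$. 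Measurability of $f$ follows from the measurability hypothesis stated before \Cref{eq:estimator_ridge_expect}. Existence and uniqueness of $f_\lambda$ were already asserted there.
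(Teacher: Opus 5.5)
Your proof is correct and follows the paper's argument: you compare the objective at $f_\lambda$ with its value at the zero function, bound $\norm{\op S \circ g^{\targ}}{\rho}^2 \leq \norm{\op S}{\mathrm{op}}^2 C_1$, and drop either nonnegative term. Your extra well-definedness check, which uses \Cref{assum:ovk} to get $\vvrkhs \subset L^2(\mathcal{X}, \rho, \mathcal{Y})$, is a harmless addition that the paper takes for granted.
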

\begin{proof}
  By definition of $f_{\lambda}$ (\Cref{eq:f_lambda}), we have
  \begin{equation}
    \norm{f_{\lambda}-\op S \circ g^{\targ}}{\rho}^2 + \lambda \norm{f_{\lambda}}{\vvrkhs}^2 \leq \norm{\op S \circ g^{\targ}}{\rho}^2.
  \end{equation}
  The fact that 
  \begin{align}
    \norm{\op S \circ g^{\targ}}{\rho}^2 & = \int_{\mathcal{X}} \norm{\op S \circ g^{\targ}(x)}{\mathcal{Y}}^2 d\rho(x) \leq \norm{\op S}{\mathrm{op}}^2 C_1  
  \end{align}
  concludes the proof.
\end{proof}

\begin{lemma} \label{lemma:bound_ovk}
  Under \Cref{assum:ovk}, $\ovk_{x}^{\adj}\ovk_{x}$ is a trace-class operator, and we have 
  \begin{equation}
    \norm{\ovk_{x}^{\adj}\ovk_{x}}{\mathrm{op}} \leq \norm{\ovk_{x}^{\adj}\ovk_{x}}{2} \leq \norm{\ovk_{x}^{\adj}\ovk_{x}}{1}  \leq \kappa 
  \end{equation}
\end{lemma}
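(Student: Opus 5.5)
The plan is to reduce all three inequalities to elementary facts about positive operators and a single trace identity. The operator in question is $\ovk_x^{\adj}\ovk_x$, where $\ovk_x:\mathcal{Y}\to\vvrkhs$ is bounded by the reproducing property.

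First I show that $\op A \coloneqq \ovk_x^{\adj}\ovk_x \in \lineop{\mathcal{Y}}$ is self-adjoint and positive. This follows from
\[
\inner{\op A y}{y}{\mathcal{Y}}=\norm{\ovk_x y}{\vvrkhs}^2\geq 0 .
\]
Hence $|\op A|=\op A$. Fix an orthonormal basis $(e_k)_{k\in I}$ of $\mathcal{Y}$. The trace-class norm then reads
\[
\norm{\op A}{1}=\sum_{k}\inner{\op A e_k}{e_k}{\mathcal{Y}}=\sum_k \norm{\ovk_x e_k}{\vvrkhs}^2 = \norm{\ovk_x}{2}^2 .
\]
This is exactly $\trace{\ovk_x^{\adj}\ovk_x}$, which \Cref{assum:ovk} bounds by $\kappa<\infty$. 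So $\op A$ is trace class, hence compact, and $\norm{\op A}{1}\le\kappa$.

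The remaining inequalities $\norm{\op A}{\mathrm{op}}\le\norm{\op A}{2}\le\norm{\op A}{1}$ are the general Schatten-norm inequalities recalled in the notation paragraph above. I will justify them briefly through the singular values $\sigma_k(\op A)\ge 0$, which here coincide with the eigenvalues of $\op A$ since $\op A$ is compact and positive. The operator norm is $\max_k\sigma_k$. The chain
\[
\max_k \sigma_k^2\le\sum_k\sigma_k^2\le\Bigl(\sum_k\sigma_k\Bigr)^2
\]
holds because all terms are nonnegative, and taking square roots gives the claim.

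The only point needing care is the identification of $\norm{\op A}{1}$ with $\trace{\ovk_x^{\adj}\ovk_x}$. This needs positivity of $\op A$, so that $|\op A|=\op A$ and no separate polar decomposition is required. It also needs independence from the choice of basis, which holds for sums of nonnegative terms $\norm{\ovk_x e_k}{}^2$ via Parseval. Both are standard, so I do not expect a real obstacle.
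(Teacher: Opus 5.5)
Your proposal is correct and follows the paper's argument. Positivity of $\ovk_x^{\adj}\ovk_x$ gives $|\ovk_x^{\adj}\ovk_x|=\ovk_x^{\adj}\ovk_x$, so the trace norm equals $\trace{\ovk_x^{\adj}\ovk_x}\le\kappa$, and the other two inequalities are the standard Schatten-norm comparisons, which you spell out via singular values where the paper only cites them.

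One minor ordering point: the paper defines $\mathcal{B}_1$ as a subset of the compact operators, so it is cleaner to get compactness first and not deduce it from trace-class membership. For instance, $\ovk_x$ is Hilbert--Schmidt because $\sum_k\norm{\ovk_x e_k}{\vvrkhs}^2\le\kappa$, hence $\ovk_x^{\adj}\ovk_x$ is compact.
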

\begin{proof}
  Since $\ovk_{x}^{\adj}\ovk_{x}$ is a self-adjoint, positive operator, we have $|\ovk_{x}^{\adj}\ovk_{x}| = \ovk_{x}^{\adj}\ovk_{x}$. Then by definition of the trace norm and the trace, $\norm{\ovk_{x}^{\adj}\ovk_{x}}{1} = \trace{\ovk_{x}^{\adj}\ovk_{x}} \leq \kappa $. The other inequalities are due to the properties of the trace-class operator.
\end{proof}

\begin{lemma}\label{lemma:excess_risk_identity}
  Under \Cref{assum:noise}, it can be shown that
\begin{equation}\label{eq:excess_risk_l2}
  \risk(\hat{f}) - \risk(\op S \circ g^{\targ}) = \expes{\norm{\hat{f}(\rv X) - \op S \circ  g^{\targ}(\rv X)}{ \mathcal{Y}}^2} = \norm{\hat{f} - \op S \circ g^{\targ}}{\rho}^2.
\end{equation}
\end{lemma}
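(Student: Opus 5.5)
The plan is to expand the square inside the risk by inserting the target, using the model $\rv Y = \op S g^{\targ}(\rv X) + \op S\xi$. For any measurable $f:\mathcal{X}\to\im(\op S)$ with $\expes{\norm{f(\rv X)}{\mathcal{Y}}^2}<\infty$ (in particular $\hat f$, which is bounded by the reproducing property and \Cref{assum:ovk}), write
\begin{equation*}
  f(\rv X)-\rv Y = \bigl(f(\rv X)-\op S g^{\targ}(\rv X)\bigr) - \op S\xi .
\end{equation*}
Squaring and taking expectations gives
\begin{equation*}
  \risk(f)=\expes{\norm{f(\rv X)-\op S g^{\targ}(\rv X)}{\mathcal{Y}}^2}
  -2\,\expes{\inner{f(\rv X)-\op S g^{\targ}(\rv X)}{\op S\xi}{\mathcal{Y}}}
  +\expes{\norm{\op S\xi}{\mathcal{Y}}^2}.
\end{equation*}
Taking $f=\op S\circ g^{\targ}$ yields $\risk(\op S\circ g^{\targ})=\expes{\norm{\op S\xi}{\mathcal{Y}}^2}$, which is finite by \Cref{assum:val}. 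Subtracting, the excess risk equals the first term minus twice the cross term.

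The key step is showing that the cross term vanishes. Set $h(x)\coloneqq f(x)-\op S g^{\targ}(x)$. Since $h(\rv X)$ is a function of $\rv X$ only, and $\rv X$ and $\xi$ are independent by \Cref{assum:noise}, I would condition on $\rv X$:
\begin{equation*}
  \expes{\inner{h(\rv X)}{\op S\xi}{\mathcal{Y}}}=\expes{\inner{h(\rv X)}{\op S\,\expes{\xi}}{\mathcal{Y}}}=0.
\end{equation*}
Two facts justify this. First, the Bochner expectation commutes with the bounded operator $\op S$ and with the continuous linear functional $\inner{h}{\cdot}{\mathcal{Y}}$ for fixed $h$. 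Second, the product is integrable: by Cauchy--Schwarz it is bounded by $\norm{h(\rv X)}{\mathcal{Y}}\,\norm{\op S}{\mathrm{op}}\sqrt{C_2}$, which is integrable because $h\in L^2$ and \Cref{assum:val} holds. The last equality in \Cref{eq:excess_risk_l2} then follows from the definition of $\norm{\cdot}{\rho}$, since $\rv X\sim\rho$.

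One subtlety requires care. In the bound, $\hat f$ depends on the sample. The risk should therefore be read conditionally on the training data: $\hat f$ is treated as a fixed function, and $(\rv X,\rv Y)$ is a fresh draw independent of the sample. This is the only delicate point; the remaining steps are routine measure-theoretic checks.
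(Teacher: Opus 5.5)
Your proof is correct. The paper states this lemma without proof, and your argument is the standard one it leaves implicit: split the residual as the estimation error plus $-S\xi$, then kill the cross term using independence of $X$ and $\xi$ together with $\mathbb{E}[\xi]=0$.

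Your two side remarks are also apt. The risk of $\hat f$ must be read conditionally on the training sample. Some second-moment control such as \Cref{assum:val} is genuinely needed beyond \Cref{assum:noise}, since otherwise both risks may be infinite and their difference undefined.
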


The source condition in \Cref{assum:source_condition} is standard in the kernel literature and controls the smoothness of the target function. It is convenient for bounding the approximation error through the following lemma, which is similar to Proposition 3 in \citet{caponnetto_optimal_2007}. The scalar-valued kernel version and its proof can be found in Theorem 4 of \citet{smale_shannon_2005}; we omit the proof here.
\begin{lemma} \label{lemma:approx_error}
  Under \Cref{assum:source_condition}, we have
  \begin{equation}
    \norm{f_{\lambda} - \op S \circ g^{\targ}}{\rho} \leq \lambda^{r} \norm{L_{\ovk}^{-r}(\op S \circ g^{\targ})}{\rho}.
  \end{equation}
\end{lemma}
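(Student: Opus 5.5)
The plan is to rewrite the approximation error entirely in $L^2(\mathcal{X},\rho,\mathcal{Y})$ and bound it by spectral calculus for the positive self-adjoint operator $L_{\ovk}$. Write $f^\star \coloneqq \op S\circ g^{\targ}$. Let $\iota:\vvrkhs\to L^2(\mathcal{X},\rho,\mathcal{Y})$ be the inclusion. It is bounded under \Cref{assum:ovk}, since $\norm{f(x)}{\mathcal{Y}}=\norm{\ovk_x^{\adj}f}{\mathcal{Y}}\le\sqrt{\kappa}\norm{f}{\vvrkhs}$ by \Cref{lemma:bound_ovk}.

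The first step is to identify the adjoint. For $h\in L^2$ and $f\in\vvrkhs$, the reproducing property gives
\[
\inner{\iota f}{h}{\rho}=\int_{\mathcal{X}}\inner{\ovk_x^{\adj}f}{h(x)}{\mathcal{Y}}\,d\rho(x)=\Big\langle f,\int_{\mathcal{X}}\ovk_x h(x)\,d\rho(x)\Big\rangle_{\vvrkhs}.
\]
Hence $\iota^{\adj}h=\int_{\mathcal{X}}\ovk_x h(x)\,d\rho(x)$, where the Bochner integral is well defined by the separability and measurability hypotheses stated before \Cref{eq:estimator_ridge_expect}. It follows that $\Sigma_{\ovk}=\iota^{\adj}\iota$ and $L_{\ovk}=\iota\iota^{\adj}$ (the latter viewed as an operator on $L^2$). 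Both are positive and self-adjoint, and they are trace class with trace at most $\kappa$, so they are in particular compact.

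The second step uses \Cref{eq:estimator_ridge_expect}, which gives $f_\lambda=(\iota^{\adj}\iota+\lambda\idop)^{-1}\iota^{\adj}f^\star$. The push-through identity $\iota(\iota^{\adj}\iota+\lambda)^{-1}\iota^{\adj}=\iota\iota^{\adj}(\iota\iota^{\adj}+\lambda)^{-1}$ then yields, in $L^2$,
\[
f_\lambda-f^\star=\big(L_{\ovk}(L_{\ovk}+\lambda\idop)^{-1}-\idop\big)f^\star=-\lambda(L_{\ovk}+\lambda\idop)^{-1}f^\star.
\]
Inserting the source condition of \Cref{assum:source_condition}, $f^\star=L_{\ovk}^r h^{\targ}$, gives
\[
\norm{f_\lambda-f^\star}{\rho}\le\norm{\lambda(L_{\ovk}+\lambda\idop)^{-1}L_{\ovk}^r}{\mathrm{op}}\,\norm{h^{\targ}}{\rho}.
\]

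The third step is spectral calculus. Since $L_{\ovk}$ is compact, positive and self-adjoint, the operator norm above equals $\sup_{\sigma\ge0}\lambda\sigma^r/(\sigma+\lambda)$. Writing $t=\sigma/\lambda$, this ratio equals $\lambda^r\,t^r/(1+t)$. For $0<r\le1$ we have $t^r\le\max(1,t)\le1+t$, so the ratio is at most $\lambda^r$, and therefore $\norm{f_\lambda-f^\star}{\rho}\le\lambda^r\norm{h^{\targ}}{\rho}$.

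Finally, we interpret $\norm{L_{\ovk}^{-r}f^\star}{\rho}$ as $\norm{h^{\targ}}{\rho}$ for the minimal-norm choice of $h^{\targ}$, namely the one lying in $\kermap(L_{\ovk})^\perp$, on which $L_{\ovk}^r$ is injective. Any component of $h^{\targ}$ in the kernel does not change $f^\star$, so the bound holds with this choice.

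The step I expect to be the main obstacle is the functional-analytic bookkeeping in the first two steps rather than the inequality itself. One has to check that $L_{\ovk}$ as defined in \Cref{assum:source_condition}, which maps into $\vvrkhs$, coincides with $\iota\iota^{\adj}$ once composed with the inclusion. One also has to verify that the two resolvent expressions agree as elements of $L^2$. This is the point where the Bochner-integrability and separability hypotheses are used; the remainder is the scalar inequality $t^r\le1+t$.
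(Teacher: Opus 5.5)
Your proposal is correct and is essentially the standard argument the paper defers to (Theorem 4 of Smale and Zhou, in the operator form of Caponnetto and De Vito): you write $f_\lambda-\op S\circ g^{\targ}=-\lambda(L_{\ovk}+\lambda\idop)^{-1}L_{\ovk}^{r}h^{\targ}$ in $L^2(\mathcal{X},\rho,\mathcal{Y})$ via the inclusion $\iota$ (with $\Sigma_{\ovk}=\iota^{\adj}\iota$ and $L_{\ovk}=\iota\iota^{\adj}$), then bound $\lambda\sigma^r/(\sigma+\lambda)\le\lambda^r$, which is exactly that proof. The only cosmetic fix is that the operator norm is the supremum of this function over the spectrum of $L_{\ovk}$, so it is at most (rather than equal to) its supremum over $\sigma\ge 0$, and the inequality is all you need.
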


Before giving the explicit excess-risk bound and its proof, we present several
useful auxiliary results. We first restate Bennett's inequality for
Hilbert-space-valued random variables. This is a restatement of Lemma 2 in
\citet{smale_learning_2007}, derived from Theorem 3.4 of
\citet{pinelis_optimum_1994}.
\begin{proposition}[Bennett Inequality in Hilbert Space]\label{prop:benette_hilbert}
  Let $\mathcal{H}$ be a separable Hilbert space and let $\rv Z$ be a random variable on a probability space $(\Omega, \mathcal{F}, \rho)$ with values in $\mathcal{H}$. Assume that $\norm{\rv Z}{\mathcal{H}} \leq M < \infty$ almost surely. Denote $\sigma^2 = \expes{\norm{\rv Z}{\mathcal{H}}^2}$. Let $\set{\omega_i}_{i=1}^n$ be independent draws from $\rho$. For any $0 < \delta < 1$, with a probability greater than $1-\delta$, 
  \begin{equation} 
    \norm{\frac{1}{n}\sum_{i=1}^{n}(\rv Z (\omega_i) - \expes{\rv Z})}{\mathcal{H}} \leq \frac{2M \log(2/\delta)}{n} + \sqrt{\frac{2\sigma^2 \log(2/\delta)}{n}} \leq 2 \log\left(\frac{2}{\delta}\right) \left( \frac{M}{n} + \sqrt{\frac{\sigma^2}{n}}\right)
  \end{equation}
  The second inequality is due to the fact that $\sqrt{2\log(2/\delta)} < 2\log(2/\delta)$.

\end{proposition}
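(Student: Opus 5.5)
We would obtain the first inequality from the vector-valued Bernstein/Bennett bound of \citet[Theorem 3.4]{pinelis_optimum_1994}, following the route of \citet[Lemma 2]{smale_learning_2007}. The second inequality is elementary.

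\textbf{Step 1: centring and moment bounds.} Set $\xi_i \coloneqq \rv Z(\omega_i) - \expes{\rv Z}$ for $i\in\intset{n}$. These are i.i.d., centred, $\mathcal{H}$-valued random variables. Their partial sums form a martingale in the separable Hilbert space $\mathcal{H}$, which is $(2,1)$-smooth, so Pinelis' theorem applies with smoothness constant $D=1$. Jensen's inequality gives $\norm{\expes{\rv Z}}{\mathcal{H}}\leq M$, hence $\norm{\xi_i}{\mathcal{H}}\leq 2M$ almost surely. For the variance,
\[
  \expes{\norm{\xi_i}{\mathcal{H}}^2}=\expes{\norm{\rv Z}{\mathcal{H}}^2}-\norm{\expes{\rv Z}}{\mathcal{H}}^2\leq\sigma^2 .
\]

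\textbf{Step 2: tail bound.} Pinelis' result, in its Bernstein form, gives for every $\varepsilon>0$
\[
  \mathbb{P}\Big(\Big\|\sum_{i=1}^n\xi_i\Big\|_{\mathcal{H}}\geq n\varepsilon\Big)\leq 2\exp\Big(-\frac{n\varepsilon^2}{2(\sigma^2+2M\varepsilon/3)}\Big).
\]
Set the right-hand side equal to $\delta$ and write $L\coloneqq\log(2/\delta)$. This leaves the quadratic equation $n\varepsilon^2 - \tfrac{4ML}{3}\varepsilon - 2\sigma^2 L=0$. Its positive root is bounded, via $\sqrt{a+b}\leq\sqrt a+\sqrt b$, by
\[
  \frac{4ML}{3n}+\sqrt{\frac{2\sigma^2 L}{n}}\;\leq\;\frac{2ML}{n}+\sqrt{\frac{2\sigma^2 L}{n}} .
\]
This is exactly the first inequality, holding with probability at least $1-\delta$.

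\textbf{Step 3: second inequality.} It follows from $\sqrt{2L}\leq 2L$, which is equivalent to $L\geq 1/2$, that is, $\delta\leq 2e^{-1/2}\approx1.21$. This holds for every $\delta\in(0,1)$. Multiplying by $\sqrt{\sigma^2/n}$ and adding $2ML/n$ finishes the proof.

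\textbf{Main obstacle.} The only delicate point is matching constants. The statement assumes a bound $M$ on $\rv Z$ rather than on the centred variable. Centring doubles the bound to $2M$, and Bernstein's $1/3$ factor must absorb this doubling. We would check that Pinelis' Theorem 3.4 with $D=1$ indeed yields the exponent above; otherwise we would cite Smale--Zhou's Lemma 2 directly.
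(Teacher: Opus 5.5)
Your proposal is correct and follows the same route as the paper. The paper gives no derivation of its own: it cites Lemma 2 of Smale and Zhou (2007), obtained from Pinelis' Theorem 3.4, and justifies the second inequality by $\sqrt{2\log(2/\delta)}<2\log(2/\delta)$. Your constants also check out. The centred bound is $2M$ and the variance is at most $\sigma^2$, so the Bernstein root is at most $\frac{4ML}{3n}+\sqrt{2\sigma^2L/n}\le\frac{2ML}{n}+\sqrt{2\sigma^2L/n}$, and the fallback to citing Smale--Zhou directly is not needed.
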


Using \Cref{prop:benette_hilbert}, the following two lemmas give some concentration results that are useful to prove the main proposition.

\begin{lemma} \label{lemma:ci_1}
  Let $0 < \delta < 1$. Under \Cref{assum:ovk}, with a probability greater than $1-\delta$, we have
  \begin{equation}
    \norm{\hat{\Sigma}_{\ovk} - \Sigma_{\ovk}}{2} \leq 4 \log\left(\frac{2}{\delta}\right) \frac{\kappa}{\sqrt{T}}.
  \end{equation}
  Moreover, we have
  \begin{equation}
    \norm{\left(\Sigma_{\ovk} + \lambda \idop \right)^{-1}\left(\hat{\Sigma}_{\ovk} - \Sigma_{\ovk}\right)}{2} \leq \frac{1}{2}
  \end{equation}
  when $\lambda \geq 8 \log\left(\frac{2}{\delta}\right) \frac{\kappa}{\sqrt{T}}$.
\end{lemma}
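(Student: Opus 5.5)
The plan is to apply \Cref{prop:benette_hilbert} in the Hilbert space $\mathcal{B}_2(\vvrkhs)$ of Hilbert--Schmidt operators on $\vvrkhs$. This space is separable because $\vvrkhs$ is assumed separable. We take the random variable $\rv Z \coloneqq \ovk_{\rv X}\ovk_{\rv X}^{\adj}$, whose i.i.d. copies are $\ovk_{x_t}\ovk_{x_t}^{\adj}$. Then $\frac{1}{T}\sum_t \rv Z(x_t) = \hat{\Sigma}_{\ovk}$ and $\expes{\rv Z} = \Sigma_{\ovk}$; the latter holds because the Bochner integral commutes with the expectation.

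\textbf{Almost-sure bound on $\rv Z$.} Since $\ovk_x\ovk_x^{\adj}$ is self-adjoint and positive, its Hilbert--Schmidt norm is dominated by its trace norm. Its trace equals $\trace{\ovk_x^{\adj}\ovk_x}$, because $AA^{\adj}$ and $A^{\adj}A$ share their nonzero singular values. By \Cref{assum:ovk} and \Cref{lemma:bound_ovk}, this gives $\norm{\ovk_x\ovk_x^{\adj}}{2} \le \norm{\ovk_x\ovk_x^{\adj}}{1} \le \kappa$. So we may take $M=\kappa$. It follows that $\sigma^2 = \expes{\norm{\rv Z}{2}^2} \le \kappa^2$.

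\textbf{Bennett step.} With probability at least $1-\delta$, \Cref{prop:benette_hilbert} with $n=T$ yields
\begin{equation*}
\norm{\hat{\Sigma}_{\ovk}-\Sigma_{\ovk}}{2} \le 2\log\left(\tfrac{2}{\delta}\right)\left(\frac{\kappa}{T}+\frac{\kappa}{\sqrt{T}}\right) \le 4\log\left(\tfrac{2}{\delta}\right)\frac{\kappa}{\sqrt{T}},
\end{equation*}
where the last step uses $1/T \le 1/\sqrt{T}$.

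\textbf{Second claim.} We use \Cref{eq:ine_norm_pclass_op} with $\op A = (\Sigma_{\ovk}+\lambda\idop)^{-1}$. Because $\Sigma_{\ovk}$ is positive, $\norm{(\Sigma_{\ovk}+\lambda\idop)^{-1}}{\mathrm{op}} \le 1/\lambda$. On the same event this gives
\begin{equation*}
\norm{(\Sigma_{\ovk}+\lambda\idop)^{-1}(\hat{\Sigma}_{\ovk}-\Sigma_{\ovk})}{2} \le \frac{4\log(2/\delta)\kappa}{\lambda\sqrt{T}} \le \frac12,
\end{equation*}
where the final inequality holds as soon as $\lambda \ge 8\log(2/\delta)\kappa/\sqrt{T}$.

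The main technical point is verifying that $\rv Z$ is a measurable, bounded $\mathcal{B}_2(\vvrkhs)$-valued random variable. This follows from the stated measurability hypothesis on $(x,x')\mapsto\inner{\ovk_x y}{\ovk_{x'}y'}{\vvrkhs}$ together with separability of $\vvrkhs$. Once that is in place, the rest is the direct chain of inequalities above.
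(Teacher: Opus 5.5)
Your proof is correct and follows the paper's argument step for step. It applies Bennett's inequality in $\mathcal{B}_2(\vvrkhs)$ to $\rv Z=\ovk_{\rv X}\ovk_{\rv X}^{\adj}$ with $M=\kappa$ and $\sigma^2\le\kappa^2$, then uses $\norm{(\Sigma_{\ovk}+\lambda\idop)^{-1}}{\mathrm{op}}\le 1/\lambda$ together with \Cref{eq:ine_norm_pclass_op}; the only cosmetic difference is that you bound $\norm{\ovk_x\ovk_x^{\adj}}{2}$ via its trace norm and $\trace{\ovk_x\ovk_x^{\adj}}=\trace{\ovk_x^{\adj}\ovk_x}$, whereas the paper uses $\norm{\ovk_x\ovk_x^{\adj}}{2}=\norm{\ovk_x^{\adj}\ovk_x}{2}$ and \Cref{lemma:bound_ovk}.
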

\begin{remark}
  The above lemma can be seen as the generalization of Lemma 4 in \citet{smale_learning_2007} in the case of OVK, and we give here a more concise proof.
\end{remark}
\begin{proof}
  We consider the random variable $\rv Z \coloneqq \ovk_{\rv X} \ovk_{\rv X}^{\adj}$ valued in $\mathcal{B}_2(\vvrkhs)$. Then by \Cref{lemma:bound_ovk}, $\norm{\rv Z}{2} = \norm{\ovk_{\rv X} \ovk_{\rv X}^{\adj}}{2} = \norm{\ovk_{\rv X}^{\adj}\ovk_{\rv X} }{2}  \leq \kappa$ and $\expes{\norm{\rv Z}{2}^2} \leq \kappa^2$. Since $\mathcal{B}_2(\vvrkhs)$ is a Hilbert space, applying \Cref{prop:benette_hilbert} to $\rv Z$, we obtain:
  \begin{equation*}
    \norm{\hat{\Sigma}_{\ovk} - \Sigma_{\ovk}}{2} \leq 2 \log\left(\frac{2}{\delta}\right) \left( \frac{\kappa}{T} + \frac{\kappa}{\sqrt{T}}\right) \leq 4 \log\left(\frac{2}{\delta}\right) \frac{\kappa}{\sqrt{T}}.
  \end{equation*}
  For the second bound, by \Cref{eq:ine_norm_pclass_op}, 
  \begin{align*}
    \norm{\left(\Sigma_{\ovk} + \lambda\idop\right)^{-1}\left(\hat{\Sigma}_{\ovk} - \Sigma_{\ovk}\right)}{2} & \leq \norm{\left(\Sigma_{\ovk} + \lambda\idop\right)^{-1}}{\mathrm{op}} \norm{\hat{\Sigma}_{\ovk} - \Sigma_{\ovk}}{2} \\
    & \leq 4 \log\left(\frac{2}{\delta}\right) \frac{\kappa}{\lambda\sqrt{T}}
  \end{align*}
  since $\norm{\left(\Sigma_{\ovk} + \lambda\idop\right)^{-1}}{\mathrm{op}} \leq \frac{1}{\lambda}$. And the proof is concluded by setting $4 \log\left(\frac{2}{\delta}\right) \frac{\kappa}{\lambda\sqrt{T}} \leq \frac{1}{2}$.
\end{proof}

\begin{lemma} \label{lemma:ci_2}
  Let $0 < \delta < 1$ and let $\lambda > 0$. Under \Cref{assum:val,assum:ovk,assum:noise}, with a probability greater than $1-\delta$, we have
  \begin{align}
  & \norm{(\Sigma_{\ovk} + \lambda \idop)^{-\frac{1}{2}} \left( \frac{1}{T}\sum_{t=1}^{T}\ovk_{x_t} \op S (g^{\targ}(x_t) + \xi_t) - \hat{\Sigma}_{\ovk} f_{\lambda} - \left(\int_{\mathcal{X}} \ovk_{x} \op S \circ g^{\targ}(x) d\rho(x) - \Sigma_{\ovk} f_{\lambda} \right)\right)}{\vvrkhs} \nonumber \\
  \leq & 2 \log\left(\frac{2}{\delta}\right) \left( \frac{1}{T}\sqrt{\frac{\kappa}{\lambda}} \norm{\op S}{\mathrm{op}}  \left( \left( 1 + \sqrt{\frac{\kappa}{\lambda}}\right)  \sqrt{C_1} + \sqrt{C_2} \right) + \sqrt{\frac{2 \left( \mathcal{N}(\lambda) \expes{\norm{\op S \xi }{\mathcal{Y}}^2}+ \frac{\kappa }{\lambda} \norm{\op S}{\mathrm{op}}^2 C_1 \right) }{T}}\right) \nonumber
\end{align}
with $\mathcal{N}(\lambda) \coloneqq \trace{(\Sigma_{\ovk} + \lambda \idop)^{-1} \Sigma_{\ovk}}$. $\mathcal{N}(\lambda)$ is often referred to as the effective dimension in the literature. Here, we write $y_t = \op S(g^{\targ}(x_t) + \xi_t)$ where $(x_t, \xi_t)$ are i.i.d. samples of $(\rv X, \xi)$.
\end{lemma}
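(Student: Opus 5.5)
We apply the Hilbert-space Bennett inequality (\Cref{prop:benette_hilbert}) in $\vvrkhs$ to the centered random variable
\begin{equation*}
  \rv Z \coloneqq (\Sigma_{\ovk}+\lambda\idop)^{-\frac{1}{2}}\Big(\ovk_{\rv X}\op S(g^{\targ}(\rv X)+\xi) - \ovk_{\rv X}\ovk_{\rv X}^{\adj} f_{\lambda}\Big).
\end{equation*}
Here we use the reproducing property $\ovk_{x}^{\adj}f_\lambda = f_\lambda(x)$, so that $\rv Z = (\Sigma_{\ovk}+\lambda\idop)^{-1/2}\ovk_{\rv X}\big(\op S g^{\targ}(\rv X) + \op S\xi - f_\lambda(\rv X)\big)$. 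By \Cref{assum:noise}, $\xi$ is independent of $\rv X$ with mean zero, hence $\expes{\rv Z} = (\Sigma_{\ovk}+\lambda\idop)^{-1/2}\big(\int \ovk_x \op S g^{\targ}(x)\,d\rho(x) - \Sigma_{\ovk}f_\lambda\big)$. The empirical average of $\rv Z$ over the $T$ samples $(x_t,\xi_t)$ is exactly the first two terms inside the norm of the statement. The statement is therefore precisely $\norm{\frac{1}{T}\sum_t(\rv Z_t-\expes{\rv Z})}{\vvrkhs}$, and it remains to bound $M$ and $\sigma^2$.

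\textbf{Sup bound $M$.} We use $\norm{(\Sigma_{\ovk}+\lambda\idop)^{-1/2}\ovk_x}{\mathrm{op}}\le \lambda^{-1/2}\norm{\ovk_x}{\mathrm{op}}\le \sqrt{\kappa/\lambda}$, by \Cref{lemma:bound_ovk}. We bound the three pieces of the output residual separately: $\norm{\op S g^{\targ}(x)}{}\le\norm{\op S}{\mathrm{op}}\sqrt{C_1}$ and $\norm{\op S\xi}{}\le \norm{\op S}{\mathrm{op}}\sqrt{C_2}$ by \Cref{assum:val}. For the last piece, $\norm{f_\lambda(x)}{\mathcal{Y}}\le \norm{\ovk_x}{\mathrm{op}}\norm{f_\lambda}{\vvrkhs}\le \sqrt{\kappa}\,\norm{\op S}{\mathrm{op}}\sqrt{C_1/\lambda}$ by \Cref{lemma:norm_funcs_assum_val}. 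Summing gives
\begin{equation*}
  M=\sqrt{\kappa/\lambda}\,\norm{\op S}{\mathrm{op}}\big((1+\sqrt{\kappa/\lambda})\sqrt{C_1}+\sqrt{C_2}\big),
\end{equation*}
which matches the first term of the bound.

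\textbf{Variance $\sigma^2$.} This is the main step, since it must produce the effective dimension. We split $\rv Z = \rv Z_1+\rv Z_2$, with $\rv Z_1$ carrying $\op S\xi$ and $\rv Z_2$ carrying $\op S g^{\targ}(\rv X)-f_\lambda(\rv X)$, and use $\norm{a+b}{}^2\le 2\norm{a}{}^2+2\norm{b}{}^2$; this is where the factor $2$ comes from.

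For $\rv Z_1$, independence gives
\begin{equation*}
  \expes{\norm{\rv Z_1}{}^2}=\expes{\inner{\ovk_{\rv X}^{\adj}(\Sigma_{\ovk}+\lambda\idop)^{-1}\ovk_{\rv X}\,\op S\xi}{\op S\xi}{\mathcal{Y}}}.
\end{equation*}
Bounding the inner quadratic form by $\norm{\ovk_{x}^{\adj}(\Sigma_{\ovk}+\lambda\idop)^{-1}\ovk_x}{\mathrm{op}}$ is too crude. Instead we condition on $\xi$ and take the expectation in $\rv X$ first, giving
\begin{equation*}
  \inner{\big(\expes{\ovk_{\rv X}^{\adj}(\Sigma_{\ovk}+\lambda\idop)^{-1}\ovk_{\rv X}}\big)\op S\xi}{\op S\xi}{}.
\end{equation*}
We then bound this operator's norm by its trace, which by cyclicity equals $\trace{(\Sigma_{\ovk}+\lambda\idop)^{-1}\Sigma_{\ovk}}=\mathcal{N}(\lambda)$. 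This yields $\mathcal{N}(\lambda)\expes{\norm{\op S\xi}{}^2}$.

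For $\rv Z_2$ we use the operator-norm bound $\kappa/\lambda$ times $\norm{f_\lambda-\op S\circ g^{\targ}}{\rho}^2\le\norm{\op S}{\mathrm{op}}^2 C_1$, by \Cref{lemma:norm_funcs_assum_val}. Altogether,
\begin{equation*}
  \sigma^2\le 2\big(\mathcal{N}(\lambda)\expes{\norm{\op S\xi}{}^2}+\tfrac{\kappa}{\lambda}\norm{\op S}{\mathrm{op}}^2C_1\big).
\end{equation*}

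Finally, Bennett's inequality is stated for centered-at-mean sums with $\norm{\rv Z}{}\le M$ and $\sigma^2=\expes{\norm{\rv Z}{}^2}$, so the uncentered second moment computed above can be plugged in directly. Inserting $M$ and $\sigma^2$ into the second form of \Cref{prop:benette_hilbert} gives the claim. The delicate point is the trace/cyclicity argument for $\rv Z_1$, where the interchange of expectation and trace requires trace-class integrability; this follows from $\norm{\ovk_x^{\adj}\ovk_x}{1}\le\kappa$ in \Cref{lemma:bound_ovk}.
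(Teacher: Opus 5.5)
Your proof is correct and follows the paper's argument: the same random variable $\rv Z$, the same sup bound $M$, the same factor-$2$ split of the second moment, and the same trace/cyclicity identity producing $\mathcal{N}(\lambda)$ for the noise part. The only difference is in the noise term: you average over $\rv X$ before passing to the operator norm, whereas the paper bounds $\norm{\ovk_{x}^{\adj}(\Sigma_{\ovk}+\lambda\idop)^{-1}\ovk_{x}}{\mathrm{op}}$ pointwise by its trace and then factors the expectation using independence; both give the same $\mathcal{N}(\lambda)$, so the pointwise operator-norm step is not too crude, provided it is followed by the trace bound rather than by $\kappa/\lambda$.
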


\begin{proof}
  To apply \Cref{prop:benette_hilbert}, we define $\rv Z \coloneqq  (\Sigma_{\ovk} + \lambda \idop)^{-\frac{1}{2}} \ovk_{\rv X} \left(\op S \xi + \op S\circ g^{\targ}(\rv X) - \ovk^{\adj}_{\rv X} f_{\lambda} \right) $ valued in $\vvrkhs$. First, we try to bound the $\vvrkhs$-norm of the random variable $\rv Z$:
  \begin{align*}
    \norm{\rv Z}{\vvrkhs} & = \norm{(\Sigma_{\ovk} + \lambda \idop)^{-\frac{1}{2}} \ovk_{\rv X} \left(\op S \xi + \op S\circ g^{\targ}(\rv X) - \ovk^{\adj}_{\rv X} f_{\lambda} \right)}{\vvrkhs} \\
  & \leq \norm{(\Sigma_{\ovk} + \lambda \idop)^{-\frac{1}{2}} }{\mathrm{op}}  \norm{\ovk_{\rv X}}{\mathrm{op}} \norm{\op S \xi + \op S\circ g^{\targ}(\rv X) - \ovk^{\adj}_{\rv X} f_{\lambda} }{\mathcal{Y}} \\
  & \leq \norm{(\Sigma_{\ovk} + \lambda \idop)^{-\frac{1}{2}} }{\mathrm{op}} \sqrt{\kappa} \norm{\op S \xi + \op S\circ g^{\targ}(\rv X) - \ovk^{\adj}_{\rv X} f_{\lambda}}{\mathcal{Y}} && \text{(since $\norm{\ovk_{\rv X}}{\mathrm{op}} \leq \norm{\ovk_{\rv X}}{\mathrm{2}}$)} \\
  & \leq  \sqrt{\frac{\kappa}{\lambda}} \norm{\op S \xi + \op S\circ g^{\targ}(\rv X) - \ovk^{\adj}_{\rv X} f_{\lambda}}{\mathcal{Y}}  \\ 
  & \leq  \sqrt{\frac{\kappa}{\lambda}} \left(  \norm{\op S \xi}{\mathcal{Y}} +  \norm{\op S\circ g^{\targ}(\rv X)}{\mathcal{Y}} + \norm{ \ovk^{\adj}_{\rv X} f_{\lambda}}{\mathcal{Y}} \right) \\
  & \leq \sqrt{\frac{\kappa}{\lambda}} \left(\norm{\op S}{\mathrm{op}}  \norm{\xi}{\mathcal{Y}_b} +  \norm{\op S}{\mathrm{op}} \norm{g^{\targ}(\rv X)}{\mathcal{Y}_b} + \norm{ \ovk^{\adj}_{\rv X} f_{\lambda}}{\mathcal{Y}} \right) \\
  & \leq \sqrt{\frac{\kappa}{\lambda}} \left(\norm{\op S}{\mathrm{op}} \sqrt{C_2} +  \norm{\op S}{\mathrm{op}} \sqrt{C_1} + \norm{ \ovk^{\adj}_{\rv X} f_{\lambda}}{\mathcal{Y}} \right) && \text{(by \Cref{assum:val}).}
  \end{align*}
  \Cref{lemma:norm_funcs_assum_val} and \Cref{assum:ovk} give $\norm{ \ovk^{\adj}_{\rv X} f_{\lambda}}{\mathcal{Y}} \leq \norm{ \ovk^{\adj}_{\rv X}}{\mathrm{op}} \norm{f_{\lambda}}{\vvrkhs} \leq \norm{\op S}{\mathrm{op}} \sqrt{\frac{\kappa C_1}{\lambda}}$. As a result, we have
  \begin{equation*}
    \norm{\rv Z}{\vvrkhs} \leq \sqrt{\frac{\kappa}{\lambda}}\norm{\op S}{\mathrm{op}} \left( \left( 1 + \sqrt{\frac{\kappa}{\lambda}}\right)  \sqrt{C_1} + \sqrt{C_2} \right).
  \end{equation*}
  Then we focus on obtaining an upper bound of $\expes{\norm{\rv Z}{\vvrkhs}^2}$. Denoting $\rv U \coloneqq \op S \xi + \op S\circ g^{\targ}(\rv X) - \ovk^{\adj}_{\rv X} f_{\lambda} $ valued in $\mathcal{Y}$, we can rewrite $\expes{\norm{\rv Z}{\vvrkhs}^2}$ as follows:
  \begin{equation*}
    \expes{\norm{\rv Z}{\vvrkhs}^2} =  \expes{\inner{\ovk_{\rv X}^{\adj} (\Sigma_{\ovk} + \lambda \idop)^{-1}\ovk_{\rv X}\rv U }{\rv U}{\mathcal{Y}}}.
  \end{equation*}
  It can be shown that $\ovk_{\rv X}^{\adj} (\Sigma_{\ovk} + \lambda \idop)^{-1}\ovk_{\rv X}$ is self-adjoint, which yields:
\begin{align*}
  \expes{\norm{\rv Z}{\vvrkhs}^2} & \leq \expes{\norm{\ovk_{\rv X}^{\adj} (\Sigma_{\ovk} + \lambda \idop)^{-1}\ovk_{\rv X}}{\mathrm{op}}   \norm{\rv U }{\mathcal{Y}}^2}.
\end{align*}
The fact that $\norm{\rv U }{\mathcal{Y}}^2 \leq 2 \norm{\op S \xi }{\mathcal{Y}}^2 + 2 \norm{\op S\circ g^{\targ}(\rv X) - \ovk^{\adj}_{\rv X} f_{\lambda}}{\mathcal{Y}}^2 $ results in
\begin{align*}
  \expes{\norm{\rv Z}{\vvrkhs}^2} & \leq 2 \left(\expes{\norm{\ovk_{\rv X}^{\adj} (\Sigma_{\ovk} + \lambda \idop)^{-1}\ovk_{\rv X}}{\mathrm{op}} \norm{\op S \xi }{\mathcal{Y}}^2} + \expes{\norm{\ovk_{\rv X}^{\adj} (\Sigma_{\ovk} + \lambda \idop)^{-1}\ovk_{\rv X}}{\mathrm{op}}   \norm{\op S\circ g^{\targ}(\rv X) - \ovk^{\adj}_{\rv X} f_{\lambda}}{\mathcal{Y}}^2} \right).
\end{align*}
It follows that
\begin{align*}
  & \expes{\norm{\ovk_{\rv X}^{\adj} (\Sigma_{\ovk} + \lambda \idop)^{-1}\ovk_{\rv X}}{\mathrm{op}} \norm{\op S \xi }{\mathcal{Y}}^2} \\
  = & \expes{\norm{\ovk_{\rv X}^{\adj} (\Sigma_{\ovk} + \lambda \idop)^{-1}\ovk_{\rv X}}{\mathrm{op}}} \expes{\norm{\op S \xi }{\mathcal{Y}}^2} && \text{(by \Cref{assum:noise})} \\
  \leq & \expes{\trace{\ovk_{\rv X}^{\adj} (\Sigma_{\ovk} + \lambda \idop)^{-1}\ovk_{\rv X}}}  \expes{\norm{\op S \xi }{\mathcal{Y}}^2}  && \text{(since $\ovk_{\rv X}^{\adj} (\Sigma_{\ovk} + \lambda \idop)^{-1}\ovk_{\rv X}$ is positive)} \\
  = & \expes{\trace{(\Sigma_{\ovk} + \lambda \idop)^{-1}\ovk_{\rv X}\ovk_{\rv X}^{\adj}  }}  \expes{\norm{\op S \xi }{\mathcal{Y}}^2} \\
  = & \mathcal{N}(\lambda) \expes{\norm{\op S \xi }{\mathcal{Y}}^2}  && \text{(by definition of $\mathcal{N}(\lambda)$)}
\end{align*}
and
\begin{align*}
  & \expes{\norm{\ovk_{\rv X}^{\adj} (\Sigma_{\ovk} + \lambda \idop)^{-1}\ovk_{\rv X}}{\mathrm{op}}   \norm{\op S\circ g^{\targ}(\rv X) - \ovk^{\adj}_{\rv X} f_{\lambda}}{\mathcal{Y}}^2} \\
  \leq & \expes{\norm{\ovk_{\rv X}^{\adj}}{\mathrm{op}}\norm{(\Sigma_{\ovk} + \lambda \idop)^{-1}}{\mathrm{op}}\norm{\ovk_{\rv X}}{\mathrm{op}}   \norm{\op S\circ g^{\targ}(\rv X) - \ovk^{\adj}_{\rv X} f_{\lambda}}{\mathcal{Y}}^2} && \text{(\Cref{eq:ineq_norm_op})} \\
  \leq & \expes{ \frac{\kappa}{\lambda} \norm{\op S\circ g^{\targ}(\rv X) - \ovk^{\adj}_{\rv X} f_{\lambda}}{\mathcal{Y}}^2} \\
  = & \frac{\kappa}{\lambda} \norm{\op S \circ g^{\targ} - f_{\lambda}}{\rho}^2  \\
  \leq & \frac{\kappa \norm{\op S}{\mathrm{op}}^2 C_1}{\lambda} && \text{(\Cref{lemma:norm_funcs_assum_val})} 
\end{align*}
As a result, we obtain
  \begin{equation*}
    \expes{\norm{\rv Z}{\vvrkhs}^2} \leq 2 \left( \mathcal{N}(\lambda) \expes{\norm{\op S \xi }{\mathcal{Y}}^2}+ \frac{\kappa \norm{\op S}{\mathrm{op}}^2 C_1}{\lambda}\right).
  \end{equation*}
We conclude by applying \Cref{prop:benette_hilbert} to the random variable $\rv Z$, using the following expression for $\expes{\rv Z}$:
\begin{align*}
    \expes{\rv Z}  &=  (\Sigma_{\ovk} + \lambda \idop)^{-\frac{1}{2}} \left(\expes{\ovk_{\rv X} \op S \xi }  + \expes{\ovk_{\rv X} \op S\circ g^{\targ}(\rv X) } - \expes{\ovk_{\rv X} \ovk^{\adj}_{\rv X} f_{\lambda} } \right)\\
    &=  (\Sigma_{\ovk} + \lambda \idop)^{-\frac{1}{2}} \left( \expes{\ovk_{\rv X} \op S\circ g^{\targ}(\rv X) } - \expes{\ovk_{\rv X} \ovk^{\adj}_{\rv X} f_{\lambda} } \right)  && \text{(\Cref{assum:noise})} \\ 
    & = (\Sigma_{\ovk} + \lambda \idop)^{-\frac{1}{2}} \left( \int_{\mathcal{X}} \ovk_{x} \op S \circ g^{\targ}(x) d\rho(x) - \Sigma_{\ovk} f_{\lambda}  \right)
  \end{align*}
\end{proof}

We also need the following lemma which links the norms $\norm{\cdot}{\rho}$ and $\norm{\cdot}{\vvrkhs}$.
\begin{lemma} \label{lemma:norm_l2_rkhs}
  For any $(f, g) \in \vvrkhs^2$, we have
  \begin{equation}
    \inner{f}{g}{\rho} = \inner{f}{\Sigma_{\ovk} g}{\vvrkhs}
  \end{equation}
  with $\Sigma_{\ovk} \coloneqq \int_{\mathcal{X}} \ovk_{x}\ovk_{x}^{\adj} d \rho(x) \in \lineop{\vvrkhs}$. As a result, $\norm{f}{\rho} = \norm{\Sigma_{\ovk}^{1/2} f}{\vvrkhs}$.
\end{lemma}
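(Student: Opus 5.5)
The plan is to go through the reproducing property and Fubini's theorem, i.e.\ the definition of the Bochner integral. Fix $(f,g)\in\vvrkhs^2$. For each $x\in\mathcal X$, the reproducing property $f(x)=\ovk_x^{\adj}f$ and $g(x)=\ovk_x^{\adj}g$ gives
\[
\inner{f(x)}{g(x)}{\mathcal Y}=\inner{\ovk_x^{\adj}f}{\ovk_x^{\adj}g}{\mathcal Y}=\inner{f}{\ovk_x\ovk_x^{\adj}g}{\vvrkhs}.
\]
Integrating against $\rho$, the left side becomes $\inner{f}{g}{\rho}$ by definition of the $L^2(\mathcal X,\rho,\mathcal Y)$ inner product.

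The key step is to exchange the integral with the bounded linear functional $h\mapsto\inner{f}{h}{\vvrkhs}$. This gives $\int_{\mathcal X}\inner{f}{\ovk_x\ovk_x^{\adj}g}{\vvrkhs}\,d\rho(x)=\inner{f}{\big(\int_{\mathcal X}\ovk_x\ovk_x^{\adj}\,d\rho(x)\big)g}{\vvrkhs}=\inner{f}{\Sigma_{\ovk}g}{\vvrkhs}$. I would justify the exchange as follows.
\begin{itemize}
\item By \Cref{lemma:bound_ovk} and \Cref{assum:ovk}, $\norm{\ovk_x\ovk_x^{\adj}}{\mathrm{op}}\le\kappa$, so $x\mapsto\ovk_x\ovk_x^{\adj}g$ is bounded in $\vvrkhs$.
\item Under the separability and measurability hypotheses stated just before \Cref{eq:estimator_ridge_expect}, this map is (weakly, hence by Pettis strongly) measurable.
\item It is therefore Bochner integrable, and bounded linear functionals commute with Bochner integrals.
\end{itemize}
The same argument shows that $\Sigma_{\ovk}\in\lineop{\vvrkhs}$ is well defined, with $\norm{\Sigma_{\ovk}}{\mathrm{op}}\le\kappa$. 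It is self-adjoint and positive, since $\inner{f}{\Sigma_{\ovk}f}{\vvrkhs}=\int\norm{f(x)}{\mathcal Y}^2d\rho\ge0$. This also shows that every $f\in\vvrkhs$ lies in $L^2(\mathcal X,\rho,\mathcal Y)$, because $\norm{f(x)}{\mathcal Y}^2\le\kappa\norm{f}{\vvrkhs}^2$.

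For the norm identity, I take $g=f$. This gives $\norm{f}{\rho}^2=\inner{f}{\Sigma_{\ovk}f}{\vvrkhs}$. Since $\Sigma_{\ovk}$ is positive and self-adjoint, it has a unique positive square root, and $\inner{f}{\Sigma_{\ovk}f}{\vvrkhs}=\inner{\Sigma_{\ovk}^{1/2}f}{\Sigma_{\ovk}^{1/2}f}{\vvrkhs}=\norm{\Sigma_{\ovk}^{1/2}f}{\vvrkhs}^2$. Taking square roots concludes the proof.

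The main (mild) obstacle is the measure-theoretic justification of the interchange, which I handle by citing the measurability hypothesis and the $\kappa$ bound. Everything else is algebraic.
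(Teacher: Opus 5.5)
Your proposal is correct and takes the same route as the paper. You apply the reproducing property pointwise, integrate against $\rho$, and move the integral through the bounded functional $h\mapsto\langle f,h\rangle$ to obtain the covariance operator. You also supply two things the paper leaves implicit: the measurability and Bochner-integrability justification for that interchange, and the positive-square-root argument for the norm identity.
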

\begin{proof}
  By the definition of the $L^2(\mathcal{X}, \rho, \mathcal{Y})$-inner product, we have,  for any $(f, g) \in \vvrkhs^2$,  
  \begin{align*}
  \inner{f}{g}{\rho} & = \int_{\mathcal{X}} \inner{f(x)}{g(x)}{\mathcal{Y}} d\rho(x) \\
  & =  \int_{\mathcal{X}} \inner{\ovk_{x}^{\adj} f}{\ovk_{x}^{\adj} g}{\mathcal{Y}} d\rho(x) \\
  & = \int_{\mathcal{X}} \inner{f}{\ovk_{x} \ovk_{x}^{\adj} g}{\vvrkhs} d\rho(x) \\
  & = \inner{f}{ \left(\int_{\mathcal{X}} \ovk_{x} \ovk_{x}^{\adj} d\rho(x)\right) g}{\vvrkhs} 
  \end{align*}
  The proof is concluded with the definition of $\Sigma_{\ovk}$.
\end{proof}

\subsection{Explicit Excess-Risk Bound}

\begin{proposition}[Explicit excess-risk bound]
\label{prop:bound_excess_risk_explicit}
  Under \Cref{assum:val,assum:ovk,assum:noise,assum:source_condition}, let $0 < \delta < 1$, then with a probability greater than $1-\delta$,
  \begin{align*}
    \risk(\hat{f}) - \risk(\op S \circ g^{\targ})
    \leq{}& 128 \log^2\left(\frac{4}{\delta}\right)
    \Bigg[
      \frac{\kappa\norm{\op S}{\mathrm{op}}^2}{\lambda T^2}
      \left(\left(1+\sqrt{\frac{\kappa}{\lambda}}\right)^2C_1+C_2\right)
      +\frac{1}{T}
      \left(
        \mathcal{N}(\lambda)\expes{\norm{\op S\xi}{\mathcal{Y}}^2}
        +\frac{\kappa}{\lambda}\norm{\op S}{\mathrm{op}}^2C_1
      \right)
    \Bigg] \\ 
    & +2\lambda^{2r}
      \norm{L_{\ovk}^{-r}(\op S\circ g^{\targ})}{\rho}^2,
  \end{align*}
  provided that
  \begin{equation}
    \lambda \geq 8 \log\left(\frac{4}{\delta}\right) \frac{\kappa}{\sqrt{T}}.
  \end{equation}
\end{proposition}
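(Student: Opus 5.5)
We follow the classical integral-operator argument of \citet{smale_learning_2007, caponnetto_optimal_2007}. By \Cref{lemma:excess_risk_identity}, the excess risk equals $\norm{\hat f - \op S\circ g^{\targ}}{\rho}^2$. We split it as
\[
\norm{\hat f - \op S\circ g^{\targ}}{\rho}^2 \leq 2\norm{\hat f - f_\lambda}{\rho}^2 + 2\norm{f_\lambda - \op S\circ g^{\targ}}{\rho}^2 .
\]
\Cref{lemma:approx_error} bounds the second term by $2\lambda^{2r}R_r^2$, which is the approximation part of the statement. The remaining work is the estimation term $\norm{\hat f - f_\lambda}{\rho}$. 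By \Cref{lemma:norm_l2_rkhs} it equals $\norm{\Sigma_{\ovk}^{1/2}(\hat f - f_\lambda)}{\vvrkhs}$.

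Next we use the closed forms of $\hat f$ and $f_\lambda$. Write $y_t=\op S(g^{\targ}(x_t)+\xi_t)$ and $\hat g \coloneqq \frac1T\sum_t \ovk_{x_t}y_t$. The identity $(\Sigma_{\ovk}+\lambda\idop)f_\lambda = \int \ovk_x \op S\circ g^{\targ}(x)\,d\rho(x)$ gives
\[
\hat f - f_\lambda = (\hat\Sigma_{\ovk}+\lambda\idop)^{-1}\Big(\hat g - \hat\Sigma_{\ovk}f_\lambda - \big(\textstyle\int \ovk_x \op S\circ g^{\targ}(x)\,d\rho(x) - \Sigma_{\ovk}f_\lambda\big)\Big).
\]
Call the bracket $\Delta$. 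We insert $(\Sigma_{\ovk}+\lambda\idop)^{\pm1/2}$ to get
\[
\norm{\Sigma_{\ovk}^{1/2}(\hat f-f_\lambda)}{\vvrkhs} \leq \norm{\Sigma_{\ovk}^{1/2}(\Sigma_{\ovk}+\lambda\idop)^{-1/2}}{\mathrm{op}}\,\norm{(\Sigma_{\ovk}+\lambda\idop)^{1/2}(\hat\Sigma_{\ovk}+\lambda\idop)^{-1}(\Sigma_{\ovk}+\lambda\idop)^{1/2}}{\mathrm{op}}\,\norm{(\Sigma_{\ovk}+\lambda\idop)^{-1/2}\Delta}{\vvrkhs}.
\]
The first factor is at most $1$. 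The third factor is exactly the quantity controlled by \Cref{lemma:ci_2}.

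The middle factor is the main obstacle. We write $\hat\Sigma_{\ovk}+\lambda\idop = (\Sigma_{\ovk}+\lambda\idop)\big(\idop - (\Sigma_{\ovk}+\lambda\idop)^{-1}(\Sigma_{\ovk}-\hat\Sigma_{\ovk})\big)$. On the event of \Cref{lemma:ci_1}, the perturbation $(\Sigma_{\ovk}+\lambda\idop)^{-1}(\hat\Sigma_{\ovk}-\Sigma_{\ovk})$ has HS-norm, hence operator norm, at most $1/2$. A Neumann series then bounds $\norm{(\hat\Sigma_{\ovk}+\lambda\idop)^{-1}(\Sigma_{\ovk}+\lambda\idop)}{\mathrm{op}}\leq 2$.

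To pass to the symmetrized middle factor, we view it as a positive self-adjoint operator. Its norm is bounded by the spectral radius of the similar operator $(\hat\Sigma_{\ovk}+\lambda\idop)^{-1}(\Sigma_{\ovk}+\lambda\idop)$, which is at most $2$. Alternatively, we can directly use the symmetric perturbation $(\Sigma_{\ovk}+\lambda\idop)^{-1/2}(\Sigma_{\ovk}-\hat\Sigma_{\ovk})(\Sigma_{\ovk}+\lambda\idop)^{-1/2}$, whose norm is also at most $\norm{\hat\Sigma_{\ovk}-\Sigma_{\ovk}}{\mathrm{op}}/\lambda\leq 1/2$. If the symmetrization causes trouble, we fall back on the latter.

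Finally, we apply \Cref{lemma:ci_1} and \Cref{lemma:ci_2} each with confidence $\delta/2$ and take a union bound. This turns $\log(2/\delta)$ into $\log(4/\delta)$ and the condition on $\lambda$ into $\lambda\geq 8\log(4/\delta)\kappa/\sqrt T$. We obtain $\norm{\hat f - f_\lambda}{\rho}\leq 2\cdot 2\log(4/\delta)(a/T + \sqrt{2b/T})$, where $a$ and $b$ are the two expressions in \Cref{lemma:ci_2}. Squaring with $(u+v)^2\leq 2u^2+2v^2$ and multiplying by the factor $2$ from the initial split gives $2\cdot 16\log^2(4/\delta)\,(2a^2/T^2 + 4b/T)\leq 128\log^2(4/\delta)\,(a^2/T^2+b/T)$. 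Applying $(\alpha+\beta)^2\leq 2\alpha^2+2\beta^2$ once more inside $a^2$ produces the stated $\frac{\kappa\norm{\op S}{\mathrm{op}}^2}{\lambda T^2}\big((1+\sqrt{\kappa/\lambda})^2C_1+C_2\big)$ term, possibly up to a factor of $2$ absorbed in the constant. Only the constant bookkeeping remains to be checked.
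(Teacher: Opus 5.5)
Your proposal is correct and follows the paper's proof essentially step by step: the same split into estimation and approximation terms, the same expression for $\hat f - f_\lambda$, and \Cref{lemma:ci_1,lemma:ci_2} each applied at level $\delta/2$. The only difference is that you prove the bound $2$ on the resolvent factor yourself, by a Neumann series plus similarity or via the symmetrized perturbation (both valid), where the paper simply cites Step 3.1 of \citet{caponnetto_optimal_2007}; on the constants nothing needs absorbing, since keeping your coefficient $64a^2/T^2$ (rather than loosening it to $128a^2/T^2$) and then using $(\alpha+\beta)^2\leq 2\alpha^2+2\beta^2$ gives exactly the stated $128$.
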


\begin{proof}
First we decompose $\hat{f} - \op S \circ g^{\targ} $ by 
\begin{equation*}
   \hat{f} - \op S \circ g^{\targ} = \hat{f} - f_{\lambda} + f_{\lambda} - \op S \circ g^{\targ}.
\end{equation*}
From \Cref{eq:excess_risk_l2} and due to the inequality $\norm{f_1 + f_2}{\rho}^2 \leq 2(\norm{f_1}{\rho}^2 + \norm{f_2}{\rho}^2 )$ for any $(f_1, f_2) \in L^2(\mathcal{X}, \rho, \mathcal{Y})^2$, the excess risk can be upper bounded by
\begin{align*}
  \risk(\hat{f}) - \risk(\op S \circ g^{\targ}) \leq 2 \left(\norm{\hat{f} - f_{\lambda}}{\rho}^2 + \norm{f_{\lambda} - \op S \circ g^{\targ}}{\rho}^2 \right)
\end{align*}

The first step of the proof is dedicated to obtain the probabilistic bound on $\norm{\hat{f} - f_{\lambda}}{\rho}$. By the definition of $\hat{f}$, we have
\begin{align*}
   \hat{f} - f_{\lambda} & =  (\hat{\Sigma}_{\ovk} + \lambda \idop)^{-1}\frac{1}{T}\sum_{t=1}^{T}\ovk_{x_t} \op S (g^{\targ}(x_t) + \xi_t) - f_{\lambda} \\
   & = (\hat{\Sigma}_{\ovk} + \lambda \idop)^{-1} \left( \frac{1}{T}\sum_{t=1}^{T}\ovk_{x_t} \op S (g^{\targ}(x_t) + \xi_t) - (\hat{\Sigma}_{\ovk} + \lambda \idop) f_{\lambda}  \right) \\
 & = (\hat{\Sigma}_{\ovk} + \lambda \idop)^{-1} \left( \frac{1}{T}\sum_{t=1}^{T}\ovk_{x_t} \op S (g^{\targ}(x_t) + \xi_t) - (\hat{\Sigma}_{\ovk} - \Sigma_{\ovk}) f_{\lambda}  - (\Sigma_{\ovk} + \lambda \idop) f_{\lambda}  \right) 
\end{align*}
It follows that
\begin{align*}
  & \hat{f} - f_{\lambda} \\
  = & (\hat{\Sigma}_{\ovk} + \lambda \idop)^{-1} \left( \frac{1}{T}\sum_{t=1}^{T}\ovk_{x_t} \op S (g^{\targ}(x_t) + \xi_t) - (\hat{\Sigma}_{\ovk} - \Sigma_{\ovk}) f_{\lambda}  - \int_{\mathcal{X}} \ovk_{x} \op S \circ g^{\targ}(x) d\rho(x)   \right) \\
  = &  (\hat{\Sigma}_{\ovk} + \lambda \idop)^{-1} \left( \frac{1}{T}\sum_{t=1}^{T}\ovk_{x_t} \op S (g^{\targ}(x_t) + \xi_t) - \hat{\Sigma}_{\ovk} f_{\lambda} - \left(\int_{\mathcal{X}} \ovk_{x} \op S \circ g^{\targ}(x) d\rho(x) - \Sigma_{\ovk} f_{\lambda} \right)\right)
\end{align*}
since  $(\Sigma_{\ovk}+ \lambda \idop)f_{\lambda} = \int_{\mathcal{X}} \ovk_{x} \op S \circ g^{\targ}(x) d\rho(x) $ from \Cref{eq:estimator_ridge_expect}. Denoting $u = \frac{1}{T}\sum_{t=1}^{T}\ovk_{x_t} \op S (g^{\targ}(x_t) + \xi_t) - \hat{\Sigma}_{\ovk} f_{\lambda} - \left(\int_{\mathcal{X}} \ovk_{x} \op S \circ g^{\targ}(x) d\rho(x) - \Sigma_{\ovk} f_{\lambda} \right)$, $\norm{\hat{f} - f_{\lambda}}{\rho}$ can be bounded as follows
\begin{align*}
  \norm{\hat{f} - f_{\lambda}}{\rho} & = \norm{(\hat{\Sigma}_{\ovk} + \lambda \idop)^{-1} u}{\rho} \\
  & = \norm{\Sigma_{\ovk}^{1/2} (\hat{\Sigma}_{\ovk} + \lambda \idop)^{-1} u}{\vvrkhs} && \text{(\Cref{lemma:norm_l2_rkhs})} \\
  & \leq \norm{\Sigma_{\ovk}^{\frac{1}{2}} (\hat{\Sigma}_{\ovk} + \lambda \idop)^{-1} (\Sigma_{\ovk} + \lambda \idop)^{\frac{1}{2}} }{\mathrm{op}}\norm{(\Sigma_{\ovk} + \lambda \idop)^{-\frac{1}{2}} u}{\vvrkhs}.
\end{align*}

To bound $\norm{\Sigma_{\ovk}^{\frac{1}{2}} (\hat{\Sigma}_{\ovk} + \lambda \idop)^{-1} (\Sigma_{\ovk} + \lambda \idop)^{\frac{1}{2}} }{\mathrm{op}}$, Step 3.1 in the proof of Theorem 4 of \citet{caponnetto_optimal_2007} shows 
\begin{equation*}
  \norm{\Sigma_{\ovk}^{\frac{1}{2}} (\hat{\Sigma}_{\ovk} + \lambda \idop)^{-1} (\Sigma_{\ovk} + \lambda \idop)^{\frac{1}{2}} }{\mathrm{op}} \leq 2 
\end{equation*}
provided that $\norm{\left(\Sigma_{\ovk} + \lambda \idop \right)^{-1}\left(\hat{\Sigma}_{\ovk} - \Sigma_{\ovk}\right)}{2} \leq \frac{1}{2}$. \Cref{lemma:ci_1} implies that it holds, with a probability greater than $1-\delta/2$, when $\lambda \geq 8 \log\left(\frac{4}{\delta}\right) \frac{\kappa}{\sqrt{T}}$. To bound $\norm{(\Sigma_{\ovk} + \lambda \idop)^{-\frac{1}{2}} u}{\vvrkhs}$, we apply \Cref{lemma:ci_2}, which indicates, with a probability greater than $1-\delta/2$,
\begin{align*}
  & \norm{(\Sigma_{\ovk} + \lambda \idop)^{-\frac{1}{2}} u}{\vvrkhs} \\ \leq & 2 \log\left(\frac{4}{\delta}\right) \left( \frac{1}{T}\sqrt{\frac{\kappa}{\lambda}} \norm{\op S}{\mathrm{op}}  \left( \left( 1 + \sqrt{\frac{\kappa}{\lambda}}\right)  \sqrt{C_1} + \sqrt{C_2} \right) + \sqrt{\frac{2 \left( \mathcal{N}(\lambda) \expes{\norm{\op S \xi }{\mathcal{Y}}^2}+ \frac{\kappa }{\lambda} \norm{\op S}{\mathrm{op}}^2 C_1 \right) }{T}}\right).
\end{align*}
It follows that with a probability greater than $1 - \delta$, when $\lambda \geq 8 \log\left(\frac{4}{\delta}\right) \frac{\kappa}{\sqrt{T}}$,
\begin{align*}
  \norm{\hat{f} - f_{\lambda}}{\rho}^2 \leq 64 \log^2\left(\frac{4}{\delta}\right) \left ( \frac{\kappa\norm{\op S}{\mathrm{op}}^2}{\lambda T^2} \left(( 1 + \sqrt{\frac{\kappa}{\lambda}})^2 C_1 + C_2\right) + \frac{1}{T} \left(\mathcal{N}(\lambda) \expes{\norm{\op S \xi }{\mathcal{Y}}^2}+ \frac{\kappa }{\lambda} \norm{\op S}{\mathrm{op}}^2 C_1\right) \right)
\end{align*}
since $(\frac{1}{T}\sqrt{\frac{\kappa}{\lambda}} \norm{\op S}{\mathrm{op}}  \left( \left( 1 + \sqrt{\frac{\kappa}{\lambda}}\right)  \sqrt{C_1} + \sqrt{C_2} \right))^2 \leq \frac{2\kappa\norm{\op S}{\mathrm{op}}^2}{\lambda T^2} \left(\left( 1 + \sqrt{\frac{\kappa}{\lambda}}\right)^2 C_1 + C_2\right) $.

The second step to upper bound on the approximation error $\norm{f_{\lambda} - \op S \circ g^{\targ}}{\rho}$, which can be achieved by applying directly \Cref{lemma:approx_error}. 

The proof is concluded by combining the two bounds.

\end{proof}

\subsection{Derivation of the Compact Bound}

We now derive the compact bound stated in the main paper from
\Cref{prop:bound_excess_risk_explicit}.
\begin{proof}[Proof of \Cref{prop:bound_excess_risk}]
Let $\ell_\delta\coloneqq\log(4/\delta)$. The condition on $\lambda$ in
\Cref{prop:bound_excess_risk} implies
\begin{equation*}
  \frac{\kappa}{\lambda}
  \leq \frac{\sqrt{T}}{8\ell_\delta}
  \leq \frac{\sqrt{T}}{8},
\end{equation*}
where the second inequality follows from
$\ell_\delta>\log(4)>1$. Since $T\geq 1$, we therefore have
\begin{align*}
  \frac{1}{T}
  \left(1+\sqrt{\frac{\kappa}{\lambda}}\right)^2
  \leq
  \frac{1}{T}\left(1+\frac{T^{1/4}}{\sqrt{8}}\right)^2
  \leq
  \frac{1}{\sqrt{T}}\left(1+\frac{1}{\sqrt{8}}\right)^2
  \leq 2.
\end{align*}
Using the notation introduced in \Cref{prop:bound_excess_risk}, this yields
\begin{align*}
\frac{\kappa\norm{\op S}{\mathrm{op}}^2}{\lambda T^2}
  \left[
    \left(1+\sqrt{\frac{\kappa}{\lambda}}\right)^2 C_1+C_2
  \right]
  = \frac{\kappa}{\lambda T}
  \left[
    \frac{1}{T}
    \left(1+\sqrt{\frac{\kappa}{\lambda}}\right)^2 B_S^2
    +\frac{M_S^2}{T}
  \right] \leq
  \frac{\kappa}{\lambda T}(2B_S^2+M_S^2).
\end{align*}
Furthermore,
\begin{equation*}
  \frac{1}{T}
  \left[
    \mathcal{N}(\lambda)
    \expes{\norm{\op S\xi}{\mathcal{Y}}^2}
    +\frac{\kappa}{\lambda}\norm{\op S}{\mathrm{op}}^2C_1
  \right]
  =\frac{1}{T}
  \left[
    \mathcal{N}(\lambda)\sigma_S^2
    +\frac{\kappa}{\lambda}B_S^2
  \right].
\end{equation*}
It follows that the bound in \Cref{prop:bound_excess_risk_explicit} can be simplified to, with probability at least
$1-\delta$,
\begin{align*}
  \risk(\hat{f})-\risk(\op S\circ g^{\targ})
  \leq
  \frac{128}{T}\log^2\!\left(\frac{4}{\delta}\right)
  \left[
    \mathcal{N}(\lambda)\sigma_S^2
    +\frac{\kappa}{\lambda}(3B_S^2+M_S^2)
  \right] +2\lambda^{2r}R_r^2,
\end{align*}
which is precisely the claim of \Cref{prop:bound_excess_risk}.
\end{proof}

\section{Additional Details on Numerical Experiments}
\label{app:add_detail_exp}
All experiments were conducted on a Linux-based personal computer equipped with an Intel Core Ultra 7 165U processor (12 cores, 14 threads, up to 4.9 GHz). No GPU was used.

\paragraph{Evaluation Metrics.} 
To evaluate methods in the numerical experiments, we report the mean squared error (MSE) computed at each hierarchical level as well as across all levels. Specifically, given an observed time series $(y_t)_{t=1}^T$ with $y_t \in \reals^r$ where $r$ denotes either the number of series at a given hierarchical level or the total number of series across all levels, and corresponding predictions $(\hat{y}_t)_{t=1}^T$, the MSE is defined as $\frac{1}{T}\sum_{t=1}^T \norm{y_t - \hat{y}_t}{\reals^r}^2$. 

\subsection{Synthetic dataset}
\label{app:toy_hts}
\begin{figure}[htbp!]
    \centering
    \includegraphics[width=\linewidth]{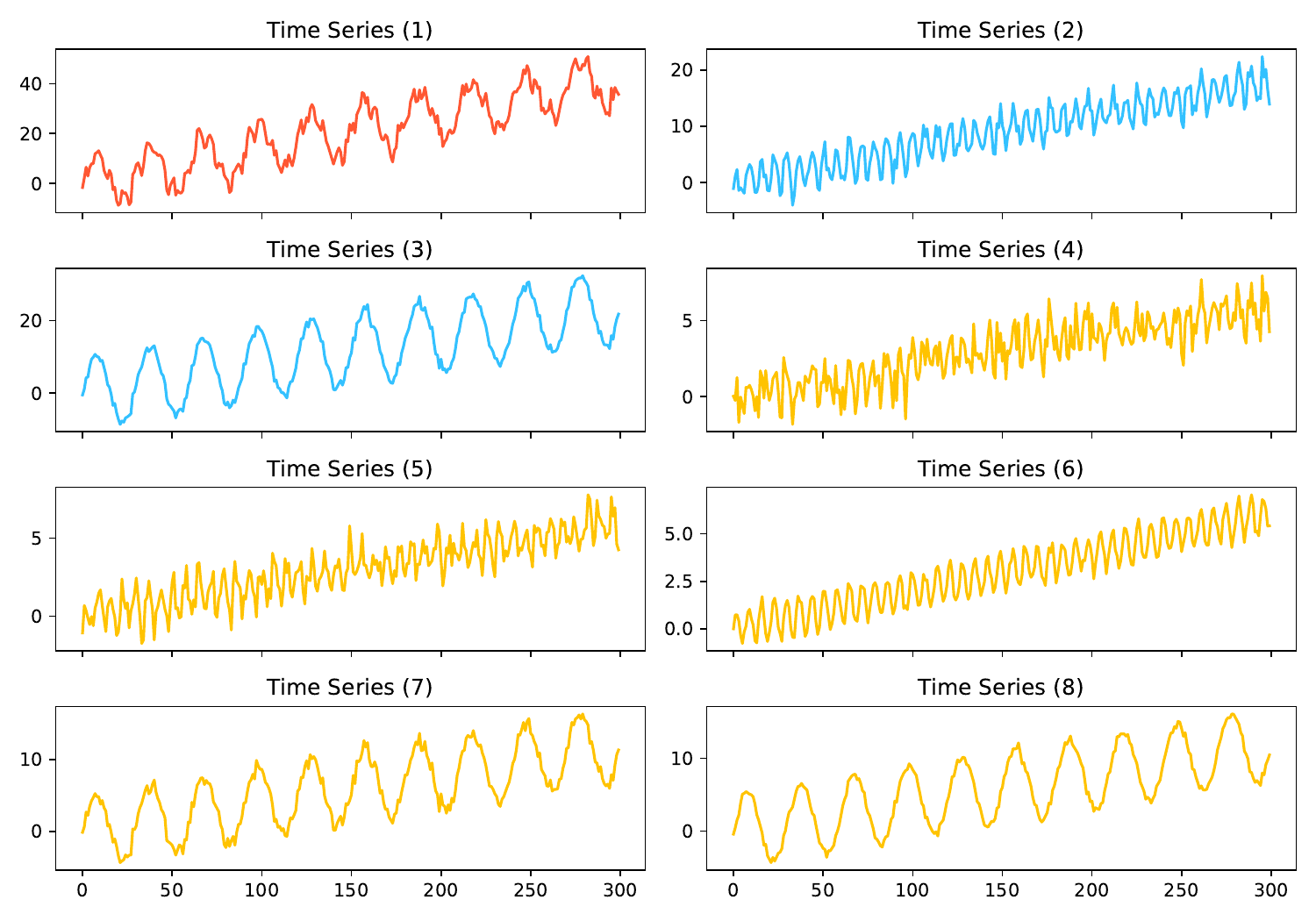}
    \caption{A simulation of the synthetic hierarchical time series with $\alpha = 1$ and a length of 300. Time series belonging to the same hierarchical level are represented using the same color.}
    \label{fig:toy_hts}
\end{figure}

\paragraph{Generation Process.}
The hierarchy contains $d=8$ series distributed over three levels: one top-level series, two middle-level series, and $d_b=5$ bottom-level series, with the structure shown in \Cref{fig:hts_example}. Each bottom-level series $i$ is generated using the additive decomposition model \citep{hyndman_forecasting_2018}
\[
    y_t^{(i)}=a^{(i)}t+b^{(i)}\sin\!\left(\frac{2\pi t}{p^{(i)}}\right)+\epsilon_t^{(i)}.
\]
Series sharing a parent have identical trend and seasonal parameters. Specifically, $(a^{(i)},b^{(i)},p^{(i)})=(0.02,1,7)$ for $i\in\set{4,5,6}$ and $(a^{(i)},b^{(i)},p^{(i)})=(0.04,5,30)$ for $i\in\set{7,8}$. Let $\xi_t=[\epsilon_t^{(4)},\dots,\epsilon_t^{(8)}]^{\trans}\in\reals^5$. We draw
\[
    \xi_t\sim\mathcal N(0,\Sigma_\xi),
    \qquad
    \Sigma_\xi=\mat Q(\alpha\mat\Lambda)\mat Q^{\trans},
    \qquad
    \mat\Lambda=\diag(1,0.5,0.1,0.05,0.01),
\]
where $\mat Q$ is orthogonal and $\alpha\geq0$ controls the noise scale. In particular,
\[
    \expes{\norm{\mat S\xi_t}{\mathcal Y}^2}
    =\alpha\trace{\mat S\mat Q\mat\Lambda\mat Q^{\trans}\mat S^{\trans}}.
\]
Writing $y_t^b=[y_t^{(4)},\dots,y_t^{(8)}]^{\trans}$, the aggregate series are generated as $y_t^{(i)}=\mat S_{i,:}y_t^b$ for $i\in\set{1,2,3}$.

\Cref{fig:toy_hts} shows one realization of the synthetic dataset.

\paragraph{Sensitivity to the covariance scale.}

\begin{figure}[htbp!]
    \centering
    \begin{subfigure}{0.45\linewidth}
        \centering
        \includegraphics[width=\linewidth]{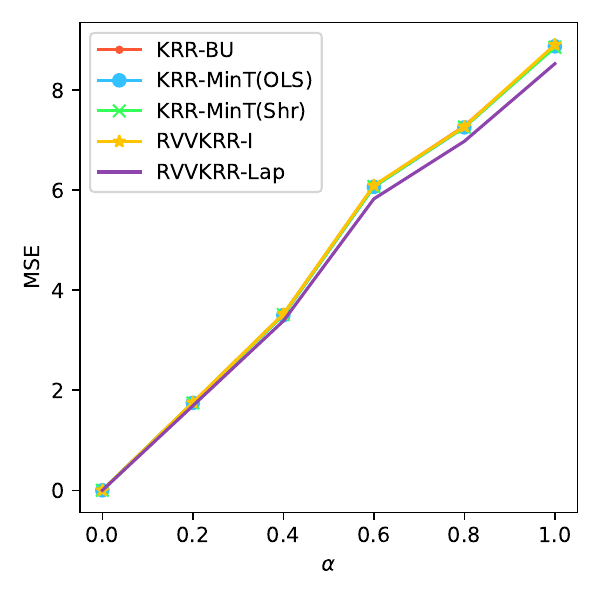}
        \caption{$T=80$.}
    \end{subfigure}
    \hfill
    \begin{subfigure}{0.45\linewidth}
        \centering
        \includegraphics[width=\linewidth]{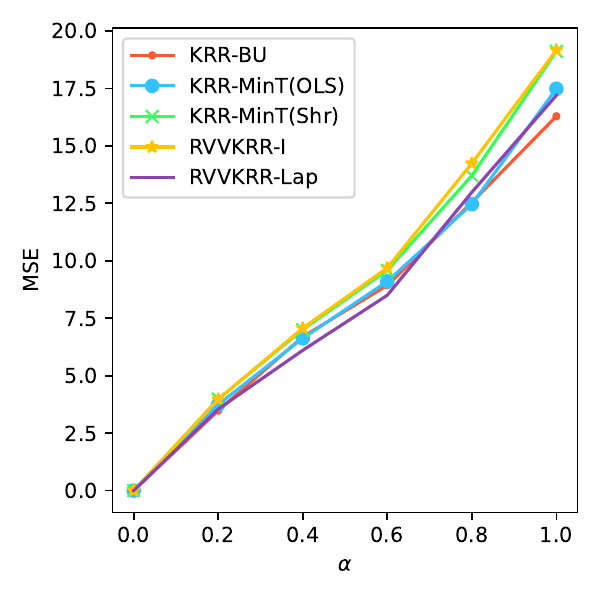}
        \caption{$T=40$.}
    \end{subfigure}
    \caption{All-level MSE on the synthetic dataset as a function of the covariance scaling parameter $\alpha$.}
    \label{fig:toy_alpha_res}
\end{figure}

We additionally study the effect of the covariance scaling parameter $\alpha$, varying it over $[0,1]$ for training lengths $T=80$ and $T=40$. We run 32 Monte Carlo simulations for each configuration, using the same validation and test lengths as in the main experiment.

\Cref{fig:toy_alpha_res} shows a quasi-linear relationship between $\alpha$ and the MSE. RVVKRR-Lap consistently outperforms the other methods for all values of $\alpha$ when $T=80$; when $T=40$, this holds only for $\alpha<0.8$.

\subsection{TourismLarge dataset}
\label{app:tourism}

\paragraph{Experimental Setting.}
Hyperparameters are selected using three-fold expanding-window cross-validation, with training/validation window lengths of $108/24$, $132/24$, and $156/24$. Independent KRR selects a separate ridge parameter $\lambda_r$ for each series from 64 logarithmically spaced values between $10^{-5}$ and $10^8$. For each RVVKRR projection, $\rho$ is selected from $\{10^{-3},3\times10^{-3},10^{-2},3\times10^{-2},10^{-1}\}$ by minimizing the mean Overall MSE across the three validation folds. This selects $\rho=0.03$ for MinT (OLS) and $\rho=0.1$ for both MinT (shrinkage) and MinT (WLS-var). The covariance proxies for the latter two are estimated from in-sample residuals of auxiliary independent KRR predictors within each training window. No feature standardization is applied.

\begin{table}[htbp!]
    \centering
    \caption{Full comparison of test MSE at each aggregation level of TourismLarge, reported in millions ($10^6$). All sums the level-wise MSEs. $\dagger$ denotes unreconciled forecasts. Lower is better; column-wise minima are bold.}
    \begin{tabular}{lcccccc}
        \toprule
        Method & Ctry. & State & Zone & Reg. & Purp. & All \\
        \midrule
        \multicolumn{7}{@{}l}{\textit{OLS base forecasts}} \\
        Unreconciled $\dagger$ & 3.33 & 1.98 & 1.55 & 1.48 & 1.32 & 9.66 \\
        Bottom-Up & 8.97 & 2.79 & 1.88 & 1.52 & 1.32 & 16.48 \\
        Top-Down (AHP) & 3.33 & 4.00 & 2.39 & 2.06 & 1.91 & 13.69 \\
        Top-Down (PHA) & 3.33 & 3.95 & 2.38 & 2.05 & 1.91 & 13.61 \\
        MinT (OLS) & 3.48 & 1.77 & 1.47 & 1.32 & 1.28 & 9.32 \\
        MinT (shrinkage) & 5.33 & 1.97 & 1.49 & 1.30 & 1.24 & 11.33 \\
        MinT (WLS-var) & 6.59 & 2.19 & 1.56 & 1.33 & 1.25 & 12.92 \\
        ERM (closed form) & 4.03 & 2.16 & 2.10 & 1.98 & 2.04 & 12.32 \\
        \midrule
        \multicolumn{7}{@{}l}{\textit{KRR base forecasts}} \\
        Unreconciled $\dagger$ & 2.71 & 1.69 & 1.30 & 1.19 & 1.13 & 8.02 \\
        Bottom-Up & 8.06 & 2.53 & 1.53 & 1.21 & 1.13 & 14.46 \\
        Top-Down (AHP) & 2.71 & 3.94 & 2.38 & 2.05 & 1.92 & 13.00 \\
        Top-Down (PHA) & 2.71 & 3.89 & 2.37 & 2.04 & 1.92 & 12.94 \\
        MinT (OLS) & 2.72 & 1.58 & 1.25 & 1.12 & 1.11 & 7.78 \\
        MinT (shrinkage) & 3.05 & 1.60 & \textbf{1.21} & \textbf{1.08} & \textbf{1.06} & 7.99 \\
        MinT (WLS-var) & 4.56 & 1.83 & 1.29 & 1.11 & 1.09 & 9.89 \\
        ERM (closed form) & 5.36 & 2.50 & 2.61 & 2.57 & 2.52 & 15.57 \\
        \midrule
        \multicolumn{7}{@{}l}{\textit{RVVKRR forecasts}} \\
        MinT (OLS), $\rho=0$ & 2.75 & 1.55 & 1.23 & 1.10 & 1.10 & 7.73 \\
        MinT (OLS), $\rho=0.03$ & 2.74 & 1.54 & 1.24 & 1.11 & 1.10 & 7.72 \\
        MinT (shrinkage), $\rho=0$ & \textbf{2.01} & \textbf{1.51} & 1.28 & 1.23 & 1.58 & \textbf{7.61} \\
        MinT (shrinkage), $\rho=0.1$ & 2.02 & 1.52 & 1.30 & 1.24 & 1.57 & 7.64 \\
        MinT (WLS-var), $\rho=0$ & 2.52 & 1.81 & 1.45 & 1.40 & 1.75 & 8.93 \\
        MinT (WLS-var), $\rho=0.1$ & 2.39 & 1.83 & 1.49 & 1.44 & 1.71 & 8.85 \\
        \bottomrule
    \end{tabular}
    \label{tab:tourism_full_results}
\end{table}

\subsection{Traffic Dataset}
\label{app:traffic}

\paragraph{Experimental Setting.}
The largest lag used as the feature is 28, leaving 338 observations from the original 366-day sample. We reserve the last 28 observations for testing and use the preceding 310 for model selection and fitting. Hyperparameters are selected using three-fold expanding-window cross-validation by minimizing the MSE. Each validation window contains 28 observations, giving training/validation window lengths of $226/28$, $254/28$, and $282/28$. After selection, every model is refitted on all 310 pre-test observations.

Independent KRR selects a separate ridge parameter $\lambda_r$ for each series $r$ from 64 logarithmically spaced values between $10^{-5}$ and $10^8$. The same selected parameters are used by the auxiliary independent KRR fits that estimate the residual-based MinT projections.
Independent OLS uses the same features and has no hyperparameter; the cross-validation results are used only for diagnostic purposes.

For each RVVKRR variant, the global regularization parameter $\lambda$ is selected by the cross-validation over
$\set{10^{-4},10^{-3},10^{-2},10^{-1},1,10,10^2,10^3}$. All three variants select $\lambda=1$. Covariance-based projections are estimated from in-sample residuals using only the current fold’s training window. They are re-estimated using all 310 pre-test observations for the final fit.

\begin{table}[htbp!]
    \centering
    \caption{Full comparison of test MSE at each aggregation level of Traffic dataset, reported in thousands ($10^3$). All sums the level-wise MSEs. $\dagger$ denotes unreconciled forecasts. Lower is better; column-wise minima are bold.}
    \begin{tabular}{lccccc}
        \toprule
        Method & L1 & L2 & L3 & Bottom & All \\
        \midrule
        \multicolumn{6}{@{}l}{\textit{OLS base forecasts}} \\
        Unreconciled $\dagger$ & 8.43 & 4.87 & 3.03 & 0.60 & 16.93 \\
        Bottom-Up & 12.33 & 6.46 & 3.49 & 0.60 & 22.88 \\
        Top-Down (AHP) & 8.43 & 4.56 & 3.04 & 1.33 & 17.36 \\
        Top-Down (PHA) & 8.43 & 4.57 & 3.06 & 1.32 & 17.39 \\
        MinT (OLS) & 8.41 & 4.90 & 2.85 & 0.58 & 16.74 \\
        MinT (shrinkage) & 8.62 & 4.62 & 2.55 & \textbf{0.57} & 16.36 \\
        MinT (WLS-var) & 10.73 & 5.70 & 3.12 & 0.60 & 20.15 \\
        ERM (closed form) & 33.78 & 17.33 & 8.96 & 1.38 & 61.45 \\
        \midrule
        \multicolumn{6}{@{}l}{\textit{KRR base forecasts}} \\
        Unreconciled $\dagger$ & 8.95 & 5.05 & 2.84 & 0.63 & 17.47 \\
        Bottom-Up & 13.30 & 6.91 & 3.74 & 0.63 & 24.58 \\
        Top-Down (AHP) & 8.95 & 4.81 & 3.17 & 1.33 & 18.27 \\
        Top-Down (PHA) & 8.95 & 4.83 & 3.19 & 1.33 & 18.29 \\
        MinT (OLS) & 9.01 & 4.95 & 2.85 & 0.61 & 17.42 \\
        MinT (shrinkage) & 9.16 & 4.83 & 2.69 & 0.60 & 17.28 \\
        MinT (WLS-var) & 11.68 & 6.11 & 3.34 & 0.62 & 21.76 \\
        ERM (closed form) & 72.00 & 36.45 & 18.62 & 1.34 & 128.41 \\
        \midrule
        \multicolumn{6}{@{}l}{\textit{RVVKRR forecasts}} \\
        MinT (OLS) & 8.99 & 4.93 & 2.79 & 0.61 & 17.32 \\
        MinT (shrinkage) & 3.84 & 2.14 & 2.20 & 3.39 & 11.57 \\
        MinT (WLS-var) & \textbf{0.32} & \textbf{0.24} & \textbf{0.44} & 3.02 & \textbf{4.03} \\
        \bottomrule
    \end{tabular}
    \label{tab:traffic_full_results}
\end{table}